\documentclass{article}
\usepackage[preprint]{tmlr}   

\usepackage{amsmath,amssymb,amsthm}
\usepackage{graphicx}
\usepackage{booktabs}
\makeatletter\let\AND\@undefined\makeatother
\usepackage{algorithm}
\usepackage{algorithmic}
\usepackage{bbm}
\usepackage{enumitem}
\usepackage{fontspec}
\usepackage[english,provide=*]{babel}
\usepackage{hyperref}

\babelprovide[import,onchar=ids fonts]{english}
\babelprovide[onchar=ids fonts]{tifinagh}
\babelfont[tifinagh]{rm}[Path=fonts/, Extension=.ttf, UprightFont=*]{ebrima}
\newfontfamily\tifinaghfont[Path=fonts/, Extension=.ttf, UprightFont=*]{ebrima}

\newtheorem{theorem}{Theorem}[section]
\newtheorem{proposition}[theorem]{Proposition}
\newtheorem{corollary}[theorem]{Corollary}
\newtheorem{lemma}[theorem]{Lemma}
\newtheorem{definition}[theorem]{Definition}
\newtheorem{remark}[theorem]{Remark}
\newtheorem{example}[theorem]{Example}

\DeclareMathOperator{\tr}{tr}
\DeclareMathOperator{\diag}{diag}
\DeclareMathOperator{\Var}{Var}
\DeclareMathOperator{\Exp}{Exp}

\DeclareMathOperator{\vecc}{vec}

\DeclareRobustCommand{\yat}{\text{\normalfont\tifinaghfont ⵟ}}

\title{Bernstein--Schur Kernels: Random Features by Sketched Modulation and Radial Randomization}
\author{\name Taha Bouhsine \email taha@azetta.ai \\
        \addr Azetta AI}

\begin{document}
\maketitle

\begin{abstract}
\emph{Bernstein--Schur kernels} are products of a finite-feature kernel and a completely monotone shift-invariant kernel: nonstationary kernels falling between the shift-invariant and dot-product templates random features exploit, so neither Bochner sampling nor polynomial sketching applies to the full kernel directly. We give one random-feature construction for the whole class that \emph{randomizes both factors}: it sketches the finite modulation and samples the radial factor's one-dimensional Bernstein--Widder scale before applying Gaussian random Fourier features, giving feature dimension $Dm$, free of the $O(d^2)$ size of the exact modulation feature. With the modulation kept exact (the $m\to\infty$ limit), we prove unbiasedness, an exact variance, and a matrix-Bernstein operator-norm bound controlled by the top kernel and modulation eigenvalues and an intrinsic dimension rather than the crude $N\max_{ij}$ route. Whitening this argument at the ridge makes the effective dimension $d_{\mathrm{eff}}(\lambda)$ the \emph{exact} intrinsic dimension of the matrix variance, so $O((1+\|P\|_{\mathrm{op}}/\lambda)\log(d_{\mathrm{eff}}/\delta))$ radial draws preserve the kernel-ridge solution; tilting the draw by a closed-form whitened leverage improves this to the effective-dimension count $O((1+d_{\mathrm{eff}})\log(d_{\mathrm{eff}}/\delta))$. Conditioning on the sketch carries every guarantee to the deployed doubly-randomized estimator up to one additive sketch term, and all hold for the whole class with the modulation Gram in place of the polynomial one. The flagship instance is the biased $\yat$-kernel $k_{\yat,b}(w,x)=(w^\top x+b)^2/(\|w-x\|^2+\varepsilon)$, whose family span contains the inverse-multiquadric kernel by finite differences in $b$. Experiments validate the construction off-sphere, where the kernel is genuinely non-dot-product and Nystr\"om degrades with $d$ at matched landmark count, and show on a controlled test that the $\yat$-kernel is preferred on the coupled alignment--proximity target and not on single-factor controls.
\end{abstract}

\section{Introduction}
\label{sec:intro}

Kernel methods (support vector machines \citep{cortes1995support}, kernel ridge regression, Gaussian processes \citep{rasmussen2006gaussian}, and the attention layers of modern transformers, which compute a kernel smoother over tokens \citep{tsai2019transformer}) rest on the $N\times N$ Gram matrix $K_{ij}=k(x_i,x_j)$, whose $O(N^2)$ storage and $O(N^3)$ factorization become infeasible once $N$ exceeds $10^5$. The kernel may be fixed or learned implicitly \citep{li2019implicit}, but in every case scaling these methods means never forming $K$ explicitly, and the dominant tool is the random feature map: a low-dimensional $z$ with $\mathbb{E}[z(x)^\top z(w)]=k(x,w)$, so that $K\approx ZZ^\top$ is handled in $O(NM)$ time and memory. For attention in particular, this is the random-feature linear-attention route \citep{katharopoulos2020transformers,choromanski2021rethinking}.

Which kernels admit such a map is dictated by their structure. \emph{Random Fourier features} (RFF) \citep{rahimi2007random} cover shift-invariant kernels $k(x,w)=\phi(x-w)$ through Bochner's theorem, sampling the nonnegative spectral measure whose Fourier transform is $\phi$; quasi-Monte Carlo \citep{avron2016quasi} and orthogonal frequencies \citep{yu2016orthogonal} reduce their variance, and the construction underpins large-scale kernel ridge regression \citep{avron2017random}. \emph{Polynomial sketches} and dot-product random features (TensorSketch, Fastfood, and the dot-product feature maps of \citealp{pham2013fast,le2013fastfood,kar2012random}, with oblivious high-degree sketches by \citealp{ahle2020oblivious} and complex-valued, variance-optimized improvements by \citealp{wacker2024improved}) do the same for dot-product kernels $k(x,w)=\kappa(x^\top w)$. \emph{Nystr\"om methods} \citep{williams2000using,musco2017recursive} take a different route, building a low-rank approximation from $m\ll N$ landmark columns at the mercy of the kernel's spectral decay. Each family is bound to a structure: RFF to shift-invariance, sketches to dot-product form, Nystr\"om to fast eigenvalue decay. A unifying thread is that any kernel written as a \emph{mixture} can be linearized by sampling the mixture, which extends RFF beyond Gaussian spectra to broad stationary families \citep{wilson2013gpkernels}; our radial factor is handled in exactly this way. What is specific here is that the mixture is applied to only the radial part of a kernel that is \emph{not} stationary overall, and is composed with an exact finite feature for the polynomial part, so the scheme sits at the intersection of mixture-based RFF and exact feature maps rather than inside either.

The kernel we wish to scale fits none of these molds. The biased $\yat$-kernel
\begin{equation}\label{eq:yat_biased}
k_{\yat,b}(w,x) = \frac{(w^\top x + b)^2}{\|w-x\|^2 + \varepsilon}, \qquad b\ge0,\ \varepsilon>0,
\end{equation}
couples alignment and proximity: a squared inner product in the numerator, an inverse squared-distance in the denominator. The finite-difference identity in Proposition~\ref{prop:imq_findiff} shows that the family span $\operatorname{span}\{k_{\yat,b}:b\ge0\}$ contains the inverse-multiquadric (IMQ) kernel. We do not rely on any single-kernel universality statement in this paper; fixed-$b>0$ universality is discussed separately by \citet{bouhsine2026action}. In the construction and experiments we treat a fixed $b>0$ as a practical inductive-bias parameter, and the unbiased $k_{\yat}$ ($b=0$), whose RKHS functions all vanish at the origin, is the special case. But $k_{\yat,b}$ is \emph{not} shift-invariant and \emph{not} a dot-product kernel: the radial denominator $\|x-w\|^2$ alone is shift-invariant, yet multiplying it by the alignment numerator $(x^\top w+b)^2$ destroys both forms at once, so neither Bochner sampling nor polynomial sketching applies to the full kernel, and exact methods revert to the $O(N^2)$ Gram matrix. (On the unit sphere $\|x-w\|^2=2-2x^\top w$, so $k_{\yat,b}$ coincides with a rational dot-product kernel and a dimension-efficient dot-product route becomes available; deriving that route is left to future work, and we target the general $\mathbb{R}^d$ construction here, where no dot-product reduction of the \emph{full} kernel exists. One weaker factorization does survive off-sphere: at each fixed scale the Gaussian factors through positive per-point envelopes, $e^{-t\|x-w\|^2}=e^{-2t\|x\|^2}e^{-2t\|w\|^2}\,\mathbb{E}[\cdots]$, the FAVOR$^+$ route of Proposition~\ref{prop:positive}; but composed with the natural mixing law it has \emph{infinite variance} on every pair with $x^\top w\ge\varepsilon/8$ (Proposition~\ref{prop:pos_dichotomy}), so the trigonometric construction below is not one choice among equals.) The key empirical test is therefore off-sphere (Figure~\ref{fig:offsphere}): on a bounded ball with varying norms (Section~\ref{sec:exp_offsphere}) the kernel is genuinely non-dot-product, and RAY (Random Approximation of the $\yat$-kernel) retains its $O(1/\sqrt D)$ behavior while landmark methods degrade with $d$.

The obstacle dissolves once $k_{\yat,b}$ is read as a Schur product (Eq.~\ref{eq:schur}) of two individually tractable pieces: the biased degree-2 polynomial kernel $(w^\top x+b)^2$, which has an exact finite feature map, and the IMQ-type radial kernel $(\|w-x\|^2+\varepsilon)^{-1}$, which, being completely monotone in $\|x-w\|^2$, is a Bernstein--Widder mixture of Gaussians over a single scale parameter (the classical Schoenberg radial kernels \citep{schoenberg1938metric}, for which broad scale-mixture samplers exist beyond this IMQ case \citep{langrene2024mixture}). We turn this into a feature scheme that randomizes \emph{both} factors: sample the radial factor's one-dimensional scale, apply random Fourier features to the resulting Gaussian, and sketch the polynomial factor. Keeping the polynomial exact is the analyzable limit ($m\to\infty$), where the variance and concentration are sharpest; but that feature carries the $O(d^2)$ polynomial size, whereas sketching makes the dimension $Dm$, free of $d^2$, at the cost of one controllable sketch term (Theorem~\ref{thm:ts_opnorm}). Beyond the radial random Fourier features, the only added randomness is the one-dimensional Bernstein scale, so for a fixed dataset the radial draw count carries no explicit $d$-dependence (as for standard RFF), the genuine contrast with uniform Nystr\"om, which we confirm degrades with $d$.

The point is not that products of kernels can be tensored, a standard closure property, but that factoring this particular nonstationary, non-dot-product kernel turns each piece into a textbook object (an exact finite polynomial feature and a Gaussian mixture), so a kernel fitting neither random-feature template is linearized by combining the standard tools for each. This places RAY among random-feature methods that reach beyond Bochner sampling: for dot-product \citep{kar2012random,pham2013fast}, compositional \citep{daniely2017random}, and non-stationary spectral or harmonizable \citep{remes2017nonstationary,ton2018spatial,shen2019harmonizable} kernels, from which it differs by factoring the kernel into a finite modulation and a completely monotone radial factor and randomizing each by its native tool, a polynomial sketch and a Bernstein--Widder radial sampler (Section~\ref{sec:discussion}).

The contribution is the analysis of the resulting estimator and the class it defines. For exact modulation we prove unbiasedness, an exact variance for the flat estimator, an expected matrix-Bernstein operator-norm bound (with a matching tail) controlled by the top eigenvalues of the kernel and modulation Gram matrices together with an intrinsic dimension, and relative-spectral kernel-ridge stability; conditioning on the sketch carries the operator-norm guarantee to the deployed doubly-randomized estimator. Unbiasedness, the variance, and the uniform entrywise bound lift verbatim to the \emph{Bernstein--Schur} class (a finite-feature kernel times a completely monotone radial kernel, of which $k_{\yat,b}$ is the flagship) by a single theorem (Theorem~\ref{thm:bernstein_schur}); the matrix-Bernstein and kernel-ridge statements, proved for $k_{\yat,b}$, extend to every member with the modulation Gram in place of the polynomial one (Theorem~\ref{thm:class_bernstein}). The construction is self-contained: positive-definiteness from the Schur factorization (Eq.~\ref{eq:schur}), the IMQ span from the finite-difference identity (Proposition~\ref{prop:imq_findiff}), and the radial mixture from Bernstein--Widder (Section~\ref{sec:step-bernstein}); we rely on \citet{bouhsine2026action} only for the historical infinite-feature interpretation and fixed-$b$ universality not needed here.

\paragraph{Contributions.}
\begin{enumerate}[label=(\roman*),leftmargin=2em]
\item \textbf{A random-feature construction for a kernel class, not one kernel.} We identify \emph{Bernstein--Schur kernels}, a finite-feature kernel times a completely monotone shift-invariant kernel, and give one estimator for the whole class: keep the finite feature exact, sample the radial Bernstein--Widder scale, and apply random Fourier features (Theorem~\ref{thm:bernstein_schur}, unbiased with variance and uniform bounds). The biased $\yat$-kernel is the flagship instance (Section~\ref{sec:ryf}, Proposition~\ref{prop:nonstationary} shows it is genuinely neither stationary nor dot-product); the deployed estimator randomizes both factors, with exact modulation as the analyzable limit (Section~\ref{sec:step-sketch}).
\item \textbf{Sharp variance and optimal allocation} (Section~\ref{sec:guarantees}): the \emph{exact} variance of the recommended flat estimator (Theorem~\ref{thm:exact_variance}), a two-level identity proving one frequency per scale is variance-optimal within the hierarchical estimator at a fixed radial-feature budget (Proposition~\ref{prop:budget}), and a normalized variant whose variance is bounded free of the data radius and bias (Proposition~\ref{prop:normalized}).
\item \textbf{Gram concentration and high-probability KRR guarantees}: a uniform Gram bound (Theorem~\ref{thm:uniform}) and an expected matrix-Bernstein operator-norm bound (Theorem~\ref{thm:bernstein}) with a matching high-probability tail (Corollary~\ref{cor:bernstein_tail}), controlled by the top eigenvalues of the kernel and modulation Gram matrices and an intrinsic dimension rather than the crude $N\max_{ij}$ entrywise route (while still allowing worst-case $N$-scaling through those spectra). Whitening the same argument at the ridge $A=K+\lambda I$ makes the ridge effective dimension $d_{\mathrm{eff}}(\lambda)=\operatorname{tr}(K(K+\lambda I)^{-1})$ the \emph{exact} intrinsic dimension of the matrix variance (Theorem~\ref{thm:krr_whitened}): $D=O\bigl((1+\|P\|_{\mathrm{op}}/\lambda)\log(d_{\mathrm{eff}}/\delta)\bigr)$ radial draws give the relative-spectral condition of Theorem~\ref{thm:krr_spectral} with high probability, hence coefficient stability and objective-value preservation to constants (Corollary~\ref{cor:krr_highprob}), carried to the deployed sketched estimator (Corollary~\ref{cor:krr_deployed}). Tilting the radial draw by a closed-form \emph{whitened leverage} (a quadratic form whose mean over the base law is exactly $d_{\mathrm{eff}}(\lambda)$) replaces the $\|P\|_{\mathrm{op}}/\lambda$ factor by $d_{\mathrm{eff}}(\lambda)$ itself, the leverage-sampled count of the random-feature KRR literature, with the same proof (Theorem~\ref{thm:krr_leverage}). All matrix-level guarantees hold for the whole Bernstein--Schur class with the modulation Gram in place of $P$ (Theorem~\ref{thm:class_bernstein}); a full excess-risk theorem additionally needs control of the approximate kernel's effective dimension, which we leave open (Remark~\ref{rmk:risk}).
\item \textbf{RAY, the doubly-randomized estimator, and the signal-modulation decomposition.} The exact polynomial feature costs $O(d^2)$ per draw, so RAY sketches it: the feature dimension becomes $Dm$, free of $d^2$. Conditioning on the sketch, this doubly-randomized estimator carries the \emph{same} intrinsic-dimension operator-norm guarantee as the exact-modulation limit plus a single additive sketch term, tunable by $m$ independently of $D$ (Theorem~\ref{thm:ts_opnorm}); exact modulation is the $m\to\infty$ limit. A modulation--radial error decomposition (Proposition~\ref{prop:gate}) explains why the product still helps: radial approximation noise is modulated by the true finite feature, modulation error is localized by radial proximity, and only an interaction term mixes them, so the alignment$\times$proximity product suppresses similarity false positives even when the modulation is sketched (Appendix~\ref{sec:exp_gate}).
\item \textbf{Empirical validation} (Section~\ref{sec:experiments}): an off-sphere bounded-ball test confirming the general-$\mathbb{R}^d$ niche where no direct dot-product reduction is available and Nystr\"om degrades with $d$ (Section~\ref{sec:exp_offsphere}), a TensorSketch deployment study showing the radial-plus-sketch error split and removal of the $O(d^2)$ modulation floor (Section~\ref{sec:exp_ts}), a controlled coupled-target preference test where the $\yat$-kernel is the best bias exactly on the alignment--proximity target and not on single-factor controls (Section~\ref{sec:exp_necessity}), and a large-scale streaming primal run where the Gram is impossible (Section~\ref{sec:exp_higgs}). Cost-matched caveats, variance/bias checks, downstream KRR, attention, and sphere-normalized sanity checks are deferred to Appendix~\ref{app:further}.
\end{enumerate}

Section~\ref{sec:prelim} fixes notation and the random-feature background; Section~\ref{sec:ryf} constructs the estimator step by step; Section~\ref{sec:guarantees} establishes its guarantees and variance reduction; Section~\ref{sec:experiments} reports the experiments; Sections~\ref{sec:discussion}--\ref{sec:conclusion} discuss and conclude. Proofs and further validations are in the appendix.

\begin{figure}[t]
\centering
\includegraphics[width=\textwidth]{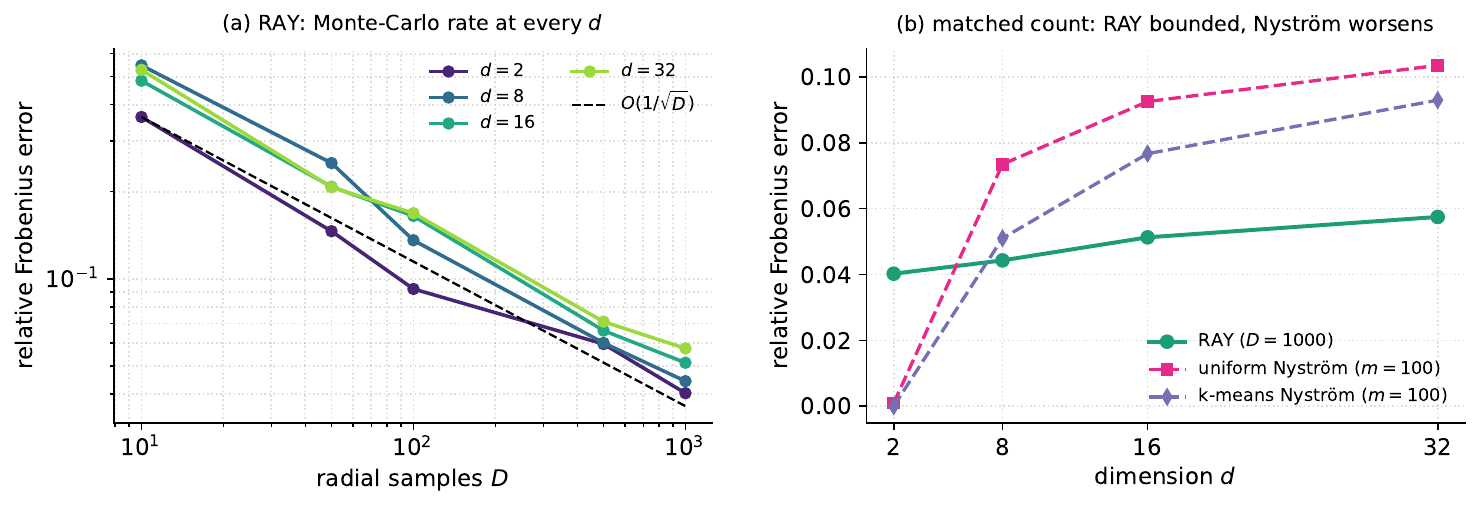}
\caption{The key regime: an \emph{off-sphere} bounded ball (varying norms), where $k_{\yat,b}$ is genuinely non-dot-product and no direct dot-product reduction is available. \textbf{(a)} RAY's relative Frobenius Gram error follows the $O(1/\sqrt D)$ Monte-Carlo rate at every dimension. \textbf{(b)} At $D=1000$ RAY stays bounded as $d$ grows, while uniform and k-means Nystr\"om (fixed $m=100$ landmarks) degrade (matched in radial/landmark count; the cost-matched comparison is Appendix~\ref{sec:exp_faircost}).}
\label{fig:offsphere}
\end{figure}

\section{Preliminaries}
\label{sec:prelim}

\subsection{Notation}
\label{sec:notation}
Vectors are columns in $\mathbb{R}^d$; $x^\top w$ is the inner product, $\|x\|$ the Euclidean norm, and $X=\{x_1,\dots,x_N\}\subset\mathbb{R}^d$ a compact dataset with $\|x_i\|\le R$. For a kernel $k$, $K=[k(x_i,x_j)]\in\mathbb{R}^{N\times N}$ is its Gram matrix; $\circ$ is the entrywise (Hadamard/Schur) product, so $(A\circ B)_{ij}=A_{ij}B_{ij}$, and $A\preceq B$ the Loewner order. We write $x\otimes x$ for the Kronecker square and $\vecc(\cdot)$ for its vectorization, so $\vecc(x\otimes x)^\top\vecc(w\otimes w)=(x^\top w)^2$. The target is the biased $\yat$-kernel $k_{\yat,b}$ of~\eqref{eq:yat_biased}, with bias $b\ge0$ and bandwidth $\varepsilon>0$; it factors (Section~\ref{sec:step-schur}) into the degree-2 modulation $p_b(x,w)=(x^\top w+b)^2$ with feature $p_b(x)$ of dimension $d_b=d(d+1)/2+d+1$, and the radial factor $h_\varepsilon(x,w)=(\|x-w\|^2+\varepsilon)^{-1}$. A random feature map $z:\mathbb{R}^d\to\mathbb{R}^M$ satisfies $\mathbb{E}[z(x)^\top z(w)]=k(x,w)$; the deployed estimator uses $D$ radial draws $t_j\sim\Exp(\varepsilon)$ with frequencies $\omega_j\sim\mathcal{N}(0,2t_jI_d)$ and a degree-2 sketch $\mathrm{TS}_m$ of width $m$, giving feature dimension $M=Dm$. We write $\|\cdot\|_{\mathrm{op}}$ for the operator norm and $\Exp(\varepsilon)$ for the exponential law of rate $\varepsilon$.

\subsection{Random features and positive-definite kernels}
\label{sec:background}
A continuous symmetric $k$ is positive definite if every Gram $K$ is PSD; by Mercer's theorem it then has a nonnegative spectral decomposition on compact $X$. Two closure properties drive the construction. \emph{Schur product:} if $k_1,k_2$ are PD then so is $k_1k_2$, with $K_1\circ K_2\succeq0$ \citep{schur1911bemerkungen}. \emph{Mixtures:} a nonnegative mixture $\int k_t\,d\mu(t)$ of PD kernels is PD, and sampling $t\sim\mu$ yields an unbiased random feature — the principle behind random Fourier features for shift-invariant kernels \citep{rahimi2007random} and their scale-mixture extensions \citep{wilson2013gpkernels}. A function is \emph{completely monotone} in $\|x-w\|^2$ if it is a Laplace transform of a nonnegative measure; by Bernstein--Widder it is then a Gaussian scale mixture (Section~\ref{sec:step-bernstein}). The quality of a random-feature Gram $\widehat K=ZZ^\top$ is measured here by the relative Frobenius error $\|\widehat K-K\|_F/\|K\|_F$ and, for downstream stability, by the operator-norm error $\|\widehat K-K\|_{\mathrm{op}}$, whose effective scale is set not by $N$ but by an intrinsic dimension of the kernel spectrum (Section~\ref{sec:guarantees}).

\section{Constructing the Random \texorpdfstring{$\yat$}{Yat}-Feature}
\label{sec:ryf}

We seek a feature map $z:\mathbb{R}^d\to\mathbb{R}^{M_b}$ with $\mathbb{E}[z(x)^\top z(w)]=k_{\yat,b}(x,w)$, so that the Gram matrix of $X=\{x_1,\dots,x_N\}$ is approximated by $ZZ^\top$ in $O(NM_b)$ rather than $O(N^2)$. We build $z$ in five steps: (i) factor $k_{\yat,b}$ into a polynomial and a radial kernel (Section~\ref{sec:step-schur}); (ii) write the radial kernel as a nonnegative mixture of Gaussians over a one-dimensional scale (Section~\ref{sec:step-bernstein}); (iii) discretize the mixture by sampling that scale (Section~\ref{sec:step-sample}); (iv) approximate the Gaussian factor by random Fourier features and tensor with the modulation feature (Section~\ref{sec:step-rff}); and (v) sketch the modulation, so that \emph{both} factors are randomized and the feature dimension is free of the $O(d^2)$ polynomial size (Section~\ref{sec:step-sketch}). Steps~(i)--(iv) with an exact modulation give the analyzable limit; step~(v) is the deployed estimator.

\subsection{Step 1: Schur factorization}
\label{sec:step-schur}

The kernel splits as a Schur product of a polynomial and a radial kernel,
\begin{equation}\label{eq:schur}
k_{\yat,b}(w,x) = \underbrace{(w^\top x + b)^2}_{p_b(w,x)} \cdot \underbrace{\frac{1}{\|w-x\|^2 + \varepsilon}}_{h_\varepsilon(w,x)},
\end{equation}
both factors positive definite ($p_b$ from its explicit feature map $p_b(x)^\top p_b(w)$ (Proposition~\ref{prop:biased_feature}), and $h_\varepsilon$ as a nonnegative Bernstein--Widder mixture of Gaussian kernels (Section~\ref{sec:step-bernstein})), so $k_{\yat,b}$ is PSD by the Schur product theorem \citep{schur1911bemerkungen}. Being continuous and positive definite, $k_{\yat,b}$ is a Mercer kernel in the usual integral-operator sense on every compact $X\subset\mathbb{R}^d$, the setting throughout (Section~\ref{sec:guarantees} fixes $\|x\|\le R$); we make no claim on noncompact domains. (This is self-contained; \citet{bouhsine2026action} is cited only for historical context and fixed-$b$ universality results not used here.) The polynomial factor is the tractable one: it has an exact, finite feature map. That the full kernel nonetheless sits outside both random-feature templates is not a matter of degree but a structural fact.

The radial factor is itself recovered from the family by an elementary \emph{forward} second difference in $b$: $(2h^2)^{-1}\bigl[k_{\yat,b+2h}-2k_{\yat,b+h}+k_{\yat,b}\bigr]=h_\varepsilon$ exactly, for any $b\ge0$, $h>0$, with all three biases inside the domain $b\ge0$ (Proposition~\ref{prop:imq_findiff}, Appendix~\ref{app:proofs}). This is the only ``spans IMQ'' statement we use, and it keeps the claim self-contained.

\begin{proposition}[Off-sphere the kernel is neither stationary nor dot-product]\label{prop:nonstationary}
For $b\ge0$, $\varepsilon>0$, the kernel $k_{\yat,b}$ is not shift-invariant on any domain containing two points of different norm, and not a dot-product kernel on any domain containing two pairs of equal inner product $s=x^\top w\ne-b$ but different distance (at $s=-b$ the numerator vanishes for both pairs, so this value is excluded). (On the unit sphere, where distance and inner product are in bijection, it does reduce to a dot-product kernel.)
\end{proposition}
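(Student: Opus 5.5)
The plan is a counterexample argument for each of the two claims, plus a direct check of the sphere remark.

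\emph{Non-stationarity.} Suppose $k_{\yat,b}(x,w)=\phi(x-w)$ on a domain containing two points $u,v$ with $\|u\|\ne\|v\|$. Evaluate on the diagonal. Stationarity forces $k_{\yat,b}(x,x)=\phi(0)$ to be constant, while directly $k_{\yat,b}(x,x)=(\|x\|^2+b)^2/\varepsilon$. Since $t\mapsto(t+b)^2$ is strictly increasing on $t\ge0$ (here $b\ge0$), the values $(\|u\|^2+b)^2/\varepsilon$ and $(\|v\|^2+b)^2/\varepsilon$ differ, which is a contradiction. This step uses only the diagonal pairs $(u,u)$ and $(v,v)$, both of which lie in any domain containing $u$ and $v$. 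So the first claim follows in one line.

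\emph{Not a dot-product kernel.} Suppose $k_{\yat,b}(x,w)=\kappa(x^\top w)$. Take the two pairs $(x_1,w_1)$ and $(x_2,w_2)$ from the hypothesis, with $x_1^\top w_1=x_2^\top w_2=s\ne-b$ but $\|x_1-w_1\|^2\ne\|x_2-w_2\|^2$. Then
\[
k_{\yat,b}(x_i,w_i)=\frac{(s+b)^2}{\|x_i-w_i\|^2+\varepsilon}.
\]
The common numerator $(s+b)^2$ is strictly positive because $s\ne-b$, and $r\mapsto 1/(r+\varepsilon)$ is injective on $r\ge0$. Hence the two values differ, contradicting $\kappa(s)$ being single-valued. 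The exclusion of $s=-b$ is exactly what keeps the numerator nonzero, matching the parenthetical remark in the statement.

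\emph{Sphere remark.} On the unit sphere, $\|x-w\|^2=2-2x^\top w$, so
\[
k_{\yat,b}=\frac{(s+b)^2}{2-2s+\varepsilon}=:\kappa(s).
\]

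\emph{Expected obstacle.} There is no real obstacle. The only care needed is reading the hypotheses precisely: the domain must contain the diagonal pairs in the first claim and the two given pairs in the second. One could also remark that generic domains such as a ball of positive radius contain such configurations, for example $x_1=w_1$ versus a pair with the same inner product but distinct points.
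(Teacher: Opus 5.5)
Your proof is correct and follows the paper's own argument: the diagonal value $(\|x\|^2+b)^2/\varepsilon$ rules out stationarity, and two pairs with a common inner product but different distances rule out the dot-product form. The paper instantiates the second step with the explicit witnesses $x=w=e_1$ versus $x'=\sqrt2\,e_1$, $w'=e_1/\sqrt2$, whereas you argue directly from the stated hypothesis (using $s\ne-b$ to keep the numerator nonzero), which matches the statement's ``any domain'' quantifier a little more closely.
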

\noindent The witnesses are immediate: $k_{\yat,b}(x,x)=(\|x\|^2+b)^2/\varepsilon$ varies with $\|x\|$, ruling out shift-invariance; and $x{=}w{=}e_1$ versus $x'{=}\sqrt2\,e_1,\,w'{=}e_1/\sqrt2$ have $x^\top w=x'^\top w'=1$ but $\|x-w\|^2=0\ne\tfrac12=\|x'-w'\|^2$, so $k_{\yat,b}$ is not a function of $x^\top w$ alone. This is why neither Bochner sampling nor dot-product sketching applies to the full kernel, and why the off-sphere experiment (Section~\ref{sec:exp_offsphere}) is the key test.

\begin{proposition}[Biased polynomial feature]\label{prop:biased_feature}
For $b\ge0$ define
\begin{equation}\label{eq:biased_poly}
p_b(x) = \bigl(\vecc(x\otimes x),\; \sqrt{2b}\,x,\; b\bigr)^\top \in \mathbb{R}^{d^2+d+1}.
\end{equation}
Then $p_b(x)^\top p_b(w) = (x^\top w)^2 + 2b(x^\top w) + b^2 = (x^\top w+b)^2$, and $\|p_b(x)\|^2 = (\|x\|^2+b)^2$.
\end{proposition}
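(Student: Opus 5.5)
The plan is a direct computation of the inner product, block by block, followed by specializing to $w=x$ for the norm. I write $p_b(x)$ as the concatenation of three blocks: the Kronecker block $\vecc(x\otimes x)\in\mathbb{R}^{d^2}$, the linear block $\sqrt{2b}\,x\in\mathbb{R}^d$, and the constant block $b\in\mathbb{R}$. Because the blocks of $p_b(x)$ and $p_b(w)$ are aligned, the inner product $p_b(x)^\top p_b(w)$ splits into the sum of the three blockwise inner products. I then evaluate each one separately.

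For the Kronecker block I use the identity stated in the notation section, $\vecc(x\otimes x)^\top\vecc(w\otimes w)=(x^\top w)^2$. If I want to check it, I can expand in coordinates: $\sum_{i,j}x_ix_jw_iw_j=\bigl(\sum_i x_iw_i\bigr)^2$. The linear block contributes $(\sqrt{2b})^2\,x^\top w=2b\,x^\top w$, and this is where the hypothesis $b\ge0$ enters, since it makes $\sqrt{2b}$ real. The constant block contributes $b\cdot b=b^2$.

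Summing the three contributions gives $(x^\top w)^2+2b\,x^\top w+b^2$, which is the binomial expansion of $(x^\top w+b)^2$. For the norm I set $w=x$: then $x^\top w=\|x\|^2$, and the identity becomes $\|p_b(x)\|^2=(\|x\|^2+b)^2$.

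There is no real obstacle here. The only point that needs care is the sign requirement $b\ge0$ for the square root. I would also add a remark that the symmetric-compression count $d_b=d(d+1)/2+d+1$ from the notation section comes from replacing $\vecc(x\otimes x)$ with its upper-triangular entries, with the off-diagonal entries scaled by $\sqrt2$. This compression leaves every inner product unchanged, so the identity holds for the compressed feature as well.
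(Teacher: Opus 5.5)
Your proposal is correct and matches the paper's proof: split the inner product over the three aligned blocks, use $\vecc(x\otimes x)^\top\vecc(w\otimes w)=(x^\top w)^2$, collect $2b\,x^\top w$ and $b^2$, and set $w=x$ to get the norm. Your coordinate check of the Kronecker identity and your remark that the $\sqrt2$-scaled upper-triangular compression leaves inner products unchanged are both correct.
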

\begin{proof}
$p_b(x)^\top p_b(w) = \vecc(x\otimes x)^\top\vecc(w\otimes w) + 2b\,x^\top w + b\cdot b = (x^\top w)^2 + 2b(x^\top w) + b^2$, where the constant coordinate $b$ contributes $b\cdot b=b^2$. Setting $w=x$ gives $\|p_b(x)\|^2 = \|x\|^4 + 2b\|x\|^2 + b^2 = (\|x\|^2+b)^2$.
\end{proof}

This finite feature is the object the deployed estimator will \emph{sketch}. We develop the construction in two passes. First, holding the modulation exact, the estimator's randomness is entirely radial, so the unbiasedness and variance identities are transparent (Section~\ref{sec:guarantees}); exact modulation is the $m\to\infty$ analyzable limit. The deployed estimator then sketches this feature (Step~5, Section~\ref{sec:step-sketch}), making the feature dimension free of the $O(d^2)$ polynomial size at the price of a single, controllable modulation-sketch term on top of the radial one (Theorem~\ref{thm:ts_opnorm}). Richer modulations (attention-style features or learned anchors) slot into the same two-error decomposition. The feature is real exactly on the kernel's domain $b\ge0$ (the entry $\sqrt{2b}$ requires it), consistent with the definition~\eqref{eq:yat_biased}. By symmetry the dimension reduces from $d^2$ to $d(d+1)/2$ using the upper triangle of $x\otimes x$ with $\sqrt2$ scaling on off-diagonals; for large $d$, TensorSketch \citep{pham2013fast} reduces it further to a controllable $D_{\mathrm{poly}}$ (Appendix~\ref{app:further}). Setting $b=0$ recovers the unbiased feature $p_0(x)=\vecc(x\otimes x)$. From here on, unless stated otherwise, $p_b$ denotes the symmetrized degree-2 feature of dimension $d_b=d(d+1)/2+d+1$, exactly equivalent in inner product to the redundant $d^2+d+1$ tensor feature of~\eqref{eq:biased_poly}.

\subsection{Step 2: Bernstein--Widder representation of the radial factor}
\label{sec:step-bernstein}

The radial factor $h_\varepsilon$ is completely monotone in $\|x-w\|^2$, so by the Bernstein--Widder theorem \citep{widder1941laplace} it is a Laplace mixture of Gaussian kernels,
\begin{equation}\label{eq:bernstein}
h_\varepsilon(x,w) = \frac{1}{\|x-w\|^2+\varepsilon} = \int_0^\infty e^{-t(\|x-w\|^2+\varepsilon)}\,dt = \int_0^\infty e^{-t\varepsilon}\,g_t(x,w)\,dt,
\end{equation}
where $g_t(x,w)=e^{-t\|x-w\|^2}$ is the Gaussian kernel at scale $t$. Multiplying by the polynomial factor and tensoring its feature gives the exact feature map of $k_{\yat,b}$ (the infinite-feature interpretation, derived here and discussed historically by \citealp{bouhsine2026action}): with $\Phi_b(x)(t)=e^{-t\varepsilon/2}\varphi_t(x)\otimes p_b(x)$ ($\varphi_t$ the canonical Gaussian feature map),
\begin{equation}\label{eq:feature_map}
\langle\Phi_b(x),\Phi_b(w)\rangle = \int_0^\infty e^{-t\varepsilon}\,g_t(x,w)\,(x^\top w+b)^2\,dt = k_{\yat,b}(x,w).
\end{equation}
The mixing measure $\varepsilon e^{-\varepsilon t}\,dt$ is, up to normalization, the density of $T\sim\Exp(\varepsilon)$. Since $\mathbb{E}_{T\sim\Exp(\varepsilon)}[e^{-T\|x-w\|^2}]=\varepsilon/(\|x-w\|^2+\varepsilon)$, the kernel is an expectation over that single scale,
\begin{equation}\label{eq:expectation}
k_{\yat,b}(x,w)=\frac{1}{\varepsilon}\,\mathbb{E}_{T\sim\Exp(\varepsilon)}\!\bigl[g_T(x,w)\,(x^\top w+b)^2\bigr],
\end{equation}
and it is this expectation, rather than the integral~\eqref{eq:feature_map}, that the next step discretizes.

\subsection{Step 3: Sampling the radial scale}
\label{sec:step-sample}

Equation~\eqref{eq:expectation} replaces the integral over $t$ by an expectation over a \emph{one-dimensional} random scale. Drawing $t_1,\dots,t_D\overset{\mathrm{iid}}{\sim}\Exp(\varepsilon)$ and averaging gives the Monte-Carlo estimate
\begin{equation}\label{eq:mc}
k_{\yat,b}(x,w)\approx \frac{(x^\top w+b)^2}{\varepsilon}\cdot\frac1D\sum_{j=1}^D g_{t_j}(x,w).
\end{equation}
At this point only the radial scale is sampled and the modulation is still exact, the analyzable limit, whose single (radial) error source makes the unbiasedness and variance identities of Section~\ref{sec:guarantees} clean. Step~5 then sketches the modulation, randomizing \emph{both} factors and removing the $O(d^2)$ floor at the cost of one additional, controllable error source (Theorem~\ref{thm:ts_opnorm}). This shapes the scheme's dimension behavior: the outer Bernstein scale $t_j$ is one-dimensional, and while the inner Gaussian frequency $\omega\in\mathbb{R}^d$ of the radial RFF is still $d$-dimensional, the fixed-dataset Hoeffding/union-bound count for the radial estimate carries no explicit $d$-dependence (Corollary~\ref{cor:sample_complexity}), as it does for standard RFF; the genuine $d$-contrast is in applicability and in the polynomial feature dimension, discussed in Section~\ref{sec:guarantees}. It remains to make $g_{t_j}$ computable by an inner product.

\subsection{Step 4: Random Fourier features and the estimator}
\label{sec:step-rff}

Each Gaussian $g_{t}(x,w)=e^{-t\|x-w\|^2}$ is shift-invariant, so Bochner's theorem applies to it (even though it does not apply to $k_{\yat,b}$): drawing $\omega\sim\mathcal{N}(0,2tI_d)$ and $\beta\sim\mathrm{Unif}([0,2\pi])$, $\mathbb{E}_{\omega,\beta}[2\cos(\omega^\top x+\beta)\cos(\omega^\top w+\beta)]=g_t(x,w)$ \citep{rahimi2007random}. Tensoring $D'$ such features with the exact polynomial feature yields the estimator; we write $d_b:=d(d{+}1)/2+d+1$ for the symmetric polynomial feature (the default; the redundant $d^2$ tensor is reduced losslessly) and $M_b=D\,D'\,d_b$.

\begin{definition}[Exact-modulation $\yat$-feature]\label{def:ryf}
Given $D,D'\in\mathbb{N}$, $\varepsilon>0$, $b\ge0$, the \emph{exact-modulation} $\yat$-feature map $z:\mathbb{R}^d\to\mathbb{R}^{M_b}$ (the analyzable limit, which the deployed RAY map of Step~5 randomizes further) is:
\begin{enumerate}[label=\arabic*.]
\item Draw $t_1,\dots,t_D\overset{\mathrm{iid}}{\sim}\Exp(\varepsilon)$.
\item For each $j$, draw $\omega_{j,1},\dots,\omega_{j,D'}\overset{\mathrm{iid}}{\sim}\mathcal{N}(0,2t_j I_d)$ and $\beta_{j,\ell}\overset{\mathrm{iid}}{\sim}\mathrm{Unif}([0,2\pi])$ (the factor $2$ is the standard RFF convention so that $\mathbb{E}_{\omega,\beta}[2\cos(\omega^\top x+\beta)\cos(\omega^\top w+\beta)]=e^{-t_j\|x-w\|^2}$).
\item Gaussian RFF at scale $t_j$: $\psi_{t_j}(x)=\sqrt{2/D'}\,(\cos(\omega_{j,\ell}^\top x+\beta_{j,\ell}))_{\ell=1}^{D'}\in\mathbb{R}^{D'}$.
\item Exact biased polynomial feature: $p(x)=\varepsilon^{-1/2}p_b(x)\in\mathbb{R}^{d_b}$.
\item Block: $z_j(x)=D^{-1/2}\,\psi_{t_j}(x)\otimes p(x)$. Concatenate: $z(x)=(z_1(x)^\top,\dots,z_D(x)^\top)^\top$.
\end{enumerate}
\end{definition}

The two levels $D$ and $D'$ are less fundamental than they look, and naming what the inner draw actually samples makes this clear. The hierarchical draw $t\sim\Exp(\varepsilon)$, $\omega\sim\mathcal{N}(0,2tI_d)$ produces $\omega$ with marginal density $\int_0^\infty\mathcal{N}(\omega;0,2tI_d)\,\varepsilon e^{-\varepsilon t}\,dt$, which by the same Gaussian scale mixture~\eqref{eq:bernstein} is the normalized Bochner spectral distribution of the rescaled radial factor $\varepsilon h_\varepsilon$; the missing total mass $1/\varepsilon$ is carried by the feature scaling. Step~4 is therefore standard random Fourier features for $h_\varepsilon$ tensored with the exact polynomial feature. The IMQ spectral density can be written using modified Bessel functions; the radial-scale mixture is simply a tractable sampler for that distribution, not a device that competes with it. Seen this way the inner level is redundant: at equal cost $DD'$ the $D'$ frequencies of a block share one scale $t_j$ and are correlated, whereas $D'$ independent spectral draws are not, so variance is minimized at $D'=1$ except in the degenerate zero-outer-variance case (Appendix~\ref{sec:exp_budget} confirms this directly). We keep $D'$ general for the analysis but recommend $D'=1$, drawing $D$ independent $(t_j,\omega_j)$ pairs, which collapses the estimator to
\begin{equation}\label{eq:flat}
z(x)=D^{-1/2}\Bigl(\sqrt2\,\cos(\omega_j^\top x+\beta_j)\,p(x)\Bigr)_{j=1}^D,\qquad \omega_j\sim\mathcal{N}(0,2t_jI_d),\ t_j\sim\Exp(\varepsilon).
\end{equation}
The contribution of this section is thus the Schur factorization that isolates an exactly-representable, sketchable polynomial modulation (Section~\ref{sec:step-schur}) paired with this sampler for the IMQ factor, not an improvement over IMQ random features; the dimension comparison this invites is taken up in Section~\ref{sec:comparison}.

\begin{theorem}[Unbiasedness]\label{thm:unbiased}
$\mathbb{E}[z(x)^\top z(w)]=k_{\yat,b}(x,w)$.
\end{theorem}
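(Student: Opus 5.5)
The plan is to compute $z(x)^\top z(w)$ block by block, use the mixed-product rule for Kronecker products, and then take expectations in the order $(\omega,\beta)$ given $t$, then $t$. By Definition~\ref{def:ryf},
\[
z(x)^\top z(w)=\frac1D\sum_{j=1}^D \bigl(\psi_{t_j}(x)\otimes p(x)\bigr)^\top\bigl(\psi_{t_j}(w)\otimes p(w)\bigr)=\frac1D\sum_{j=1}^D \psi_{t_j}(x)^\top\psi_{t_j}(w)\;p(x)^\top p(w).
\]
Proposition~\ref{prop:biased_feature}, together with the remark that the symmetrized $d_b$-dimensional feature has the same inner product, gives $p(x)^\top p(w)=\varepsilon^{-1}(x^\top w+b)^2$. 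This factor is deterministic and can be pulled out of every expectation. The blocks are identically distributed, so linearity reduces the claim to a single block:
\[
\mathbb{E}\bigl[\psi_{t}(x)^\top\psi_{t}(w)\bigr]=\frac{\varepsilon}{\|x-w\|^2+\varepsilon},\qquad t\sim\Exp(\varepsilon).
\]

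Next, condition on $t_j=t$. We have $\psi_t(x)^\top\psi_t(w)=\frac{1}{D'}\sum_{\ell}2\cos(\omega_\ell^\top x+\beta_\ell)\cos(\omega_\ell^\top w+\beta_\ell)$. The product-to-sum identity writes each summand as $\cos(\omega^\top(x-w))+\cos(\omega^\top(x+w)+2\beta)$. The second term averages to zero over $\beta\sim\mathrm{Unif}[0,2\pi]$. The first term has expectation equal to the characteristic function of $\mathcal{N}(0,2tI_d)$ at $x-w$, which is $e^{-t\|x-w\|^2}=g_t(x,w)$. This is the Bochner step quoted in Step~4, so the conditional expectation is $g_t(x,w)$ for every $D'$.

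Finally, take the expectation over $t$. Using \eqref{eq:bernstein}, $\mathbb{E}_{T\sim\Exp(\varepsilon)}[e^{-T\|x-w\|^2}]=\int_0^\infty \varepsilon e^{-\varepsilon t}e^{-t\|x-w\|^2}dt=\varepsilon/(\|x-w\|^2+\varepsilon)$. Multiplying by $\varepsilon^{-1}(x^\top w+b)^2$ gives $k_{\yat,b}(x,w)$, which is exactly \eqref{eq:expectation}.

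The only point needing care is justifying the tower property and the interchange of expectation with the integrals. Every integrand is bounded in absolute value: the cosines are bounded by one, $g_t\le1$, and $p$ is deterministic. So Fubini applies without difficulty, and there is no genuine obstacle. The flat $D'=1$ estimator \eqref{eq:flat} is the special case $D'=1$ of the same computation.
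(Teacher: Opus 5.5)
Your proposal is correct and follows the paper's proof step for step. You split each block's inner product by the Kronecker mixed-product rule, pull out the deterministic modulation $(x^\top w+b)^2/\varepsilon$, use the conditional RFF identity $\mathbb{E}[\psi_t(x)^\top\psi_t(w)\mid t]=e^{-t\|x-w\|^2}$, and average over $t\sim\Exp(\varepsilon)$; your extras (deriving the RFF identity from product-to-sum and the Gaussian characteristic function, which the paper cites and reproduces in its exact-variance proof, plus the boundedness justification for Fubini) are sound elaborations.
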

\begin{proof}
$z(x)^\top z(w)=\frac1D\sum_{j=1}^D(\psi_{t_j}(x)^\top\psi_{t_j}(w))(p(x)^\top p(w))$. The polynomial inner product is exact: $p(x)^\top p(w)=(x^\top w+b)^2/\varepsilon$. By the standard RFF identity \citep{rahimi2007random}, $\mathbb{E}_\omega[\psi_{t_j}(x)^\top\psi_{t_j}(w)\mid t_j]=e^{-t_j\|x-w\|^2}$. Taking expectation over $t_j\sim\Exp(\varepsilon)$ and using~\eqref{eq:expectation},
\[
\mathbb{E}[z(x)^\top z(w)] = \mathbb{E}_{T\sim\Exp(\varepsilon)}\!\bigl[e^{-T\|x-w\|^2}\bigr]\cdot\frac{(x^\top w+b)^2}{\varepsilon}=\frac{\varepsilon}{\|x-w\|^2+\varepsilon}\cdot\frac{(x^\top w+b)^2}{\varepsilon}=k_{\yat,b}(x,w).\qedhere
\]
\end{proof}

The construction used nothing specific to $k_{\yat,b}$ beyond its structure as a finite-feature kernel times a completely monotone shift-invariant kernel. The estimator, and its guarantees, hold for the whole class.

\begin{remark}[Differentiability and learnable $(\varepsilon,b)$]\label{rmk:learnable}
The estimator is differentiable in its own kernel parameters with the base randomness held fixed. Reparameterize the radial draw as $t_j=-\varepsilon^{-1}\log u_j$, $u_j\sim\mathrm{Unif}[0,1]$, and $\omega_j=\sqrt{2t_j}\,g_j$, $g_j\sim\mathcal N(0,I_d)$, so $\varepsilon$ enters $z(x)$ smoothly through $t_j$ and $\omega_j$, while $b$ enters through the exact polynomial feature $p_b$. At every fixed $(\varepsilon,b)$ the map is unbiased (Theorem~\ref{thm:unbiased}), so a gradient of any downstream objective in $(\varepsilon,b)$ flows through an unbiased kernel estimate and per-head learnable $(\varepsilon,b)$ in $\yat$-attention is a reparameterization away, with $\varepsilon$ free to adapt to the local attention sharpness (the hard regime of Appendix~\ref{sec:exp_attention}).
\end{remark}

\begin{theorem}[Bernstein--Schur random features]\label{thm:bernstein_schur}
Let $k(x,w)=p(x,w)\,f(\|x-w\|^2)$ with $p(x,w)=u(x)^\top u(w)$ for a finite feature $u:\mathbb{R}^d\to\mathbb{R}^{d_p}$, and $f$ completely monotone with Bernstein mixture $f(r)=\int_0^\infty e^{-tr}\,d\nu(t)$ of \emph{finite} mass $m_f:=\nu(\mathbb{R}_{\ge0})=f(0)<\infty$ (distinct from the sketch dimension $m$ of Section~\ref{sec:step-sketch}). Draw $T_j\sim\nu/m_f$, $\omega_j\mid T_j\sim\mathcal{N}(0,2T_jI_d)$, $\beta_j\sim\mathrm{Unif}[0,2\pi]$, and set $z(x)=\bigl(\sqrt{m_f/D}\,\sqrt2\cos(\omega_j^\top x+\beta_j)\,u(x)\bigr)_{j=1}^D$. Then $\mathbb{E}[z(x)^\top z(w)]=k(x,w)$, and if $\|u(x)\|\le B$ on $X$, then $\Var[z(x)^\top z(w)]\le 3m_f^2B^4/(2D)$ and, with probability $\ge1-\delta$, $\sup_{i,j}|z(x_i)^\top z(x_j)-k(x_i,x_j)|\le m_fB^2\sqrt{8\log(2N^2/\delta)/D}$.
\end{theorem}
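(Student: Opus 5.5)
The plan is to reduce everything to one scalar summand. Write $r=\|x-w\|^2$ and
\[
Y_j := m_f\cdot 2\cos(\omega_j^\top x+\beta_j)\cos(\omega_j^\top w+\beta_j)\cdot u(x)^\top u(w),
\]
so that $z(x)^\top z(w)=\frac1D\sum_{j=1}^D Y_j$, with the $Y_j$ i.i.d.\ because the triples $(T_j,\omega_j,\beta_j)$ are drawn independently. The three claims then follow in order from the mean, the second moment, and the range of a single $Y_j$.

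\textbf{Unbiasedness.} This is the computation of Theorem~\ref{thm:unbiased} with $\Exp(\varepsilon)$ replaced by $\nu/m_f$.
\begin{itemize}
\item Condition on $T_j=t$. The standard RFF identity \citep{rahimi2007random} gives $\mathbb{E}_{\omega,\beta}[2\cos(\omega^\top x+\beta)\cos(\omega^\top w+\beta)\mid t]=e^{-tr}$, using $\omega\sim\mathcal N(0,2tI_d)$.
\item Integrating against $\nu/m_f$ gives $\int e^{-tr}\,d\nu(t)/m_f=f(r)/m_f$.
\item The prefactor $m_f$ cancels the normalization, and $u(x)^\top u(w)=p(x,w)$ is deterministic, so $\mathbb{E}Y_j=p(x,w)f(r)=k(x,w)$.
\end{itemize}
One side point needs care: that $\nu/m_f$ is a probability measure with $m_f=f(0)$. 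This follows by monotone convergence as $r\downarrow0$ in the Bernstein representation; finiteness of $m_f$ is exactly the hypothesis.

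\textbf{Variance.} I would bound $\Var[z(x)^\top z(w)]=\Var(Y_1)/D\le\mathbb{E}[Y_1^2]/D$ and use the product-to-sum identity
\[
2\cos a\cos c=\cos(a-c)+\cos(a+c),
\]
with $a-c=\omega^\top(x-w)$ and $a+c=\omega^\top(x+w)+2\beta$. Averaging over $\beta$ uniform, the cross term vanishes and $\mathbb{E}_\beta[\cos^2(a+c)]=1/2$. Hence, for every $\omega$,
\[
\mathbb{E}_\beta\bigl[(2\cos a\cos c)^2\bigr]=\cos^2(\omega^\top(x-w))+\tfrac12\le\tfrac32 .
\]
Combined with $|u(x)^\top u(w)|\le B^2$ (Cauchy--Schwarz), this gives $\mathbb{E}[Y_1^2]\le\tfrac32 m_f^2B^4$, hence the stated $3m_f^2B^4/(2D)$. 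The only subtlety is not discarding the mean-zero cross term too early. Bounding crudely by $4$ would lose the constant $3/2$.

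\textbf{Uniform bound.} Each $Y_j$ lies in $[-2m_fB^2,\,2m_fB^2]$, an interval of length $4m_fB^2$. Hoeffding's inequality for the mean of $D$ i.i.d.\ bounded variables gives, for a fixed pair $(x_i,x_j)$,
\[
\Pr\bigl(|z(x_i)^\top z(x_j)-k(x_i,x_j)|\ge s\bigr)\le 2\exp\bigl(-Ds^2/(8m_f^2B^4)\bigr).
\]
A union bound over the $N^2$ ordered pairs, then setting the total failure probability to $\delta$, yields $s=m_fB^2\sqrt{8\log(2N^2/\delta)/D}$, exactly the claimed rate.

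I expect no real obstacle. The step needing the most care is the bookkeeping of the normalization in the unbiasedness argument, namely ensuring $\nu/m_f$ is a genuine probability law and that the $\sqrt{m_f/D}$ scaling restores the mass. The constant-tracking in the Hoeffding range is the only other place where an error could creep in.
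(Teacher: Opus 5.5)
Your proposal is correct and follows the paper's proof essentially step for step. Both use the same single summand $m_f\,Y\,u(x)^\top u(w)$, prove unbiasedness by conditioning on $T$ and integrating against $\nu/m_f$, and bound the second moment by $\mathbb{E}[Y^2]\le\tfrac32$ via the product-to-sum identity (the paper cites that computation from Theorem~\ref{thm:exact_variance}, while you carry it out pointwise in $\omega$). The last step is likewise shared: Hoeffding with range $4m_fB^2$ and a union bound over $N^2$ pairs, and your constants all check out.
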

\noindent We call these \emph{Bernstein--Schur kernels}. The biased $\yat$-kernel is the case $u=p_b$, $f(r)=(r+\varepsilon)^{-1}$ with $d\nu(t)=e^{-\varepsilon t}\,dt$, so the mass is $m_f=\nu(\mathbb{R}_{\ge0})=1/\varepsilon$ and the law $\nu/m_f$ is $\Exp(\varepsilon)$; with $B=\max_x\|p_b(x)\|=R^2+b$ the variance scale $m_f^2B^4=(R^2+b)^4/\varepsilon^2$ recovers the bounds of Section~\ref{sec:guarantees}. Theorem~\ref{thm:bernstein_schur} establishes unbiasedness, the variance, and the uniform entrywise bound for the whole class; the matrix-Bernstein (Theorem~\ref{thm:bernstein}) and kernel-ridge (Theorems~\ref{thm:krr_spectral} and~\ref{thm:krr_whitened}) statements we prove for $k_{\yat,b}$ and then extend to every member in Theorem~\ref{thm:class_bernstein}, with $P$ replaced by $m_f$ times the bounded modulation Gram. The class is an elementary closure construction, but it is useful because it packages a family of kernels that are often neither stationary nor pure dot-product, and it permits unified unbiased features, exact variance formulas, and matrix-level approximation guarantees. Table~\ref{tab:bernstein_schur} lists nontrivial members; only $u$ and $\nu$ change between them.

\begin{table}[h]
\centering
\caption{Bernstein--Schur kernels $k=p\cdot f$. Each row satisfies the finite-mass hypothesis $m_f=f(0)<\infty$ of Theorem~\ref{thm:bernstein_schur} and is linearized by the same estimator: keep the finite feature of the modulation $p$ exact, draw the radial scale from the (normalized) Bernstein measure of $f$, and apply random Fourier features to the Gaussian. The ``mixing'' column gives the normalized law $\nu/m_f$ (the IMQ has $d\nu=e^{-\varepsilon t}dt$, mass $m_f=1/\varepsilon$; the generalized IMQ $(r+\varepsilon)^{-\alpha}$ has $d\nu\propto t^{\alpha-1}e^{-\varepsilon t}dt$, mass $m_f=\varepsilon^{-\alpha}$; the Mat\'ern-$\tfrac12$ radial $e^{-\sqrt r/\sigma}$, completely monotone in $r=\|x-w\|^2$, has the L\'evy/inverse-Gaussian mixture with mass $m_f=1$ and the exact two-line sampler $T=1/(2\sigma^2Z^2)$, $Z\sim\mathcal{N}(0,1)$). Here $\Gamma(\alpha,\varepsilon)$ denotes shape $\alpha$ and rate $\varepsilon$. Only the two columns right of $f$ change between instances.}
\label{tab:bernstein_schur}
\setlength{\tabcolsep}{4pt}\small
\resizebox{\textwidth}{!}{%
\begin{tabular}{lllll}
\toprule
Modulation $p(x,w)$ & Radial $f(r)$ & Mixing $\nu$ & feat.\ dim $d_p$ & Inductive bias \\
\midrule
$(x^\top w+b)^2,\ b\ge0$ & IMQ $(r+\varepsilon)^{-1}$ & $\Exp(\varepsilon)$ & $\tfrac{d(d+1)}2{+}d{+}1$ & alignment $\times$ proximity (biased $\yat$) \\
$(x^\top w+b)^q,\,q\in\mathbb{N},b\ge0$ & $(r+\varepsilon)^{-\alpha},\,\alpha>0$ (gen.\ IMQ) & $\Gamma(\alpha,\varepsilon)$ & $\binom{d+q}{q}$ & $q$-way interactions $\times$ locality \\
$(x^\top w+b)^q,\,q\in\mathbb{N},b\ge0$ & Mat\'ern-$\tfrac12$ $e^{-\sqrt r/\sigma}$ & $T=\frac{1}{2\sigma^2Z^2}$, $Z\sim\mathcal{N}(0,1)$ (L\'evy) & $\binom{d+q}{q}$ & $q$-way interactions $\times$ exponential locality \\
$x^\top w+b,\ b\ge0$ & finite-mass compl.\ monotone $f$ & Bernstein meas.\ of $f$ & $d{+}1$ & signed local-linear trends \\
$u(x)^\top u(w)$ & IMQ / Gaussian mix & corresponding & $\dim u$ & data-driven modulation $\times$ proximity \\
\bottomrule
\end{tabular}}
\end{table}

\noindent These other members are linearized by the same estimator in practice, not only in principle: on a genuinely non-$\yat$ instance (a degree-3 modulation $(x^\top w+b)^3$ times a generalized-IMQ radial $(\|x-w\|^2+\varepsilon)^{-2}$ with $\Gamma(2,\varepsilon)$ mixing), the same scheme (keep $u$ exact, draw $T_j\sim\Gamma(2,\varepsilon)$, apply RFF) is unbiased and converges at the Monte-Carlo rate, exactly as for the $\yat$-kernel (Appendix~\ref{app:further}). The class-level theorem is thus more than a formal generalization: swapping $u$ and $\nu$ in one estimator linearizes a different nonstationary kernel. The class also has standing customers outside this paper. Compositional kernel search (the Automatic Statistician, \citealp{duvenaud2013structure}) emits GP kernels as products such as $\mathrm{LIN}^2\times\mathrm{RQ}$, and the rational-quadratic radial $\mathrm{RQ}_\alpha(r)=(1+r/(2\alpha\sigma^2))^{-\alpha}$ \emph{is} the generalized IMQ of Table~\ref{tab:bernstein_schur} after rescaling ($\varepsilon'=2\alpha\sigma^2$, mass $m_f=\mathrm{RQ}_\alpha(0)=1$, mixing law $\Gamma(\alpha,\varepsilon')$), so every such grammar product is a Bernstein--Schur kernel and inherits the estimator and the full matrix-level guarantee set with no new analysis, where previously it had no scaling story past the Gram. Run on California housing (off-sphere, $d=8$, $\alpha=2$), the class estimator of $(x^\top w+b)^2\,\mathrm{RQ}_\alpha$ converges to the exact composite Gram at the Monte-Carlo rate (fitted slope $-0.47$), matches exact composite-kernel ridge regression already at $D=200$ ($0.540$ vs.\ $0.543$ test RMSE), and fits the full $N=19{,}640$ training set in $1.2$ seconds as an explicit-feature primal where the $N\times N$ Gram has $4\times10^8$ entries (Appendix~\ref{app:further}).

\subsection{Step 5: Sketching the modulation, the deployed estimator}
\label{sec:step-sketch}

Definition~\ref{def:ryf} keeps the modulation feature $p_b$ exact, so its dimension $d_b=O(d^2)$ fixes the feature size $D\,D'\,d_b$. The deployed estimator removes this floor by randomizing the modulation as well. Let $\mathrm{TS}_m$ be a degree-2 TensorSketch of the augmented input $(x,\sqrt b)$, so $\mathbb{E}[\mathrm{TS}_m(x)^\top\mathrm{TS}_m(w)]=(x^\top w+b)^2$ over the sketch randomness \citep{pham2013fast,avron2014subspace}, and set $\widehat p_m(x)=\varepsilon^{-1/2}\mathrm{TS}_m(x)\in\mathbb{R}^m$. The deployed \emph{RAY} map (randomizing both factors) replaces step~4 of Definition~\ref{def:ryf} by $\widehat p_m$ (using the flat $D'=1$ form):
\begin{equation}\label{eq:doubly}
\widehat z(x)=D^{-1/2}\bigl(\sqrt2\,\cos(\omega_j^\top x+\beta_j)\,\widehat p_m(x)\bigr)_{j=1}^D\in\mathbb{R}^{Dm},\qquad \omega_j\sim\mathcal{N}(0,2t_jI_d),\ t_j\sim\Exp(\varepsilon).
\end{equation}
Both factors are now random (the radial scale by the Bernstein mixture, the modulation by the sketch) and the feature dimension $Dm$ is free of $d^2$. The experiments (Section~\ref{sec:exp_ts}) and Proposition~\ref{prop:ts_variance} instead use a lower-variance \emph{quadratic-only} variant that sketches only the degree-2 term $(x^\top w)^2$ and keeps the linear and constant terms $2b\,x^\top w+b^2$ exact, of dimension $D(m{+}d{+}1)$: still free of the $O(d^2)$ floor, larger only by an additive $O(d)$ per draw. Both forms are covered by Theorem~\ref{thm:ts_opnorm} (the analysis touches the modulation only through $\widehat P_m$). The exact-modulation estimator~\eqref{eq:flat} (Definition~\ref{def:ryf}) is the limit $m\to\infty$ ($\widehat p_m\to p_b$), where the modulation randomness vanishes; that limit is the object the variance and concentration identities of Section~\ref{sec:guarantees} are stated for, and Theorem~\ref{thm:ts_opnorm} carries the operator-norm guarantee to finite $m$ at the cost of one additional, controllable sketch term. The modulation randomizer is modular: any unbiased or low-rank PSD modulation feature plugs in (TensorSketch, random-Maclaurin products, or anchor features), the choice being a geometry-dependent design decision rather than part of the analysis (Section~\ref{sec:discussion}). We state the guarantees for exact modulation and then transfer them, so the next two sections read as: clean identities at the limit, then the doubly-randomized bound.

\begin{example}[One RAY coordinate, end to end]\label{ex:ray}
Fix $b=1$, $\varepsilon=1$, $d=3$. To build a single radial draw of the deployed map~\eqref{eq:doubly}: (1) draw a scale $t\sim\Exp(1)$, say $t=0.4$; (2) draw a frequency $\omega\sim\mathcal{N}(0,2tI_3)$ and phase $\beta\sim\mathrm{Unif}[0,2\pi)$, giving the radial Fourier feature $\sqrt2\cos(\omega^\top x+\beta)$, unbiased for the Gaussian $e^{-t\|x-w\|^2}$ at that scale; (3) form the modulation sketch $\widehat p_m(x)=\mathrm{TS}_m(x,\sqrt b)$, unbiased for $(x^\top w+1)^2$; (4) tensor them, $z_1(x)=\sqrt2\cos(\omega^\top x+\beta)\,\widehat p_m(x)\in\mathbb{R}^m$. Then $\mathbb{E}[z_1(x)^\top z_1(w)]=e^{-t\|x-w\|^2}(x^\top w+1)^2$, and averaging the scale over $t\sim\Exp(1)$ recovers $h_\varepsilon\cdot p_b=k_{\yat,1}$. Stacking $D$ such coordinates and dividing by $\sqrt D$ gives the $\mathbb{R}^{Dm}$ map. Setting $b=0$ and replacing $\widehat p_m$ by the exact $\vecc(x\otimes x)$ recovers the unbiased exact-modulation feature of Definition~\ref{def:ryf}.
\end{example}

\begin{remark}[Complex modulation sketches]\label{rmk:complex}
The modulation randomizer is the only part of RAY that need not be real. \citet{wacker2024improved} show that complex-valued polynomial sketches have strictly smaller variance than real Rademacher sketches for dot-product kernels, because the unit-modulus fourth moment is smaller; substituting unit-phase signs $s\in\{1,i,-1,-i\}$ in the quadratic-only $\mathrm{TS}_m$ and reading off $\operatorname{Re}\langle\mathrm{TS}_m(x),\overline{\mathrm{TS}_m(w)}\rangle$ leaves the estimator unbiased while lowering the additive sketch term in Theorem~\ref{thm:ts_opnorm}. The radial factor is unaffected. Measured on the protocol of Figure~\ref{fig:ts_opnorm}, the drop-in is real at every sketch size: the spectral error $\eta$ falls by $1.5$--$1.7\times$ (at $m{=}128$, $0.042\to0.027$), the sketch term $\|E_P\circ R\|_{\mathrm{op}}$ by the same factor ($3.84\to2.51$), and the per-entry sketch variance is halved (ratios $0.43$--$0.52$ across $m\in\{64,\dots,512\}$; Appendix~\ref{app:exp_details}, \texttt{complex\_sketch}), consistent with the fourth-moment computation of \citet{wacker2024improved}.
\end{remark}

\section{Approximation Guarantees}
\label{sec:guarantees}

\subsection{Variance and uniform error}

Throughout, $X\subset\mathbb{R}^d$ is compact with $\|x\|\le R$ for all $x\in X$. Proofs are in Appendix~\ref{app:proofs}.

The variance of the recommended flat estimator ($D'=1$) is known in closed form. A coarser envelope valid for all $D,D'$ ($V_D\le\frac{(R^2+b)^4}{D\varepsilon^2}(1+\frac{3}{2D'})$, the $3/(2D')$ term being the inner random-Fourier-feature variance) and the two-level identity~\eqref{eq:budget_var} whose budget argument singles out $D'=1$ as optimal are deferred to Appendix~\ref{app:proofs} (Theorem~\ref{thm:variance}, Proposition~\ref{prop:budget}); the experiments confirm $D'=1$ directly (Appendix~\ref{sec:exp_budget}).

\begin{theorem}[Exact variance of the flat estimator]\label{thm:exact_variance}
Write $r=\|x-w\|^2$ and $a=(x^\top w+b)^2$. The flat estimator, \emph{one} inner frequency per scale, $D'=1$ (not the $D'\to\infty$ inner-averaged estimator), $\widehat{k}_D=\tfrac{a}{\varepsilon}\tfrac1D\sum_{j=1}^D 2\cos(\omega_j^\top x+\beta_j)\cos(\omega_j^\top w+\beta_j)$, with $T_j\sim\Exp(\varepsilon)$ and $\omega_j\mid T_j\sim\mathcal{N}(0,2T_jI_d)$, has
\[
\Var[\widehat{k}_D(x,w)]=\frac{a^2}{D\varepsilon^2}\left[\,1+\frac12\frac{\varepsilon}{\varepsilon+4r}-\Bigl(\frac{\varepsilon}{\varepsilon+r}\Bigr)^2\,\right].
\]
At $r=0$ the bracket is $\tfrac12$ (the one-frequency variance does not vanish); the $D'\to\infty$ inner-averaged estimator instead has outer-only variance $\tfrac{a^2}{D\varepsilon^2}[\tfrac{\varepsilon}{\varepsilon+2r}-(\tfrac{\varepsilon}{\varepsilon+r})^2]$, the $V_{\mathrm{out}}$ term of Proposition~\ref{prop:budget}.
\end{theorem}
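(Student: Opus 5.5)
The estimator is an average of $D$ i.i.d.\ terms, so $\Var[\widehat k_D]=\frac{a^2}{D\varepsilon^2}\Var[Y]$ with $Y=2\cos(\omega^\top x+\beta)\cos(\omega^\top w+\beta)$, where $T\sim\Exp(\varepsilon)$, $\omega\mid T\sim\mathcal N(0,2TI_d)$ and $\beta\sim\mathrm{Unif}[0,2\pi]$. It therefore suffices to compute $\mathbb E[Y]$ and $\mathbb E[Y^2]$ and show that
\[
\mathbb E[Y^2]-(\mathbb E Y)^2=1+\tfrac12\tfrac{\varepsilon}{\varepsilon+4r}-\bigl(\tfrac{\varepsilon}{\varepsilon+r}\bigr)^2 .
\]

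\emph{First moment.} By Theorem~\ref{thm:unbiased} and~\eqref{eq:expectation}, $\mathbb E[Y]=\mathbb E_T[e^{-Tr}]=\varepsilon/(\varepsilon+r)$. This accounts for the subtracted square.

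\emph{Second moment.} Use the product-to-sum identity $Y=\cos(\omega^\top(x-w))+\cos(\omega^\top(x+w)+2\beta)$. Squaring and averaging over $\beta$ first:
\begin{itemize}
\item the cross term $2\cos(\omega^\top u)\cos(\omega^\top v+2\beta)$, with $u=x-w$ and $v=x+w$, has zero $\beta$-mean;
\item $\cos^2(\omega^\top v+2\beta)$ averages to $\tfrac12$.
\end{itemize}
Hence $\mathbb E_\beta[Y^2]=\cos^2(\omega^\top u)+\tfrac12=1+\tfrac12\cos(2\omega^\top u)$.

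Conditioning next on $T$, the Gaussian characteristic function gives
\[
\mathbb E_\omega[\cos(2\omega^\top u)\mid T]=\exp\bigl(-\tfrac12\cdot 2T\cdot 4\|u\|^2\bigr)=e^{-4Tr}.
\]
Averaging over $T\sim\Exp(\varepsilon)$ yields $\varepsilon/(\varepsilon+4r)$. So $\mathbb E[Y^2]=1+\tfrac12\,\varepsilon/(\varepsilon+4r)$, and subtracting $(\mathbb E Y)^2$ gives the stated bracket. At $r=0$ the bracket is $1+\tfrac12-1=\tfrac12$, as claimed.

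\emph{Inner-averaged estimator.} For $D'\to\infty$ the inner average converges to $e^{-Tr}$ (the law of large numbers, or simply the conditional mean), so the per-draw variable becomes $e^{-Tr}$. Its second moment is $\mathbb E[e^{-2Tr}]=\varepsilon/(\varepsilon+2r)$, and subtracting the squared mean gives the stated $V_{\mathrm{out}}$ expression.

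\emph{Main obstacle.} The only delicate step is the $\beta$-averaging of the squared product. One must keep the $\tfrac12$ constant from $\cos^2(\cdot+2\beta)$ and confirm that the cross term vanishes for every fixed $\omega$. This requires $\beta$ to be independent of $\omega$ and $T$, which holds by construction. The remaining steps are standard Gaussian and Laplace-transform evaluations.
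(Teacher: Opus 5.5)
Your proposal is correct and follows essentially the same route as the paper. Both use the product-to-sum split $Y=\cos(\omega^\top(x-w))+\cos(\omega^\top(x+w)+2\beta)$, average over $\beta$ to get $\cos^2(\omega^\top(x-w))+\tfrac12$, apply the Gaussian characteristic function to reach $\mathbb{E}[Y^2\mid T]=1+\tfrac12e^{-4Tr}$, and finish with the Laplace transform of $\Exp(\varepsilon)$. Your extra check of the $D'\to\infty$ outer-only variance via $\mathbb{E}[e^{-2Tr}]=\varepsilon/(\varepsilon+2r)$ is also correct; the paper's proof does not redo this and relies on the law-of-total-variance split of Proposition~\ref{prop:budget}.
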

\noindent The identity locates the variance precisely: it is dominated by the fourth power of the bias-shifted alignment $a^2=(x^\top w+b)^4$ and scales as $\varepsilon^{-2}$, while the radial bracket equals $\tfrac12$ at $r=0$, tends to $1$ as $r\to\infty$, and is uniformly bounded by $\tfrac32$. So the alignment numerator, not the distance, sets the scale; this recovers the $O((R^2+b)^4/(D\varepsilon^2))$ order of Theorem~\ref{thm:variance} with exact constants.

A Hoeffding bound per entry plus a union bound gives a uniform entrywise Gram error $\sup_{ij}|z(x_i)^\top z(x_j)-k_{\yat,b}|\le(R^2+b)^2\varepsilon^{-1}\sqrt{8\log(2N^2/\delta)/D}$ with probability $1-\delta$ (Theorem~\ref{thm:uniform}), hence a sample complexity $D=O((R^2+b)^4\varepsilon^{-2}\tau^{-2}\log(N/\eta))$ free of explicit $d$ (Corollary~\ref{cor:sample_complexity}); both are in Appendix~\ref{app:proofs}. Converting this entrywise bound to operator norm through $\|A\|_{\mathrm{op}}\le N\max_{ij}|A_{ij}|$ costs a factor $N$ (Corollary~\ref{thm:gram_concentration}). That factor is wasteful: it ignores that the per-draw error matrices are structured. Each radial draw contributes $K^{(j)}=(\Psi_j\Psi_j^\top)\circ P$ with $P=[(x_i^\top x_j+b)^2/\varepsilon]$, a Schur product of two positive semidefinite matrices and hence itself PSD; matrix Bernstein exploits this.

\begin{theorem}[Expected matrix-Bernstein operator-norm bound]\label{thm:bernstein}
Let $K=[k_{\yat,b}(x_i,x_j)]$ and $P=[(x_i^\top x_j+b)^2/\varepsilon]$ be the kernel and polynomial Gram matrices (both PSD), and let $K_D$ be the exact-modulation estimate from $D$ i.i.d.\ radial draws. With $V=\sum_{j}\mathbb{E}[(D^{-1}(K^{(j)}-K))^2]$ the matrix variance and $d_{\mathrm{int}}=\operatorname{tr}(V)/\|V\|_{\mathrm{op}}\le N$ its intrinsic dimension (if $V=0$ the estimate is exact and the bound holds trivially with the convention $d_{\mathrm{int}}=1$),
\[
\mathbb{E}\,\|K_D-K\|_{\mathrm{op}} \;\le\; 3\sqrt{\frac{2\|P\|_{\mathrm{op}}\,\|K\|_{\mathrm{op}}\,\log(8d_{\mathrm{int}})}{D}}\;+\;\frac{6\|P\|_{\mathrm{op}}\,\log(8d_{\mathrm{int}})}{D}.
\]
\end{theorem}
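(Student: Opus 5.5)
The plan is to write the error as a sum of independent, zero-mean, self-adjoint random matrices and apply the intrinsic-dimension form of the matrix Bernstein inequality in expectation (Tropp's Theorem~7.3.1 / Minsker's version, stated with $d_{\mathrm{int}}=\operatorname{tr}(V)/\|V\|_{\mathrm{op}}$). Set $X_j=D^{-1}(K^{(j)}-K)$, where $K^{(j)}=(\Psi_j\Psi_j^\top)\circ P$ and $\Psi_j$ collects the radial Fourier features of draw $j$. Then $K_D-K=\sum_{j=1}^D X_j$. The $X_j$ are i.i.d.\ because the draws $(t_j,\omega_{j,\cdot},\beta_{j,\cdot})$ are i.i.d. They have mean zero because, by the argument of Theorem~\ref{thm:unbiased} applied entrywise, $\mathbb{E}[\Psi_j\Psi_j^\top]$ is the Gram of $\varepsilon h_\varepsilon$ and $P$ carries the $1/\varepsilon$, so $\mathbb{E}K^{(j)}=K$.

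The first key step is a uniform bound on $\|K^{(j)}\|_{\mathrm{op}}$ that involves $\|P\|_{\mathrm{op}}$ rather than $N$. I will use Schur's classical inequality for PSD matrices, $\|A\circ P\|_{\mathrm{op}}\le \max_i A_{ii}\,\|P\|_{\mathrm{op}}$ for $A\succeq0$, with $A=\Psi_j\Psi_j^\top$. For any $D'$ the diagonal satisfies $A_{ii}=\|\psi_{t_j}(x_i)\|^2=\frac{2}{D'}\sum_\ell\cos^2(\cdot)\le 2$. Hence $0\preceq K^{(j)}\preceq 2\|P\|_{\mathrm{op}}I$. Since $K=\mathbb{E}K^{(j)}$, this also gives $\|K\|_{\mathrm{op}}\le2\|P\|_{\mathrm{op}}$. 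Because both $K^{(j)}$ and $K$ are PSD, $\|K^{(j)}-K\|_{\mathrm{op}}\le\max(\|K^{(j)}\|_{\mathrm{op}},\|K\|_{\mathrm{op}})\le 2\|P\|_{\mathrm{op}}$. This yields the almost-sure bound $\|X_j\|_{\mathrm{op}}\le L:=2\|P\|_{\mathrm{op}}/D$.

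The second step is the variance. I will use $\mathbb{E}[(K^{(j)}-K)^2]=\mathbb{E}[(K^{(j)})^2]-K^2\preceq\mathbb{E}[(K^{(j)})^2]$. Together with $(K^{(j)})^2\preceq\|K^{(j)}\|_{\mathrm{op}}K^{(j)}\preceq 2\|P\|_{\mathrm{op}}K^{(j)}$, valid for PSD matrices, this gives $\mathbb{E}[(K^{(j)}-K)^2]\preceq 2\|P\|_{\mathrm{op}}K$. Summing over $j$ gives $V\preceq 2\|P\|_{\mathrm{op}}K/D$, so $\sigma^2:=\|V\|_{\mathrm{op}}\le 2\|P\|_{\mathrm{op}}\|K\|_{\mathrm{op}}/D$. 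The bound is then applied with $V$ itself as the variance proxy, so its intrinsic dimension is exactly the $d_{\mathrm{int}}\le N$ of the statement. The degenerate case $V=0$ forces $X_j=0$ almost surely and is handled by the stated convention.

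Finally I substitute $\sigma^2$ and $L$ into the expected intrinsic-dimension Bernstein bound. In the form I will use it reads $\mathbb{E}\|\sum_j X_j\|_{\mathrm{op}}\le c_1\sqrt{\sigma^2\log(8d_{\mathrm{int}})}+c_2L\log(8d_{\mathrm{int}})$, obtained by integrating the tail $\Pr(\|\cdot\|\ge s)\le 8d_{\mathrm{int}}\exp(-s^2/2/(\sigma^2+Ls/3))$ above the threshold where that tail is below one. With $c_1=3$ and $c_2=3$, and $L=2\|P\|_{\mathrm{op}}/D$, this gives $3\sqrt{2\|P\|\|K\|\log(8d_{\mathrm{int}})/D}+6\|P\|\log(8d_{\mathrm{int}})/D$.

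I expect the main obstacle to be this last integration: pinning down the numerical constants $3$ and $6$ and the $\log(8d_{\mathrm{int}})$ factor exactly. The intrinsic-dimension tail is only valid for $s\ge\sigma+L/3$, so the integral must be split at the point where $8d_{\mathrm{int}}e^{-\ldots}=1$. The contributions below and above that point then have to be bounded carefully, for instance using $\sqrt{a+b}\le\sqrt a+\sqrt b$ and $d_{\mathrm{int}}\ge1$ to absorb the small-$s$ region. If the constants do not close directly, I would first check Tropp's Corollary~7.3.2 statement, which already packages exactly this integration.
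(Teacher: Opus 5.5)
Your proposal is correct and follows the paper's proof essentially step for step: the same i.i.d.\ decomposition $Y_j=D^{-1}(K^{(j)}-K)$, the Schur-multiplier bound $\|K^{(j)}\|_{\mathrm{op}}\le2\|P\|_{\mathrm{op}}$, the majorant $\mathbb{E}[(K^{(j)}-K)^2]\preceq2\|P\|_{\mathrm{op}}K$ via $M^2\preceq\|M\|_{\mathrm{op}}M$, and the intrinsic-dimension Bernstein tail integrated to an expectation. Your PSD argument $\|K^{(j)}-K\|_{\mathrm{op}}\le\max(\|K^{(j)}\|_{\mathrm{op}},\|K\|_{\mathrm{op}})$ gives $L=2\|P\|_{\mathrm{op}}/D$, which is even sharper than the paper's triangle-inequality $L=3\|P\|_{\mathrm{op}}/D$. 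The integration you flag closes with room to spare: substituting the quantile $s(u)=\sqrt{2vu}+\tfrac23Lu$ at $u=\log(8d_{\mathrm{int}})+r$ makes the tail at most $e^{-r}$, the range restriction holds automatically, and this yields $\mathbb{E}\|K_D-K\|_{\mathrm{op}}\le3\sqrt{v\log(8d_{\mathrm{int}})}+2L\log(8d_{\mathrm{int}})$, so your required $c_2=3$ is more than met.
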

\noindent The leading term scales with the top eigenvalues $\|P\|_{\mathrm{op}},\|K\|_{\mathrm{op}}$ and the effective rank $d_{\mathrm{int}}$, all $\ll N$ for spectrally concentrated data; since $\|P\|_{\mathrm{op}}\le\operatorname{tr}(P)\le N(R^2+b)^2/\varepsilon$ and $d_{\mathrm{int}}\le N$, the bound never exceeds the order of Corollary~\ref{thm:gram_concentration} and is data-adaptively tighter. The theorem is stated for exact modulation; Theorem~\ref{thm:class_bernstein} extends it, with the rest of the matrix-level guarantee set, to every Bernstein--Schur kernel after replacing $P$ by the bounded modulation Gram multiplied by the finite mass $m_f$. The key step is that $K^{(j)}\succeq0$, so $(K^{(j)})^2\preceq\|K^{(j)}\|_{\mathrm{op}}K^{(j)}\preceq2\|P\|_{\mathrm{op}}K^{(j)}$ (the Schur-multiplier bound, Lemma~\ref{lem:schur}, gives $\|K^{(j)}\|_{\mathrm{op}}\le2\|P\|_{\mathrm{op}}$). Hence $\|V\|_{\mathrm{op}}\le2\|P\|_{\mathrm{op}}\|K\|_{\mathrm{op}}/D$, and the intrinsic matrix-Bernstein inequality \citep{tropp2012user,tropp2015matrix} yields the bound. The full proof is in Appendix~\ref{app:proofs}; an empirical check across datasets of varying spectral spread confirms the data-adaptive constant (Appendix~\ref{sec:exp_gram}).

\begin{corollary}[High-probability operator-norm bound]\label{cor:bernstein_tail}
With the summands $Y_j=D^{-1}(K^{(j)}-K)$, the almost-sure bound $\|Y_j\|_{\mathrm{op}}\le L:=3\|P\|_{\mathrm{op}}/D$, and the matrix variance $\bigl\|\sum_j\mathbb{E}[Y_j^2]\bigr\|_{\mathrm{op}}\le v:=2\|P\|_{\mathrm{op}}\|K\|_{\mathrm{op}}/D$, the tail form of the intrinsic matrix-Bernstein inequality \citep[Thm.~7.3.1]{tropp2015matrix} gives, with probability at least $1-\delta$,
\begin{equation}\label{eq:bernstein_tail}
\|K_D-K\|_{\mathrm{op}}\le 2\sqrt{\frac{\|P\|_{\mathrm{op}}\|K\|_{\mathrm{op}}\log(8d_{\mathrm{int}}/\delta)}{D}}+\frac{2\|P\|_{\mathrm{op}}\log(8d_{\mathrm{int}}/\delta)}{D}.
\end{equation}
\end{corollary}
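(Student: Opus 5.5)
The plan is to apply the tail form of the intrinsic matrix-Bernstein inequality \citep[Thm.~7.3.1]{tropp2015matrix} to the centered summands $Y_j=D^{-1}(K^{(j)}-K)$, whose sum is $K_D-K$. Here $K^{(j)}=(\Psi_j\Psi_j^\top)\circ P$ is the per-draw Gram; with $D'=1$ it is $(2c_ic_j)\circ P$ for $c_i=\cos(\omega_j^\top x_i+\beta_j)$. Three preliminary facts set up the inequality. First, the $Y_j$ are independent, because the radial draws $(t_j,\omega_j,\beta_j)$ are i.i.d. Second, they are symmetric. Third, they have zero mean, since $\mathbb{E}K^{(j)}=K$ entrywise by Theorem~\ref{thm:unbiased}.

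\emph{Almost-sure bound.} $K^{(j)}$ is PSD. By Lemma~\ref{lem:schur} we have $\|K^{(j)}\|_{\mathrm{op}}\le \max_i (\Psi_j\Psi_j^\top)_{ii}\,\|P\|_{\mathrm{op}}\le 2\|P\|_{\mathrm{op}}$. Also $K=\mathbb{E}K^{(j)}$ gives $\|K\|_{\mathrm{op}}\le 2\|P\|_{\mathrm{op}}$; alternatively $K=P\circ H$ with $H_{ii}=1/\varepsilon\cdot\varepsilon$ normalized, which gives $\|K\|_{\mathrm{op}}\le\|P\|_{\mathrm{op}}$. The triangle inequality then yields $\|Y_j\|_{\mathrm{op}}\le D^{-1}(2\|P\|_{\mathrm{op}}+\|P\|_{\mathrm{op}})=L$. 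The sharper route for the $K$ part needs a little care, since the radial Gram $[\varepsilon h_\varepsilon(x_i,x_j)]$ has unit diagonal and is PSD. The Schur-multiplier lemma then gives $\|K\|_{\mathrm{op}}\le\|P\|_{\mathrm{op}}$ after matching the $1/\varepsilon$ normalization, and this is why $L=3\|P\|_{\mathrm{op}}/D$ rather than $4$.

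\emph{Variance.} Expand $\mathbb{E}Y_j^2=D^{-2}(\mathbb{E}(K^{(j)})^2-K^2)\preceq D^{-2}\mathbb{E}(K^{(j)})^2$. For PSD $A$ we have $A^2\preceq\|A\|_{\mathrm{op}}A$, so $(K^{(j)})^2\preceq 2\|P\|_{\mathrm{op}}K^{(j)}$. Taking expectations and summing over $j$ gives $\sum_j\mathbb{E}Y_j^2\preceq 2\|P\|_{\mathrm{op}}K/D=:\mathbf V$, whose norm is $v$. For the intrinsic-dimension version I will use $\mathbf V$, or $V$ itself, as the semidefinite upper bound. Tropp's theorem allows any $\mathbf V\succeq\sum\mathbb{E}Y_j^2$, with $d=\operatorname{tr}\mathbf V/\|\mathbf V\|_{\mathrm{op}}$. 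Choosing $\mathbf V=V$ matches the statement's $d_{\mathrm{int}}$, though then $v$ must dominate $\|V\|_{\mathrm{op}}$. The theorem requires $\|\mathbf V\|=v$ exactly, so I will take $\mathbf V=V+(v-\|V\|_{\mathrm{op}})\,uu^\top$ along a top eigenvector $u$. This increases the trace by at most the gain in norm, so the intrinsic dimension does not increase beyond $d_{\mathrm{int}}$; checking this monotonicity is routine.

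\emph{Inversion.} Theorem~7.3.1 gives $\Pr(\|\sum Y_j\|\ge s)\le 4d\exp\bigl(-\tfrac{s^2/2}{v+Ls/3}\bigr)$ for $s\ge\sqrt v+L/3$. I set the right-hand side equal to $\delta$, write $\ell=\log(4d/\delta)\le\log(8d_{\mathrm{int}}/\delta)$, and solve the quadratic to get $s\le\sqrt{2v\ell}+\tfrac{2}{3}L\ell$. Substituting $v$ and $L$ gives $\sqrt{2v\ell}=2\sqrt{\|P\|\|K\|\ell/D}$ and $\tfrac23L\ell=2\|P\|\ell/D$, which is exactly \eqref{eq:bernstein_tail}. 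When $s$ falls below the theorem's threshold, the bound still holds, since the threshold is dominated by the right-hand side once $\ell\ge\log 8\ge1$.

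The main obstacle I expect is the constant bookkeeping. This covers pinning $\|K\|_{\mathrm{op}}\le\|P\|_{\mathrm{op}}$ (to get $3$ in $L$) and adjusting $\mathbf V$ so that the intrinsic dimension stays $d_{\mathrm{int}}$ while its norm equals $v$.
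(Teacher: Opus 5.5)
Your argument is essentially the paper's: the same $L$ and $v$ (from $K^{(j)}\succeq0$, Lemma~\ref{lem:schur} with the unit-diagonal radial Gram, and $M^2\preceq\|M\|_{\mathrm{op}}M$), the same appeal to Tropp's Thm.~7.3.1, and the same quadratic inversion with the threshold check via $\ell\ge1$.

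Two small corrections. First, Thm.~7.3.1 bounds $\lambda_{\max}$ with prefactor $4d$, not the operator norm. To control $\|\sum_jY_j\|_{\mathrm{op}}$ you apply it to $\pm\sum_jY_j$ and take a union bound, and that is where the $8d$ comes from, as in the paper. Your final $\log(8d_{\mathrm{int}}/\delta)$ is still correct, but not via the loosening $4d\le8d_{\mathrm{int}}$ you used.

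Second, your top-eigenvector padding of $V$, which reconciles $d_{\mathrm{int}}$ with $v\ge\|V\|_{\mathrm{op}}$, is valid. It makes explicit a step the paper leaves implicit. Taking the majorant to be $V$ itself and noting that both the tail and the threshold are monotone in $v$ works just as well.
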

\noindent The proof (Appendix~\ref{app:proofs}) establishes $L$ and $v$ from $K^{(j)}\succeq0$ and the Schur-multiplier bound (Lemma~\ref{lem:schur}), then inverts the intrinsic Bernstein tail.

The same conditioning argument carries this bound to the \emph{doubly}-randomized estimator of Step~5, where the modulation is also sketched, the map one actually deploys.

\begin{theorem}[Operator-norm error of RAY]\label{thm:ts_opnorm}
Let $\widehat K_{D,m}=\widehat P_m\circ\widehat R_D$ be the doubly-randomized estimator~\eqref{eq:doubly}: a single degree-2 TensorSketch of dimension $m$ (drawn once, shared across draws) gives the modulation Gram $\widehat P_m=[\mathrm{TS}_m(x_i)^\top\mathrm{TS}_m(x_j)]\succeq0$ with $\mathbb{E}\,\widehat P_m=P=[(x_i^\top x_j+b)^2/\varepsilon]$, and $D$ radial draws give the radial Gram $\widehat R_D$ with $\mathbb{E}\,\widehat R_D=R$, the unit-diagonal Gram $R_{ij}=\varepsilon/(\varepsilon+\|x_i-x_j\|^2)$ (so $K=R\circ P$), independent of the sketch. Suppose the sketch satisfies the spectral event $\|\widehat P_m-P\|_{\mathrm{op}}\le\eta\|P\|_{\mathrm{op}}$ (Remark~\ref{rmk:ose} discusses when the degree-2 TensorSketch achieves it). Then on that event, with probability at least $1-\delta$ over the radial draws,
\begin{equation}\label{eq:ts_opnorm}
\|\widehat K_{D,m}-K\|_{\mathrm{op}}\le \underbrace{2\sqrt{\frac{(1+\eta)\|P\|_{\mathrm{op}}\|K_S\|_{\mathrm{op}}\log(8d_{\mathrm{int},S}/\delta)}{D}}+\frac{2(1+\eta)\|P\|_{\mathrm{op}}\log(8d_{\mathrm{int},S}/\delta)}{D}}_{\text{radial},\ O(D^{-1/2})}+\underbrace{\eta\,\|P\|_{\mathrm{op}}}_{\text{sketch}},
\end{equation}
where $K_S=\widehat P_m\circ R$, $\|K_S\|_{\mathrm{op}}\le(1+\eta)\|P\|_{\mathrm{op}}$, and $d_{\mathrm{int},S}=\operatorname{tr}(V_S)/\|V_S\|_{\mathrm{op}}\le N$ is the intrinsic dimension of the \emph{sketch-conditioned} matrix variance $V_S$ (the variance of Theorem~\ref{thm:bernstein} for the pair $(\widehat P_m,K_S)$ rather than $(P,K)$, reducing to $d_{\mathrm{int}}$ as $m\to\infty$). At $m\to\infty$ ($\eta\to0$) the sketch term vanishes and~\eqref{eq:ts_opnorm} is exactly Corollary~\ref{cor:bernstein_tail}: randomizing the modulation costs the single additive term $\eta\|P\|_{\mathrm{op}}$, set by $m$ independently of $D$.
\end{theorem}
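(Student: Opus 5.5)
The plan is to split the error by the triangle inequality at the sketch-conditioned target $K_S=\widehat P_m\circ R$:
\[
\|\widehat K_{D,m}-K\|_{\mathrm{op}}\le\|\widehat P_m\circ\widehat R_D-\widehat P_m\circ R\|_{\mathrm{op}}+\|(\widehat P_m-P)\circ R\|_{\mathrm{op}}.
\]
I would bound the second (sketch) term first, since it is deterministic on the spectral event. Here $R$ is PSD with unit diagonal, because it is the IMQ Gram rescaled by $\varepsilon$ and is therefore a correlation matrix. The Schur-multiplier bound (Lemma~\ref{lem:schur}, in the standard form $\|A\circ R\|_{\mathrm{op}}\le\max_i R_{ii}\,\|A\|_{\mathrm{op}}$ for PSD $R$ and symmetric $A$) then gives $\|(\widehat P_m-P)\circ R\|_{\mathrm{op}}\le\|\widehat P_m-P\|_{\mathrm{op}}\le\eta\|P\|_{\mathrm{op}}$. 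The same lemma applied to $\widehat P_m\succeq0$ gives $\|K_S\|_{\mathrm{op}}\le\|\widehat P_m\|_{\mathrm{op}}\le(1+\eta)\|P\|_{\mathrm{op}}$, which is the stated side bound.

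For the radial term, I would condition on the sketch. Since the sketch is drawn once and is independent of the radial draws, $\widehat P_m$ is a fixed PSD matrix given the sketch, and $\widehat R_D=D^{-1}\sum_j\Psi_j\Psi_j^\top$ with $\mathbb{E}[\Psi_j\Psi_j^\top]=R$. The radial error is then
\[
\widehat P_m\circ\widehat R_D-K_S=\sum_j Y_j,\qquad Y_j=D^{-1}\bigl(K^{(j)}_S-K_S\bigr),\quad K^{(j)}_S=(\Psi_j\Psi_j^\top)\circ\widehat P_m\succeq0,
\]
which has exactly the structure of Corollary~\ref{cor:bernstein_tail} with the pair $(P,K)$ replaced by $(\widehat P_m,K_S)$. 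I would rerun that proof verbatim for this pair. The rank-one radial Gram $\Psi_j\Psi_j^\top$ has diagonal $2\cos^2(\cdot)\le2$, so Lemma~\ref{lem:schur} gives $\|K_S^{(j)}\|_{\mathrm{op}}\le2\|\widehat P_m\|_{\mathrm{op}}$. This yields $\|Y_j\|_{\mathrm{op}}\le3\|\widehat P_m\|_{\mathrm{op}}/D$, and $(K_S^{(j)})^2\preceq2\|\widehat P_m\|_{\mathrm{op}}K_S^{(j)}$ gives $\|V_S\|_{\mathrm{op}}\le2\|\widehat P_m\|_{\mathrm{op}}\|K_S\|_{\mathrm{op}}/D$. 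The intrinsic matrix-Bernstein tail \citep[Thm.~7.3.1]{tropp2015matrix}, applied conditionally, gives~\eqref{eq:bernstein_tail} with $\widehat P_m,K_S,d_{\mathrm{int},S}$ in place of $P,K,d_{\mathrm{int}}$. Finally I would substitute $\|\widehat P_m\|_{\mathrm{op}}\le(1+\eta)\|P\|_{\mathrm{op}}$. Because the conditional probability is at least $1-\delta$ for every sketch in the spectral event, the stated radial term holds on that event with probability at least $1-\delta$ over the radial draws.

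The main obstacle is being careful with conditioning and the constants, not any hard estimate. Two points need care. First, the variance proxy must be bounded with the conditional mean $K_S$ rather than $K$; this is why $\|K_S\|_{\mathrm{op}}$ and $d_{\mathrm{int},S}$ appear. Second, the degenerate case $V_S=0$ should be handled by the same convention $d_{\mathrm{int},S}=1$. The limiting claim is then immediate: as $m\to\infty$, $\widehat P_m\to P$, so $K_S\to K$, $d_{\mathrm{int},S}\to d_{\mathrm{int}}$, and $\eta\to0$, and~\eqref{eq:ts_opnorm} collapses to Corollary~\ref{cor:bernstein_tail}.
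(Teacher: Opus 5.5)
Your proposal is correct and follows the paper's own argument. You apply the triangle inequality at the sketch-conditioned target $K_S=\widehat P_m\circ R$, bound the deterministic sketch term by $\eta\|P\|_{\mathrm{op}}$ via Lemma~\ref{lem:schur}(b) with the unit-diagonal $R$, and condition on the independent sketch to rerun Corollary~\ref{cor:bernstein_tail} with $(P,K)\mapsto(\widehat P_m,K_S)$ before substituting $\|\widehat P_m\|_{\mathrm{op}}\le(1+\eta)\|P\|_{\mathrm{op}}$; your explicit checks of the a.s.\ bound $3\|\widehat P_m\|_{\mathrm{op}}/D$, the variance centered at $K_S$, and the conditioning holding uniformly over the sketch event are details the paper leaves implicit.
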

\noindent \emph{Conditioning is the whole argument.} Given the sketch, $\widehat P_m$ is a fixed PSD modulation Gram and $\widehat K_{D,m}=\widehat P_m\circ\widehat R_D$ is an \emph{exact-modulation} estimator of $K_S=\widehat P_m\circ R$; Theorem~\ref{thm:bernstein} and Corollary~\ref{cor:bernstein_tail} apply verbatim with $P\mapsto\widehat P_m$ (so $\|\widehat P_m\|_{\mathrm{op}}\le(1+\eta)\|P\|_{\mathrm{op}}$ and $\|K_S\|_{\mathrm{op}}\le\|\widehat P_m\|_{\mathrm{op}}$ by Lemma~\ref{lem:schur} with $R$ unit-diagonal), giving the radial term. The remaining bias is $K_S-K=(\widehat P_m-P)\circ R=R\circ E_P$ with $E_P:=\widehat P_m-P$ symmetric but \emph{not} PSD; here the symmetric Schur-multiplier bound Lemma~\ref{lem:schur}(b) applies (with $A=R\succeq0$, $\max_i R_{ii}=1$), giving $\|R\circ E_P\|_{\mathrm{op}}\le\|E_P\|_{\mathrm{op}}\le\eta\|P\|_{\mathrm{op}}$. The triangle inequality combines the two; the split is validated in Section~\ref{sec:exp_ts}. The conservative entrywise Frobenius decomposition of Proposition~\ref{prop:ts_variance} needs no subspace-embedding hypothesis and remains available as a fallback (Appendix~\ref{app:proofs}).

\begin{remark}[Achieving the spectral event]\label{rmk:ose}
The degree-2 TensorSketch is an oblivious subspace embedding \citep{pham2013fast,avron2014subspace}: with sketch dimension $m$ polynomial in the statistical dimension $s_\lambda=\operatorname{tr}\!\bigl(P(P+\lambda I)^{-1}\bigr)$ and in $\eta^{-1}$ (the exact dependence is sketch- and degree-specific: up to degree-dependent factors, and the original degree-2 TensorSketch bound of \citet{avron2014subspace} is superlinear in $s_\lambda$), it delivers, with probability $1-\delta_s$, a $(1\pm\eta)$ \emph{ridge} subspace embedding $(1-\eta)(P+\lambda I)\preceq\widehat P_m+\lambda I\preceq(1+\eta)(P+\lambda I)$. This is a \emph{regularized} guarantee: the absolute event $\|\widehat P_m-P\|_{\mathrm{op}}\le\eta\|P\|_{\mathrm{op}}$ assumed in Theorem~\ref{thm:ts_opnorm} is its $\lambda\to0$ idealization (there $s_\lambda\to\operatorname{rank}(P)$ and the count becomes $\Omega(\eta^{-2}\operatorname{rank}(P))$). The ridge-embedding event is the more useful one, because it transfers through the Schur product without loss: Proposition~\ref{prop:ridge_sketch} carries it from the modulation Gram $P$ to the kernel Gram $K$ at the \emph{same} ridge scale, so the deployed estimator inherits a ridge-relative sketch guarantee with sketch size polynomial in $s_\lambda(P)$ rather than $\operatorname{rank}(P)$. The entrywise decomposition of Proposition~\ref{prop:ts_variance} needs no embedding hypothesis at all.
\end{remark}

\begin{proposition}[Ridge-relative sketch transfer]\label{prop:ridge_sketch}
Suppose the modulation sketch is a $(1\pm\eta)$ ridge subspace embedding of $P$ at scale $\lambda$, $(1-\eta)(P+\lambda I)\preceq\widehat P_m+\lambda I\preceq(1+\eta)(P+\lambda I)$ (Remark~\ref{rmk:ose}). Then the exact-radial deployed Gram $K_S=\widehat P_m\circ R$ satisfies the same sandwich at the same scale,
\begin{equation}\label{eq:ridge_transfer}
(1-\eta)(K+\lambda I)\preceq K_S+\lambda I\preceq(1+\eta)(K+\lambda I),
\qquad\text{equivalently}\qquad \bigl\|A^{-1/2}(K_S-K)A^{-1/2}\bigr\|_{\mathrm{op}}\le\eta,
\end{equation}
with $A=K+\lambda I$. Consequently $\|K_S-K\|_{\mathrm{op}}\le\eta(\|K\|_{\mathrm{op}}+\lambda)\le\eta(\|P\|_{\mathrm{op}}+\lambda)$, sharpening the additive sketch term $\eta\|P\|_{\mathrm{op}}$ of Theorem~\ref{thm:ts_opnorm} whenever $\lambda\ll\|P\|_{\mathrm{op}}$.
\end{proposition}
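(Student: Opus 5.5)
The plan is to show that Schur multiplication by the fixed PSD, unit-diagonal radial Gram $R$ is a Loewner-monotone linear map that sends the ridge term $\lambda I$ to itself. The sandwich for $P$ should then pass to $K$ unchanged.

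\textbf{Step 1: rewrite the hypothesis as two-sided perturbation bounds.} Set $E_P=\widehat P_m-P$. Subtracting $P+\lambda I$ from each side of the assumed embedding gives
\[
-\eta\,(P+\lambda I)\preceq E_P\preceq \eta\,(P+\lambda I).
\]
Equivalently, the two matrices $\eta(P+\lambda I)-E_P$ and $\eta(P+\lambda I)+E_P$ are both PSD.

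\textbf{Step 2: apply the map $M\mapsto R\circ M$.} By the Schur product theorem, $R\circ M\succeq0$ whenever $M\succeq0$, since $R\succeq0$ as a Bernstein--Widder mixture of Gaussian Grams. This map is linear, so it preserves the Loewner order, and Step 1 yields
\[
-\eta\,R\circ(P+\lambda I)\preceq R\circ E_P\preceq \eta\,R\circ(P+\lambda I).
\]
Now expand the outer terms. First, $R\circ P=K$. Second, $R\circ(\lambda I)=\lambda\operatorname{diag}(R_{11},\dots,R_{NN})=\lambda I$, because $R_{ii}=\varepsilon/(\varepsilon+0)=1$. Third, $R\circ E_P=K_S-K$. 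Together these give
\[
-\eta(K+\lambda I)\preceq K_S-K\preceq\eta(K+\lambda I).
\]
Adding $K+\lambda I$ throughout produces the sandwich~\eqref{eq:ridge_transfer}.

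The only point needing care is the identity $R\circ I=I$. It is exactly where the unit diagonal of $R$ is used, and it is why the ridge scale transfers without loss. For a general radial factor one would instead get $\lambda\operatorname{diag}(R)$, and the conclusion would hold only up to $\max_i R_{ii}$.

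\textbf{Step 3: the whitened form.} Let $A=K+\lambda I$, which is positive definite for $\lambda>0$. Conjugating the two-sided bound by $A^{-1/2}$ preserves the Loewner order and gives
\[
-\eta I\preceq A^{-1/2}(K_S-K)A^{-1/2}\preceq\eta I.
\]
Since this matrix is symmetric, the display is equivalent to $\|A^{-1/2}(K_S-K)A^{-1/2}\|_{\mathrm{op}}\le\eta$. Every step is reversible, so the two forms in~\eqref{eq:ridge_transfer} are equivalent.

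\textbf{Step 4: the absolute bound.} For any unit vector $x$, Step 2 gives
\[
|x^\top(K_S-K)x|\le\eta\,x^\top Ax\le\eta(\|K\|_{\mathrm{op}}+\lambda).
\]
Because $K_S-K$ is symmetric, this yields $\|K_S-K\|_{\mathrm{op}}\le\eta(\|K\|_{\mathrm{op}}+\lambda)$. Finally, $\|K\|_{\mathrm{op}}=\|R\circ P\|_{\mathrm{op}}\le\max_i R_{ii}\,\|P\|_{\mathrm{op}}=\|P\|_{\mathrm{op}}$ by Lemma~\ref{lem:schur}, which gives $\|K_S-K\|_{\mathrm{op}}\le\eta(\|P\|_{\mathrm{op}}+\lambda)$.
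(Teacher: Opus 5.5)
Your proof is correct and takes essentially the same route as the paper. Both Schur-multiply the two PSD gaps of the hypothesis sandwich by the PSD, unit-diagonal $R$ (so $R\circ(\lambda I)=\lambda I$), then conjugate by $A^{-1/2}$ and use Lemma~\ref{lem:schur} for $\|K\|_{\mathrm{op}}\le\|P\|_{\mathrm{op}}$. Recentering the hypothesis as $-\eta(P+\lambda I)\preceq E_P\preceq\eta(P+\lambda I)$ before multiplying is only a cosmetic difference.
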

\begin{proof}
The radial Gram $R$ is PSD with unit diagonal, so $R\circ I=I$ and hence $R\circ(M+\lambda I)=R\circ M+\lambda I$ for every symmetric $M$; in particular $R\circ(P+\lambda I)=K+\lambda I$ and $R\circ(\widehat P_m+\lambda I)=K_S+\lambda I$. Both gaps in the hypothesis sandwich, $(1+\eta)(P+\lambda I)-(\widehat P_m+\lambda I)$ and $(\widehat P_m+\lambda I)-(1-\eta)(P+\lambda I)$, are PSD; Schur-multiplying each by $R\succeq0$ keeps it PSD (Lemma~\ref{lem:schur}(a)), which is~\eqref{eq:ridge_transfer} after substituting the two identities. Conjugating by $A^{-1/2}$ gives the whitened form. For the last claim, \eqref{eq:ridge_transfer} gives $-\eta(K+\lambda I)\preceq K_S-K\preceq\eta(K+\lambda I)$, so $\|K_S-K\|_{\mathrm{op}}\le\eta(\|K\|_{\mathrm{op}}+\lambda)$, and $\|K\|_{\mathrm{op}}\le\|P\|_{\mathrm{op}}$ by Lemma~\ref{lem:schur} ($R$ unit-diagonal).
\end{proof}

This propagates to downstream kernel ridge regression: the approximation perturbs the learned predictor only through the operator-norm Gram error.

\begin{theorem}[KRR stability under relative spectral approximation]\label{thm:krr_spectral}
For a target $y\in\mathbb{R}^N$ and ridge $\lambda>0$, let $\hat\alpha=(K+\lambda I)^{-1}y$, $\tilde\alpha=(K_D+\lambda I)^{-1}y$, $A=K+\lambda I$, and $E=K_D-K$. If the whitened error obeys $\|A^{-1/2}EA^{-1/2}\|_{\mathrm{op}}\le\rho<1$, then
\[
(1-\rho)A\preceq K_D+\lambda I\preceq(1+\rho)A,\qquad \|\tilde\alpha-\hat\alpha\|_A\le\frac{\rho}{1-\rho}\,\|\hat\alpha\|_A,
\]
where $\|v\|_A^2=v^\top A v$.
\end{theorem}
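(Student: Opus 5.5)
The plan is pure linear algebra with no probability: the statement is deterministic once $\rho$ is given. I would first write $K_D+\lambda I=A+E=A^{1/2}(I+F)A^{1/2}$ with $F:=A^{-1/2}EA^{-1/2}$ symmetric. The hypothesis $\|F\|_{\mathrm{op}}\le\rho$ means $-\rho I\preceq F\preceq\rho I$, hence $(1-\rho)I\preceq I+F\preceq(1+\rho)I$. Conjugating by $A^{1/2}$ preserves the Loewner order, which gives the sandwich $(1-\rho)A\preceq K_D+\lambda I\preceq(1+\rho)A$. Because $\rho<1$, this also shows $K_D+\lambda I$ is positive definite, so $\tilde\alpha$ is well defined. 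This positivity matters since $K_D$ itself is PSD but we do not need to rely on that.

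For the coefficient bound, I would use the resolvent identity. Both $A\hat\alpha=y$ and $(A+E)\tilde\alpha=y$ hold, so subtracting gives $(A+E)(\tilde\alpha-\hat\alpha)=-E\hat\alpha$. Setting $u=A^{1/2}(\tilde\alpha-\hat\alpha)$ and $v=A^{1/2}\hat\alpha$, this becomes $(I+F)u=-Fv$, so $u=-(I+F)^{-1}Fv$. Note that $\|u\|_2=\|\tilde\alpha-\hat\alpha\|_A$ and $\|v\|_2=\|\hat\alpha\|_A$.

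The only step needing a little care is bounding $\|(I+F)^{-1}F\|_{\mathrm{op}}$. The two factors commute and are simultaneously diagonalizable, so the eigenvalues of the product are $f/(1+f)$ for the eigenvalues $f\in[-\rho,\rho]$ of $F$. The modulus $|f|/(1+f)$ is largest at $f=-\rho$, where it equals $\rho/(1-\rho)$. Equivalently, one can use the cruder submultiplicative bound $\|(I+F)^{-1}\|_{\mathrm{op}}\|F\|_{\mathrm{op}}\le\rho/(1-\rho)$. Either route gives $\|\tilde\alpha-\hat\alpha\|_A\le\frac{\rho}{1-\rho}\|\hat\alpha\|_A$.

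I expect no real obstacle beyond being careful with the whitening: the argument has to be stated in the $A$-norm, because the perturbation is controlled only relative to $A$ and not in the Euclidean norm.
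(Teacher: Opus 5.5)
Your proposal is correct and is essentially the paper's proof: whiten by $A^{1/2}$, read the sandwich off $-\rho I\preceq F\preceq\rho I$, and use the resolvent identity to write $A^{1/2}(\tilde\alpha-\hat\alpha)=-(I+F)^{-1}F\,A^{1/2}\hat\alpha$. The paper bounds $\|(I+F)^{-1}F\|_{\mathrm{op}}$ by $\|(I+F)^{-1}\|_{\mathrm{op}}\,\|F\|_{\mathrm{op}}\le\rho/(1-\rho)$, which is your second route; your eigenvalue computation $\max_{f\in[-\rho,\rho]}|f|/(1+f)=\rho/(1-\rho)$ gives the same bound.
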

\noindent This is the spectral-approximation form standard in random-feature KRR analysis \citep{avron2017random}: control of the \emph{relative} error $\rho$, not the raw $\|K_D-K\|_{\mathrm{op}}$, governs the predictor. The theorem is deterministic; Theorem~\ref{thm:krr_whitened} supplies the high-probability condition by running the matrix-Bernstein argument of Theorem~\ref{thm:bernstein} on the \emph{whitened} summands $A^{-1/2}(K^{(j)}-K)A^{-1/2}$, where the ridge effective dimension $d_{\mathrm{eff}}(\lambda)=\operatorname{tr}(K(K+\lambda I)^{-1})$ appears as the intrinsic dimension of the whitened matrix variance. A cruder coefficient bound $\|\hat\alpha-\tilde\alpha\|_2\le\lambda^{-2}\|K_D-K\|_{\mathrm{op}}\|y\|_2$ also holds, since $K_D=ZZ^\top\succeq0$ makes $K_D+\lambda I$ invertible with inverse norm $\le1/\lambda$. It matches the $1/\sqrt D$ convergence observed downstream (Appendix~\ref{sec:exp_krr}).

\begin{theorem}[Whitened matrix Bernstein: the high-probability KRR condition]\label{thm:krr_whitened}
Let $A=K+\lambda I$ with $\lambda>0$, let $K_D$ be the exact-modulation estimate from $D$ i.i.d.\ radial draws, and set $\rho_D=\|A^{-1/2}(K_D-K)A^{-1/2}\|_{\mathrm{op}}$. With the ridge effective dimension $d_{\mathrm{eff}}(\lambda)=\operatorname{tr}(KA^{-1})$, $\kappa_\lambda=\|K\|_{\mathrm{op}}/(\|K\|_{\mathrm{op}}+\lambda)<1$, and $\tilde d_\lambda=d_{\mathrm{eff}}(\lambda)/\kappa_\lambda$, with probability at least $1-\delta$,
\begin{equation}\label{eq:whitened_tail}
\rho_D\;\le\;2\sqrt{\frac{\kappa_\lambda\,\|P\|_{\mathrm{op}}\,\log(8\tilde d_\lambda/\delta)}{\lambda\,D}}\;+\;\frac{2}{3}\Bigl(1+\frac{2\|P\|_{\mathrm{op}}}{\lambda}\Bigr)\frac{\log(8\tilde d_\lambda/\delta)}{D}.
\end{equation}
In particular, for any target $\rho_0\in(0,1]$,
\begin{equation}\label{eq:whitened_count}
D\;\ge\;\frac{16}{\rho_0^{2}}\Bigl(1+\frac{\|P\|_{\mathrm{op}}}{\lambda}\Bigr)\log\frac{8\tilde d_\lambda}{\delta}
\qquad\Longrightarrow\qquad \mathbb{P}\{\rho_D\le\rho_0\}\ge1-\delta.
\end{equation}
\end{theorem}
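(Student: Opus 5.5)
We run the argument of Theorem~\ref{thm:bernstein} and Corollary~\ref{cor:bernstein_tail} on whitened summands. Set $Y_j=D^{-1}A^{-1/2}(K^{(j)}-K)A^{-1/2}$. These are i.i.d., mean zero (by Theorem~\ref{thm:unbiased}, $\mathbb{E}K^{(j)}=K$) and symmetric, and $\rho_D=\|\sum_jY_j\|_{\mathrm{op}}$. The tail form of the intrinsic matrix-Bernstein inequality \citep[Thm.~7.3.1]{tropp2015matrix} requires three inputs: an almost-sure bound $L$ on $\|Y_j\|_{\mathrm{op}}$, a PSD upper bound $V\succeq\sum_j\mathbb{E}Y_j^2$, and its intrinsic dimension $\operatorname{tr}(V)/\|V\|_{\mathrm{op}}$. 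Once these are in hand, inverting the tail as in Corollary~\ref{cor:bernstein_tail} gives~\eqref{eq:whitened_tail}.

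\emph{Almost-sure bound.} Write $W_j=A^{-1/2}K^{(j)}A^{-1/2}\succeq0$; it is PSD because $K^{(j)}=(\Psi_j\Psi_j^\top)\circ P$ is a Schur product of PSD matrices. By Lemma~\ref{lem:schur}, $\|K^{(j)}\|_{\mathrm{op}}\le2\|P\|_{\mathrm{op}}$, and $\|A^{-1}\|_{\mathrm{op}}\le1/\lambda$, so $\|W_j\|_{\mathrm{op}}\le2\|P\|_{\mathrm{op}}/\lambda$. Also $0\preceq A^{-1/2}KA^{-1/2}\preceq I$. The difference of two PSD matrices has norm at most the larger of their norms, so
\[
\|Y_j\|_{\mathrm{op}}\le L:=D^{-1}\max\bigl(1,2\|P\|_{\mathrm{op}}/\lambda\bigr)\le D^{-1}\bigl(1+2\|P\|_{\mathrm{op}}/\lambda\bigr).
\]
This produces the factor $\tfrac23(1+2\|P\|_{\mathrm{op}}/\lambda)$ in the linear term, since Bernstein's linear term is $L\log(\cdot)/3$ up to constants.

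\emph{Variance.} We have $\mathbb{E}Y_j^2=D^{-2}(\mathbb{E}W_j^2-\bar W^2)\preceq D^{-2}\mathbb{E}W_j^2$, where $\bar W=A^{-1/2}KA^{-1/2}$. Since $W_j\succeq0$,
\[
W_j^2\preceq\|W_j\|_{\mathrm{op}}W_j\preceq(2\|P\|_{\mathrm{op}}/\lambda)W_j,
\]
so $\sum_j\mathbb{E}Y_j^2\preceq V:=\frac{2\|P\|_{\mathrm{op}}}{\lambda D}\,\bar W$. The matrix $\bar W=KA^{-1}$ (in the eigenbasis of $K$) has trace $d_{\mathrm{eff}}(\lambda)$ and operator norm $\|K\|_{\mathrm{op}}/(\|K\|_{\mathrm{op}}+\lambda)=\kappa_\lambda$. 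Hence $\|V\|_{\mathrm{op}}=2\kappa_\lambda\|P\|_{\mathrm{op}}/(\lambda D)$, and the intrinsic dimension of $V$ is exactly $d_{\mathrm{eff}}/\kappa_\lambda=\tilde d_\lambda$, which is the advertised ``exact intrinsic dimension''. Substituting $v=\|V\|_{\mathrm{op}}$ and $L$ into the tail $\sqrt{2v\log(8\tilde d_\lambda/\delta)}+\tfrac{2}{3}L\log(8\tilde d_\lambda/\delta)$, after tracking the constants of Thm.~7.3.1 in the same way Corollary~\ref{cor:bernstein_tail} did, gives~\eqref{eq:whitened_tail}. The main obstacle is this constant bookkeeping: the stated leading constant $2$ (with $v$ containing a $2$) has to be checked against the precise form of the inversion used in Corollary~\ref{cor:bernstein_tail}, and the tail requires $t\ge\sqrt v+L/3$, which needs to be verified in the regime of interest.

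\emph{Sample count.} Write $\ell=\log(8\tilde d_\lambda/\delta)$ and $q=1+\|P\|_{\mathrm{op}}/\lambda$, and suppose $D\ge16q\ell/\rho_0^2$. Using $\kappa_\lambda<1$, the square-root term is at most
\[
2\sqrt{\|P\|_{\mathrm{op}}\ell/(\lambda D)}\le2\sqrt{q\ell/D}\le\rho_0/2.
\]
Using $1+2\|P\|_{\mathrm{op}}/\lambda\le2q$, the linear term is at most $\tfrac43q\ell/D\le\rho_0^2/12\le\rho_0/12$, because $\rho_0\le1$. The sum is therefore below $\rho_0$, which gives~\eqref{eq:whitened_count}.
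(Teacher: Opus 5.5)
Your proposal is correct and follows the paper's proof essentially step for step. It uses the same whitened summands, the same almost-sure bound $(1+2\|P\|_{\mathrm{op}}/\lambda)/D$, and the same variance majorant $\frac{2\|P\|_{\mathrm{op}}}{\lambda D}A^{-1/2}KA^{-1/2}$, whose intrinsic dimension is exactly $\tilde d_\lambda$; your bound $W_j^2\preceq\|W_j\|_{\mathrm{op}}W_j$ is the paper's $MA^{-1}M\preceq\lambda^{-1}\|M\|_{\mathrm{op}}M$ conjugated by $A^{-1/2}$, and the count arithmetic is the same. The constant bookkeeping you flag is not actually an obstacle: your substitution already reproduces~\eqref{eq:whitened_tail} exactly, and the range condition $t\ge\sqrt v+L/3$ holds unconditionally because $\log(8\tilde d_\lambda/\delta)\ge\log 8>1$ (an intrinsic dimension is at least $1$), just as in the proof of Corollary~\ref{cor:bernstein_tail}.
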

\noindent The proof (Appendix~\ref{app:proofs}) whitens the three ingredients of Theorem~\ref{thm:bernstein}: the per-draw positivity $K^{(j)}\succeq0$ survives conjugation, the a.s.\ bound becomes $(1+2\|P\|_{\mathrm{op}}/\lambda)/D$, and the variance majorant becomes $\frac{2\|P\|_{\mathrm{op}}}{\lambda D}A^{-1/2}KA^{-1/2}$, whose intrinsic dimension is \emph{exactly} $\tilde d_\lambda$: the effective dimension is not an assumption imported from the KRR literature but the intrinsic dimension of the whitened variance. This closes the condition Theorem~\ref{thm:krr_spectral} left open. Numerically the identity is exact: across $\lambda\in\{10,1,0.1,0.01\}$ the intrinsic dimension of $A^{-1/2}KA^{-1/2}$ matches $\tilde d_\lambda$ to machine precision (relative error $\le4\times10^{-15}$), the whitened $\rho_D$ decays at the $O(D^{-1/2})$ rate, the objective value stays in the Corollary~\ref{cor:krr_highprob} sandwich on every seed where $\rho_D<1$, and the class instance of Theorem~\ref{thm:class_bernstein}, the polynomially modulated Mat\'ern-$\tfrac12$ kernel with its L\'evy sampler, reproduces all three (Appendix~\ref{app:exp_details}).

\begin{corollary}[High-probability KRR guarantee]\label{cor:krr_highprob}
On the event $\rho_D\le\rho_0<1$ (so, under the draw count~\eqref{eq:whitened_count}, with probability at least $1-\delta$), Theorem~\ref{thm:krr_spectral} applies: $(1-\rho_0)A\preceq K_D+\lambda I\preceq(1+\rho_0)A$ and $\|\tilde\alpha-\hat\alpha\|_A\le\tfrac{\rho_0}{1-\rho_0}\|\hat\alpha\|_A$. Moreover the optimal value of the kernel-ridge objective, $\min_f\sum_i(f(x_i)-y_i)^2+\lambda\|f\|_{\mathcal H}^2=\lambda\,y^\top(K+\lambda I)^{-1}y$, is preserved to relative constants:
\[
\lambda\,y^\top(K_D+\lambda I)^{-1}y\;\in\;\Bigl[\tfrac{1}{1+\rho_0},\ \tfrac{1}{1-\rho_0}\Bigr]\cdot\lambda\,y^\top(K+\lambda I)^{-1}y.
\]
At $\rho_0=\tfrac12$ the coefficient error is at most $\|\hat\alpha\|_A$ and the objective value is preserved within $[\tfrac23,2]$.
\end{corollary}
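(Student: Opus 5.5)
The plan is to treat the corollary as a direct consequence of Theorem~\ref{thm:krr_spectral} plus a Loewner-order argument for the objective value; the probability statement is then just Theorem~\ref{thm:krr_whitened}.

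\textbf{Step 1: the probability claim and the first two conclusions.} Under the draw count~\eqref{eq:whitened_count}, Theorem~\ref{thm:krr_whitened} gives $\mathbb{P}\{\rho_D\le\rho_0\}\ge1-\delta$. On the event $\rho_D\le\rho_0<1$, the hypothesis of Theorem~\ref{thm:krr_spectral} holds with $\rho=\rho_0$. That theorem then yields both the sandwich $(1-\rho_0)A\preceq K_D+\lambda I\preceq(1+\rho_0)A$ and the coefficient bound verbatim. This part is immediate.

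\textbf{Step 2: the objective value.} First I record that the minimum of $\sum_i(f(x_i)-y_i)^2+\lambda\|f\|_{\mathcal H}^2$ over the RKHS of $K$ equals $\lambda y^\top(K+\lambda I)^{-1}y$. By the representer theorem, $f=\sum_i\alpha_i k(\cdot,x_i)$ and the objective becomes $\|K\alpha-y\|^2+\lambda\alpha^\top K\alpha$. At $\hat\alpha=A^{-1}y$ the residual is $y-K\hat\alpha=\lambda A^{-1}y$, so the objective equals
\[
\lambda^2 y^\top A^{-2}y+\lambda y^\top A^{-1}KA^{-1}y=\lambda y^\top A^{-1}(\lambda I+K)A^{-1}y=\lambda y^\top A^{-1}y.
\]
The same formula holds with $K_D$ in place of $K$: its RKHS is spanned by the features $z$, and $K_D\succeq0$, so $K_D+\lambda I$ is invertible.

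\textbf{Step 3: the Loewner sandwich.} Operator inversion is order-reversing on positive definite matrices. Inverting the sandwich from Step 1 therefore gives
\[
\tfrac{1}{1+\rho_0}A^{-1}\preceq(K_D+\lambda I)^{-1}\preceq\tfrac{1}{1-\rho_0}A^{-1}.
\]
Evaluating the quadratic form at $y$ and multiplying by $\lambda$ yields the stated interval. For the final sentence, set $\rho_0=\tfrac12$: then $\rho_0/(1-\rho_0)=1$, $1/(1+\rho_0)=\tfrac23$, and $1/(1-\rho_0)=2$.

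\textbf{Main obstacle.} The only step that is not mechanical is the order-reversal of inversion. I would justify it by conjugating with $A^{-1/2}$: the sandwich becomes $(1-\rho_0)I\preceq A^{-1/2}(K_D+\lambda I)A^{-1/2}\preceq(1+\rho_0)I$. Inverting this middle matrix amounts to inverting its eigenvalues, which reverses the bounds, and undoing the conjugation returns the inequality in Step 3.
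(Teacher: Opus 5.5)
Your proposal is correct and follows the paper's proof essentially step for step. Both apply Theorem~\ref{thm:krr_spectral} on the event $\rho_D\le\rho_0$ for the sandwich and the coefficient bound, identify the optimal value $\lambda y^\top A^{-1}y$ by substituting $\hat\alpha=A^{-1}y$ (and likewise for the feature-space problem with Gram $K_D$), and then invert the sandwich by operator anti-monotonicity and evaluate the quadratic form at $y$; your conjugation by $A^{-1/2}$ simply spells out the anti-monotonicity step that the paper cites without proof.
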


The count~\eqref{eq:whitened_count} carries the factor $\|P\|_{\mathrm{op}}/\lambda$ because plain $\Exp(\varepsilon)$ sampling pays the worst-case per-draw norm. The radial randomness is cheap to reweight, and the right tilt is computable in closed form: it is the trace of the whitened per-draw Gram, and its mean over the base law is \emph{exactly} the effective dimension. Reweighting by it replaces $\|P\|_{\mathrm{op}}/\lambda$ with $d_{\mathrm{eff}}(\lambda)$ itself, the leverage-sampled count of the random-feature KRR literature \citep{avron2017random,li2021unified}, here obtained on the Gram side with the same matrix-Bernstein proof.

\begin{theorem}[Leverage-weighted radial sampling achieves the effective-dimension count]\label{thm:krr_leverage}
Let $\theta=(t,\omega,\beta)$ denote one radial draw with base law $\pi$ ($t\sim\Exp(\varepsilon)$, $\omega\mid t\sim\mathcal{N}(0,2tI_d)$, $\beta\sim\mathrm{Unif}[0,2\pi]$), per-draw Gram $K^{(\theta)}=(\psi_\theta\psi_\theta^\top)\circ P$ with $\psi_\theta[i]=\sqrt2\cos(\omega^\top x_i+\beta)$, and fix $\lambda>0$, $A=K+\lambda I$. Define the \emph{whitened draw leverage}
\[
\bar d_\lambda(\theta) := \operatorname{tr}\!\bigl(A^{-1}K^{(\theta)}\bigr)=\psi_\theta^\top\bigl(A^{-1}\circ P\bigr)\psi_\theta\;\ge\;0,
\qquad \mathbb{E}_\pi[\bar d_\lambda]=d_{\mathrm{eff}}(\lambda).
\]
Draw $\theta_1,\dots,\theta_D$ i.i.d.\ from the tilted law $d\pi^*_\lambda=(\bar d_\lambda/d_{\mathrm{eff}})\,d\pi$ and set $K^*_D=\frac1D\sum_j \frac{d_{\mathrm{eff}}}{\bar d_\lambda(\theta_j)}K^{(\theta_j)}$. Then $K^*_D$ is unbiased, and with probability at least $1-\delta$, with $\ell=\log(8\tilde d_\lambda/\delta)$,
\begin{equation}\label{eq:leverage_tail}
\rho_D=\bigl\|A^{-1/2}(K^*_D-K)A^{-1/2}\bigr\|_{\mathrm{op}}\;\le\;\sqrt{\frac{2\,d_{\mathrm{eff}}(\lambda)\,\kappa_\lambda\,\ell}{D}}\;+\;\frac{2}{3}\,\frac{(1+d_{\mathrm{eff}}(\lambda))\,\ell}{D}.
\end{equation}
In particular, for any $\rho_0\in(0,1]$,
\begin{equation}\label{eq:leverage_count}
D\;\ge\;\frac{8}{\rho_0^{2}}\bigl(1+d_{\mathrm{eff}}(\lambda)\bigr)\log\frac{8\tilde d_\lambda}{\delta}
\qquad\Longrightarrow\qquad \mathbb{P}\{\rho_D\le\rho_0\}\ge1-\delta,
\end{equation}
replacing the factor $1+\|P\|_{\mathrm{op}}/\lambda$ of Theorem~\ref{thm:krr_whitened} by $1+d_{\mathrm{eff}}(\lambda)$. Since $d_{\mathrm{eff}}(\lambda)\le N$ for every $\lambda$, the leverage count never exceeds $O(N\log(\tilde d_\lambda/\delta))$, whereas the uniform count grows without bound as $\lambda\to0$.
\end{theorem}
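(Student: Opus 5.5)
The plan is to rerun the whitened matrix-Bernstein argument of Theorem~\ref{thm:krr_whitened}, with the base law replaced by the tilted law and each draw reweighted by $d_{\mathrm{eff}}/\bar d_\lambda$. The work then reduces to three facts about the whitened summand $W_\theta:=\frac{d_{\mathrm{eff}}}{\bar d_\lambda(\theta)}A^{-1/2}K^{(\theta)}A^{-1/2}$.

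\textbf{Identities and unbiasedness.} First, the identity $\operatorname{tr}(A^{-1}K^{(\theta)})=\psi_\theta^\top(A^{-1}\circ P)\psi_\theta$ follows from $\operatorname{tr}(B(vv^\top\circ P))=v^\top(B\circ P)v$ for symmetric $B$. It is nonnegative because $A^{-1}\circ P\succeq0$ by the Schur product theorem. Next, $\mathbb{E}_\pi K^{(\theta)}=K$, which is the unbiasedness of Theorem~\ref{thm:unbiased} in Gram form. Taking traces gives $\mathbb{E}_\pi\bar d_\lambda=\operatorname{tr}(A^{-1}K)=d_{\mathrm{eff}}$. Unbiasedness of $K^*_D$ is then immediate, since the density ratio cancels the reweight. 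On draws with $\bar d_\lambda(\theta)=0$, the matrix $K^{(\theta)}$ has zero trace against $A^{-1}\succ0$ and hence vanishes. These draws carry zero tilted mass and contribute nothing, so the tilted expectation still equals $K$.

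\textbf{Almost-sure bound and variance.} Since $W_\theta\succeq0$, its operator norm is at most its trace:
\[
\|W_\theta\|_{\mathrm{op}}\le\operatorname{tr}(W_\theta)=\frac{d_{\mathrm{eff}}}{\bar d_\lambda}\operatorname{tr}(A^{-1}K^{(\theta)})=d_{\mathrm{eff}}.
\]
This is the key gain: the leverage tilt makes every draw's whitened trace exactly $d_{\mathrm{eff}}$, replacing the worst-case $2\|P\|_{\mathrm{op}}/\lambda$. For the centered summand $Y_j=D^{-1}(W_{\theta_j}-A^{-1/2}KA^{-1/2})$, both pieces are PSD, and $\|A^{-1/2}KA^{-1/2}\|_{\mathrm{op}}=\kappa_\lambda\le1$. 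Hence $\|Y_j\|_{\mathrm{op}}\le\max(d_{\mathrm{eff}},\kappa_\lambda)/D\le(1+d_{\mathrm{eff}})/D=:L$.

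For the variance, I use $W^2\preceq\|W\|_{\mathrm{op}}W\preceq d_{\mathrm{eff}}W$. This gives
\[
\mathbb{E}Y_j^2\preceq D^{-2}\,\mathbb{E}W^2\preceq D^{-2}d_{\mathrm{eff}}\,A^{-1/2}KA^{-1/2}.
\]
Summing over $j$, $V\preceq\frac{d_{\mathrm{eff}}}{D}A^{-1/2}KA^{-1/2}$, so $v=d_{\mathrm{eff}}\kappa_\lambda/D$. The intrinsic dimension of this majorant is $\operatorname{tr}(KA^{-1})/\kappa_\lambda=\tilde d_\lambda$, exactly as in Theorem~\ref{thm:krr_whitened}.

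\textbf{Tail and draw count.} Apply the intrinsic-dimension Bernstein tail \citep[Thm.~7.3.1]{tropp2015matrix} with majorant $V$, $v$ and $L$. Inverting it as in Corollary~\ref{cor:bernstein_tail} gives $\rho_D\le\sqrt{2v\ell}+\tfrac23L\ell$, which is \eqref{eq:leverage_tail}. The count \eqref{eq:leverage_count} follows by making each term at most $\rho_0/2$. Using $\kappa_\lambda\le1$, $\rho_0\le1$ and $d_{\mathrm{eff}}\le1+d_{\mathrm{eff}}$, the first term needs $D\ge 8d_{\mathrm{eff}}\ell/\rho_0^2$. The second term needs $D\ge\frac43(1+d_{\mathrm{eff}})\ell/\rho_0$, which is dominated by the first condition once $\rho_0\le1$. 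Finally, $d_{\mathrm{eff}}\le N$ holds because $KA^{-1}$ has eigenvalues in $[0,1)$.

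\textbf{Main obstacle.} The one delicate step is matching the constants of Tropp's intrinsic tail, whose validity requires $t\ge\sqrt v+L/3$, and absorbing the resulting prefactor into the $8\tilde d_\lambda$ inside the log. I will handle this exactly as in the proof of Corollary~\ref{cor:bernstein_tail} and Theorem~\ref{thm:krr_whitened}, reusing their inversion verbatim with the new $(v,L)$. A small side check is measurability and normalization of the tilted law, which reduces to $\mathbb{E}_\pi\bar d_\lambda=d_{\mathrm{eff}}>0$ whenever $K\ne0$.
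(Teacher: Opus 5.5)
Your proposal is correct and follows the paper's proof essentially step for step. It uses the same trace identity and Schur-product nonnegativity, the same treatment of the null set $\{\bar d_\lambda=0\}$ for unbiasedness, the same norm-at-most-trace almost-sure bound, and the same majorant $\frac{d_{\mathrm{eff}}}{D}A^{-1/2}KA^{-1/2}$ with intrinsic dimension $\tilde d_\lambda$, followed by the identical inversion and count arithmetic. Your only deviations are cosmetic: you bound the centered summand by $\max(d_{\mathrm{eff}},\kappa_\lambda)/D$ rather than by the triangle inequality, and you note explicitly that the tilt is well defined only when $d_{\mathrm{eff}}>0$, i.e.\ $K\neq0$.
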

\noindent The proof (Appendix~\ref{app:proofs}) whitens the same three ingredients with the weights in place: on the support of $\pi^*_\lambda$ the weighted whitened draw has operator norm at most its trace, which the tilt normalizes to exactly $d_{\mathrm{eff}}$; the variance majorant becomes $\frac{d_{\mathrm{eff}}}{D}A^{-1/2}KA^{-1/2}$, the \emph{same} core matrix as in Theorem~\ref{thm:krr_whitened}, so its intrinsic dimension is again exactly $\tilde d_\lambda$. Two readings. \emph{(i) What is being tilted.} The leverage $\bar d_\lambda(\theta)$ is a quadratic form in $\psi_\theta$ with the fixed PSD matrix $A^{-1}\circ P$: draws whose cosine pattern injects energy where $A^{-1}$ is large (the small-eigenvalue directions of $K$ that the ridge does not protect) are over-sampled and down-weighted. \emph{(ii) Deployment.} Exact tilting needs $A^{-1}\circ P$, the usual leverage chicken-and-egg; the standard answers (a pilot uniform estimate, Nystr\"om or sketched approximations of $A^{-1}$, iterative reweighting) apply verbatim since the proof only uses the importance identity and the trace normalization. Numerically the prediction lands as stated: on the protocol of Theorem~\ref{thm:krr_whitened} ($N{=}300$, $\|P\|_{\mathrm{op}}{=}306$), the uniform draw count $D^*(\rho_D\le\tfrac12)$ climbs $50\to800\to{>}3200$ as $\lambda$ drops $10\to1\to0.1$, while the leverage-tilted sampler reaches it at $D^*=50/200/400/1600$ across $\lambda\in\{10,1,0.1,0.01\}$, tracking $d_{\mathrm{eff}}=12.3/38.6/90.8/163.3$; the identity $\mathbb{E}_\pi[\bar d_\lambda]=d_{\mathrm{eff}}$ holds on a $5\times10^4$-draw pool to within $2\%$, and both samplers keep the $O(D^{-1/2})$ whitened rate (fitted slopes $-0.47$ to $-0.63$; Appendix~\ref{app:exp_details}, \texttt{leverage\_radial\_sampling}).

\begin{remark}[From spectral approximation to risk, and what leverage buys]\label{rmk:risk}
Three reads of Theorems~\ref{thm:krr_whitened} and~\ref{thm:krr_leverage}. \emph{(i) The counts.} $D=O\bigl((1+\|P\|_{\mathrm{op}}/\lambda)\log d_{\mathrm{eff}}\bigr)$ is the plain-i.i.d.-sampling rate, the Gram-side analogue of the $\Omega(1/\lambda)$ feature counts for \emph{uniformly} sampled random Fourier features in KRR \citep{avron2017random,rudi2017generalization,bach2017equivalence}; Theorem~\ref{thm:krr_leverage} improves the leading factor to $d_{\mathrm{eff}}(\lambda)$ itself, the ridge-leverage count of \citet{avron2017random,li2021unified}, by tilting the \emph{joint} radial draw $(t,\omega,\beta)$ rather than the scale alone. What remains open on this axis is purely computational: a deployment-grade approximation of the tilt $\bar d_\lambda$ (pilot estimates, Nystr\"om approximations of $A^{-1}$) with the approximation error folded into the bound. \emph{(ii) Rates.} The condition $\rho\le\tfrac12$ at the regularization schedule $\lambda=\lambda_N$ of the source/capacity conditions is the interface assumption of the random-feature KRR literature, and both counts stay polynomial in $N$ along any such schedule, the leverage count uniformly so ($d_{\mathrm{eff}}\le N$). \emph{(iii) An honest caveat.} A spectral sandwich alone does not bound the fixed-design risk: $K=0$ with $K'=\rho\lambda I$ satisfies the $\rho$-sandwich yet inflates the in-sample variance term from $0$ to $\sigma^2\rho^2/(1+\rho)^2$. A risk theorem therefore needs, in addition, control of the \emph{approximate} kernel's effective dimension $d_{\mathrm{eff}}(K^*_D)$, which a plain sandwich does not supply; with the count side now closed, this is the one missing ingredient between Theorem~\ref{thm:krr_leverage} and a minimax-optimal RAY-KRR statement. The coefficient and objective-value guarantees of Corollary~\ref{cor:krr_highprob} hold for $K^*_D$ as they stand.
\end{remark}

The same conditioning device that carried Corollary~\ref{cor:bernstein_tail} to the deployed estimator (Theorem~\ref{thm:ts_opnorm}) carries Theorem~\ref{thm:krr_whitened} as well.

\begin{corollary}[Ridge-relative KRR condition for the deployed estimator]\label{cor:krr_deployed}
Let $\widehat K_{D,m}$ be the RAY estimate~\eqref{eq:doubly} and fix a target $\rho_0\in(0,1]$. Suppose the sketch is a $(1\pm\eta)$ ridge embedding of $P$ at scale $\lambda$ (Remark~\ref{rmk:ose}) with $\eta\le\rho_0/4$. Then, on the sketch event, with probability at least $1-\delta$ over the radial draws,
\[
D\;\ge\;\frac{64}{\rho_0^{2}}\Bigl(1+\frac{(1+\eta)\|P\|_{\mathrm{op}}}{\lambda}\Bigr)\log\frac{8\tilde d_{\lambda,S}}{\delta}
\qquad\Longrightarrow\qquad
\bigl\|A^{-1/2}(\widehat K_{D,m}-K)A^{-1/2}\bigr\|_{\mathrm{op}}\le\rho_0,
\]
and Corollary~\ref{cor:krr_highprob} holds for the deployed Gram $\widehat K_{D,m}$ verbatim. Here $\tilde d_{\lambda,S}$ is the intrinsic dimension of the sketch-conditioned whitened variance.
\end{corollary}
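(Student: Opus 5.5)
The plan is to split the whitened error by the triangle inequality into a sketch part and a radial part, exactly as in Theorem~\ref{thm:ts_opnorm}, but in the $A$-whitened norm. With $K_S=\widehat P_m\circ R$,
\[
\bigl\|A^{-1/2}(\widehat K_{D,m}-K)A^{-1/2}\bigr\|_{\mathrm{op}}\le\bigl\|A^{-1/2}(K_S-K)A^{-1/2}\bigr\|_{\mathrm{op}}+\bigl\|A^{-1/2}(\widehat K_{D,m}-K_S)A^{-1/2}\bigr\|_{\mathrm{op}}.
\]
The first term is at most $\eta\le\rho_0/4$ by Proposition~\ref{prop:ridge_sketch}, which transfers the ridge embedding from $P$ to $K$ at the same scale. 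It therefore suffices to make the second term at most $3\rho_0/4$; I will aim for $\rho_0/2$, leaving slack.

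For the radial term I condition on the sketch. Then $\widehat P_m\succeq0$ is fixed and $\widehat K_{D,m}$ is an exact-modulation estimator of $K_S$. I change the whitening matrix from $A$ to $A_S:=K_S+\lambda I$. The sandwich of Proposition~\ref{prop:ridge_sketch} gives $A\succeq A_S/(1+\eta)$, so $\|A^{-1/2}A_S^{1/2}\|_{\mathrm{op}}^2\le1+\eta$. Hence
\[
\bigl\|A^{-1/2}EA^{-1/2}\bigr\|_{\mathrm{op}}\le(1+\eta)\bigl\|A_S^{-1/2}EA_S^{-1/2}\bigr\|_{\mathrm{op}},\qquad E=\widehat K_{D,m}-K_S .
\]
I now apply Theorem~\ref{thm:krr_whitened} verbatim to the pair $(\widehat P_m,K_S)$ with ridge $\lambda$. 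Its proof uses only three facts: per-draw PSD-ness of $(\psi\psi^\top)\circ\widehat P_m$, the Schur-multiplier bound with $\widehat P_m$ in place of $P$, and the intrinsic-dimension identity for $A_S^{-1/2}K_SA_S^{-1/2}$, which defines $\tilde d_{\lambda,S}$. All three hold conditionally. The count \eqref{eq:whitened_count}, with $\|\widehat P_m\|_{\mathrm{op}}$ in place of $\|P\|_{\mathrm{op}}$, then gives $\|A_S^{-1/2}EA_S^{-1/2}\|_{\mathrm{op}}\le\rho_1$ with probability $1-\delta$.

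It remains to bound $\|\widehat P_m\|_{\mathrm{op}}$. The ridge embedding gives $\widehat P_m\preceq(1+\eta)P+\eta\lambda I$, so
\[
\frac{\|\widehat P_m\|_{\mathrm{op}}}{\lambda}\le\frac{(1+\eta)\|P\|_{\mathrm{op}}}{\lambda}+\eta ,
\]
and the extra $\eta\le1/4$ is absorbed into the leading $1$ at a constant-factor cost. I choose $\rho_1=\rho_0/2$, and use $(1+\eta)\rho_1\le\tfrac54\cdot\tfrac{\rho_0}{2}<\tfrac34\rho_0$. This halving of the target, together with the absorbed slack, turns the constant $16$ of \eqref{eq:whitened_count} into $64$ (the factor $4$ from $\rho_1^{-2}$) with room for the $1+\eta/4$ absorption. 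Adding the sketch term gives a total below $\rho_0$.

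Since the whitened bound holds, Theorem~\ref{thm:krr_spectral} and Corollary~\ref{cor:krr_highprob} apply verbatim to $\widehat K_{D,m}=ZZ^\top\succeq0$. The main obstacle I expect is the change of whitening matrix from $A$ to $A_S$ together with the constant bookkeeping, namely checking that $(1+\eta)$ and the additive $\eta\lambda$ slack all fit under $64$. If the change of whitening is awkward, the fallback is to run the matrix Bernstein argument directly on $A^{-1/2}(\cdot)A^{-1/2}$ with the sketch-conditioned variance majorant $\frac{2\|\widehat P_m\|_{\mathrm{op}}}{\lambda D}A^{-1/2}K_SA^{-1/2}$, bounding its norm by $1+\eta$ through the same sandwich.
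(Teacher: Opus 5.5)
Your proposal is correct and follows the paper's proof essentially step for step. The shared steps are the triangle split with the sketch part bounded by $\eta$ via Proposition~\ref{prop:ridge_sketch}, Theorem~\ref{thm:krr_whitened} applied conditionally on the sketch to $(\widehat P_m,K_S)$ at target $\rho_0/2$ in the $A_S$-whitened norm, and the factor $1+\eta$ from $A_S\preceq(1+\eta)A$.

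Your handling of $\|\widehat P_m\|_{\mathrm{op}}$ is slightly more careful than the paper's. The paper asserts $\|\widehat P_m\|_{\mathrm{op}}\le(1+\eta)\|P\|_{\mathrm{op}}$, whereas the ridge embedding only gives $(1+\eta)\|P\|_{\mathrm{op}}+\eta\lambda$. The extra $\eta$ is indeed absorbed at $\rho_1=\rho_0/2$, but not by the packaged count~\eqref{eq:whitened_count}, where $16\cdot4=64$ leaves no room. It is absorbed by the slack in the two separate sufficient conditions inside the proof of Theorem~\ref{thm:krr_whitened}, for instance $\kappa_{\lambda,S}\|\widehat P_m\|_{\mathrm{op}}/\lambda\le1+(1+\eta)\|P\|_{\mathrm{op}}/\lambda$ for the square-root term.
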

\begin{proof}
Proposition~\ref{prop:ridge_sketch} gives, on the sketch event, the deterministic bound $\rho_{\mathrm{sk}}:=\|A^{-1/2}(K_S-K)A^{-1/2}\|_{\mathrm{op}}\le\eta$ for the exact-radial Gram $K_S=\widehat P_m\circ R$. Conditioned on the sketch, $\widehat K_{D,m}$ is an exact-modulation estimate of $K_S$, so Theorem~\ref{thm:krr_whitened} applied to the pair $(\widehat P_m,K_S)$ with $\|\widehat P_m\|_{\mathrm{op}}\le(1+\eta)\|P\|_{\mathrm{op}}$ and target $\rho_0/2$, whitened by $A_S=K_S+\lambda I$, gives $\|A_S^{-1/2}(\widehat K_{D,m}-K_S)A_S^{-1/2}\|_{\mathrm{op}}\le\rho_0/2$ under the stated count. Since $A_S\preceq(1+\eta)A$, the $A$-whitened norm is at most $(1+\eta)$ times the $A_S$-whitened one, and the triangle inequality gives $\rho_0\ge(1+\eta)\tfrac{\rho_0}{2}+\eta$ for $\eta\le\rho_0/4$.
\end{proof}
\noindent The sketch requirement is now scale-free, $\eta\le\rho_0/4$, a constant rather than the $\eta\lesssim\lambda/\|P\|_{\mathrm{op}}$ of the absolute-event route, and the size is polynomial in the statistical dimension $s_\lambda(P)$ rather than $\operatorname{rank}(P)$ (Remark~\ref{rmk:ose}). The radial count carries $\log\tilde d_{\lambda,S}$ rather than the worst-case $\log N$: the ridge embedding gives $\tilde d_{\lambda,S}\le(\tilde d_\lambda+\eta N\kappa_\lambda^{-1})/(1-\eta)$, so the count is the genuine effective-dimension count $\log\tilde d_\lambda$ once $\eta\lesssim\tilde d_\lambda/N$, degrading to $\log N$ only as the sketch coarsens. This is the honest content of the transfer: the \emph{sketch accuracy} requirement drops to a constant unconditionally, while the \emph{logarithmic dimension} is intrinsic only when the sketch is fine enough.

None of the matrix-level analysis is special to the $\yat$-kernel. The proofs consume exactly three structural facts (a PSD per-draw Gram that is a Schur product with a bounded-diagonal rank-one factor, a PSD modulation Gram, and a unit-diagonal PSD radial Gram), and every Bernstein--Schur kernel supplies all three.

\begin{theorem}[Class-level matrix concentration and KRR condition]\label{thm:class_bernstein}
Let $k=p\cdot f$ be a Bernstein--Schur kernel in the setting of Theorem~\ref{thm:bernstein_schur} (modulation feature $u$, mixing measure $\nu$ of finite mass $m_f$), let $G_u=[u(x_i)^\top u(x_j)]$ be the modulation Gram, and set $P_u:=m_fG_u$. Then Theorem~\ref{thm:bernstein}, Corollary~\ref{cor:bernstein_tail}, Theorem~\ref{thm:krr_whitened}, Theorem~\ref{thm:krr_leverage}, and Corollary~\ref{cor:krr_highprob} hold verbatim for the class estimator of Theorem~\ref{thm:bernstein_schur} with $P$ replaced by $P_u$. For $k_{\yat,b}$, $P_u=P$.
\end{theorem}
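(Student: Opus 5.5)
The plan is to show that the proofs of Theorem~\ref{thm:bernstein}, Corollary~\ref{cor:bernstein_tail}, Theorem~\ref{thm:krr_whitened} and Theorem~\ref{thm:krr_leverage} use the kernel only through three structural facts, and then to verify those facts for a general Bernstein--Schur kernel with $P_u=m_fG_u$ in place of $P$. Corollary~\ref{cor:krr_highprob} is deterministic given the whitened event, so it follows once Theorem~\ref{thm:krr_whitened} is transferred. Concretely, for the class estimator of Theorem~\ref{thm:bernstein_schur} I write the per-draw Gram as
\[
K^{(j)}=(\psi_j\psi_j^\top)\circ P_u,\qquad \psi_j[i]=\sqrt2\cos(\omega_j^\top x_i+\beta_j).
\]
This holds because $z(x_i)^\top z(x_k)$ equals $\frac1D\sum_j m_f\,2\cos(\cdot)\cos(\cdot)\,u(x_i)^\top u(x_k)$. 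The estimate is therefore $K_D=\frac1D\sum_jK^{(j)}$, and the three facts are as follows.

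\emph{Fact (a): per-draw positivity with bounded rank-one factor.} $K^{(j)}\succeq0$ as a Schur product of two PSD matrices. The rank-one factor has diagonal $\max_i\psi_j[i]^2\le2$, so Lemma~\ref{lem:schur} gives $\|K^{(j)}\|_{\mathrm{op}}\le2\|P_u\|_{\mathrm{op}}$. This yields $(K^{(j)})^2\preceq2\|P_u\|_{\mathrm{op}}K^{(j)}$, and after centering it yields the a.s.\ bound $3\|P_u\|_{\mathrm{op}}/D$.

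\emph{Fact (b): unbiasedness in matrix form.} I need $\mathbb{E}K^{(j)}=K=R_f\circ P_u$, where $R_f=[f(\|x_i-x_k\|^2)/m_f]$. This is the entrywise unbiasedness of Theorem~\ref{thm:bernstein_schur}: conditioning on $T$, $\mathbb{E}[\psi\psi^\top\mid T]=[e^{-T\|x_i-x_k\|^2}]$, and averaging over $T\sim\nu/m_f$ gives $R_f$. The matrix $R_f$ is PSD as a mixture of Gaussian Grams, and it has unit diagonal because $f(0)=m_f$. Lemma~\ref{lem:schur} then gives $\|K\|_{\mathrm{op}}\le\|P_u\|_{\mathrm{op}}$. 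This is exactly the role the unit-diagonal $R$ plays for $k_{\yat,b}$, and it is where the finite-mass hypothesis enters.

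With (a) and (b) in hand, I rerun each proof line by line with $P\mapsto P_u$. For the variance, $\sum_j\mathbb{E}Y_j^2\preceq\frac1D\mathbb{E}(K^{(j)})^2\preceq\frac{2\|P_u\|_{\mathrm{op}}}{D}K$, which gives $v$ and, with the a.s.\ bound $L$, the intrinsic Bernstein bounds of Theorem~\ref{thm:bernstein} and Corollary~\ref{cor:bernstein_tail}. For the whitened version, conjugation by $A^{-1/2}$ preserves $K^{(j)}\succeq0$. The a.s.\ bound becomes $(1+2\|P_u\|_{\mathrm{op}}/\lambda)/D$, using $\|A^{-1/2}K^{(j)}A^{-1/2}\|\le\|K^{(j)}\|/\lambda$ and $\|A^{-1/2}KA^{-1/2}\|\le1$. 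The variance majorant becomes $\frac{2\|P_u\|_{\mathrm{op}}}{\lambda D}A^{-1/2}KA^{-1/2}$, whose intrinsic dimension is $\tilde d_\lambda$ regardless of which kernel produced $K$. For the leverage theorem, define $\bar d_\lambda(\theta)=\psi_\theta^\top(A^{-1}\circ P_u)\psi_\theta$. Fact (b) gives $\mathbb{E}_\pi\bar d_\lambda=\operatorname{tr}(A^{-1}K)=d_{\mathrm{eff}}$. The weighted whitened draw is PSD with trace $d_{\mathrm{eff}}$, so its norm is at most $d_{\mathrm{eff}}$, and the remainder of that proof is unchanged.

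The main obstacle is the audit itself: confirming that no step in the earlier appendix proofs secretly used $k_{\yat,b}$-specific facts, such as the $\Exp(\varepsilon)$ law, the closed form of $R$, or $\|p_b\|=R^2+b$. The care point is the base law $\pi$ in Theorem~\ref{thm:krr_leverage}. It must become $T\sim\nu/m_f$, and I would check that the tilted law is still a probability measure; that follows from $\bar d_\lambda\ge0$ and its mean being $d_{\mathrm{eff}}>0$, the case $K=0$ being trivial. Finally, for $k_{\yat,b}$ we have $u=p_b$ and $m_f=1/\varepsilon$, so $P_u=G_{p_b}/\varepsilon=P$, which is the closing claim.
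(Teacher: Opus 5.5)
Your proposal is correct and takes essentially the same route as the paper. The paper's proof verifies the same facts with $P_u$ in place of $P$: the per-draw Gram $K^{(j)}=(\psi_j\psi_j^\top)\circ P_u$ is PSD with $\|K^{(j)}\|_{\mathrm{op}}\le 2\|P_u\|_{\mathrm{op}}$; unbiasedness goes through a unit-diagonal PSD radial Gram $R_u$, so that $\|K\|_{\mathrm{op}}\le\|P_u\|_{\mathrm{op}}$; and the leverage theorem needs only the extra identity $\mathbb{E}_\pi[\psi\psi^\top]=R_u$. The one cosmetic difference is that the paper lists the variance step ($M^2\preceq\|M\|_{\mathrm{op}}M$ and $MA^{-1}M\preceq\lambda^{-1}\|M\|_{\mathrm{op}}M$ for PSD $M$) as a separate third fact, whereas you fold it into the line-by-line rerun.
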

\noindent The proof (Appendix~\ref{app:spectral}) verifies the three facts and nothing else. The statement upgrades the class from ``unbiased with entrywise control'' (Theorem~\ref{thm:bernstein_schur}) to the full operator-norm and kernel-ridge guarantee set; in particular the polynomially modulated Mat\'ern-$\tfrac12$ kernel $(x^\top w+b)^q\,e^{-\|x-w\|/\sigma}$, whose radial scale has the exact two-line sampler of Table~\ref{tab:bernstein_schur}, now carries them all.

Read across $b$, these bounds expose what the bias costs. Setting $b=0$ recovers the unbiased bounds with $R^4$ and $R^8$; for $b>0$ they inflate by powers of $(1+b/R^2)$, the bias enlarging the effective data radius from $R^2$ to $R^2+b$. This is the price side of the trade whose benefit, the finite-difference IMQ reach and the practical bias shift, motivated $k_{\yat,b}$ over $k_{\yat}$ in the first place (Section~\ref{sec:intro}); the $(R^2+b)^4$ variance law is not merely an artifact of the proof but is borne out empirically in Appendix~\ref{sec:exp_bias}.

The $(R^2+b)^4$ blow-up is removable by normalizing the modulation feature to $q_b(x)=p_b(x)/(\|x\|^2+b)$ (unit norm): the resulting \emph{normalized estimator} has variance bounded by $3/(2D\varepsilon^2)$, free of $R$ and $b$ (Proposition~\ref{prop:normalized}, Appendix~\ref{app:proofs}). It is not an approximation to $k_{\yat,b}$ but an unbiased estimator of a \emph{different}, cosine-rescaled kernel, so the gain is stability, not a free fix.

Finally, the Gram error has a structure that explains why the product is worth keeping even when the modulation is itself compressed. Writing $K=P\circ H$ with $P_{ij}=p_b(x_i,x_j)$, $H_{ij}=h_\varepsilon(x_i,x_j)$:

\begin{proposition}[Modulation--radial error decomposition]\label{prop:gate}
With $\widehat P_m$ any modulation approximation and $\widehat H_D$ the radial estimate, $\widehat K_{D,m}=\widehat P_m\circ\widehat H_D$ satisfies
\[
\widehat K_{D,m}-K=\underbrace{P\circ(\widehat H_D-H)}_{\text{radial error, modulated by finite feature}}+\underbrace{(\widehat P_m-P)\circ H}_{\text{modulation error, localized by proximity}}+\underbrace{(\widehat P_m-P)\circ(\widehat H_D-H)}_{\text{interaction}}.
\]
For exact modulation ($\widehat P_m=P$) this collapses to $\widehat K_D-K=P\circ(\widehat H_D-H)$, so entrywise $|\widehat K_{ij}-K_{ij}|=p_b(x_i,x_j)\,|\widehat H_{ij}-H_{ij}|$.
\end{proposition}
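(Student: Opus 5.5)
The identity is purely algebraic, so the plan is a direct expansion using bilinearity of the Schur product. I will write $\widehat P_m=P+E_P$ and $\widehat H_D=H+E_H$, with $E_P:=\widehat P_m-P$ and $E_H:=\widehat H_D-H$. Since $\circ$ is entrywise multiplication, it is bilinear and commutative. Expanding gives
\[
\widehat P_m\circ\widehat H_D=(P+E_P)\circ(H+E_H)=P\circ H+P\circ E_H+E_P\circ H+E_P\circ E_H .
\]
The Schur factorization~\eqref{eq:schur} gives $K_{ij}=p_b(x_i,x_j)h_\varepsilon(x_i,x_j)$, that is, $K=P\circ H$. Subtracting $K$ then leaves exactly the three displayed terms. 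No probabilistic input is needed; the identity holds pathwise for every realization of the sketch and the radial draws.

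For the exact-modulation specialization I set $E_P=0$. The second and third terms vanish, so $\widehat K_D-K=P\circ(\widehat H_D-H)$. Reading this entrywise gives $\widehat K_{ij}-K_{ij}=p_b(x_i,x_j)\,(\widehat H_{ij}-H_{ij})$. Because $p_b(x_i,x_j)=(x_i^\top x_j+b)^2\ge0$, taking absolute values yields $|\widehat K_{ij}-K_{ij}|=p_b(x_i,x_j)\,|\widehat H_{ij}-H_{ij}|$ with no stray sign.

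One point deserves care as the only mild obstacle, and it is a normalization convention rather than a real difficulty. Elsewhere the paper places the factor $1/\varepsilon$ on the modulation, writing $P=[(x_i^\top x_j+b)^2/\varepsilon]$, and uses the unit-diagonal $R$ for the radial part. In the present proposition, $P_{ij}=p_b$ and $H_{ij}=h_\varepsilon$. I will note that either split works, provided $\widehat P_m$ and $\widehat H_D$ estimate the correspondingly scaled factors so that their product has mean $K$. For the deployed estimator~\eqref{eq:doubly}, the scalar $1/\varepsilon$ can be moved between the factors freely, since $(cA)\circ B=A\circ(cB)$. The decomposition is therefore convention-independent.
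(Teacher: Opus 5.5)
Your proposal is correct and is the same argument as the paper, whose proof is the one-line instruction to expand $(P+E_P)\circ(H+E_H)$ and subtract $K=P\circ H$. Your two additions are also correct: $p_b\ge0$ is what lets it come out of the absolute value, and the placement of the $1/\varepsilon$ factor is convention-independent.
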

\noindent (Proof: expand $(P+E_P)\circ(H+E_H)$ and subtract $K$.) The radial Monte-Carlo noise is Schur-scaled by the modulation; under bounded norms, and literally for the normalized variant where $G_{ij}=(x_i^\top x_j+b)^2/((\|x_i\|^2+b)(\|x_j\|^2+b))\in[0,1]$, this acts as an alignment gate, since $p_b$ otherwise also carries norm and bias scale. Appendix~\ref{sec:exp_gate} shows this modulation suppresses false positives even when the polynomial factor is sketched.

\subsection{Where the dimension enters, and what to compare against}
\label{sec:comparison}

Table~\ref{tab:rf_comparison} states the sample complexity for a $\tau$-approximation of the $N$-point Gram matrix. We are careful about the setting, because it is easy to overclaim. At this \emph{dataset level} (a union bound over the $\le N^2$ pairs, via Hoeffding), the RAY radial count is free of $d$ (Corollary~\ref{cor:sample_complexity}). But so is standard RFF: a union bound over a finite point set removes the explicit $d$ for the Gaussian and IMQ kernels as well. Dimension-freeness of the sample count is therefore a property of the dataset-level analysis, not a special feature of our construction.

\begin{table}[h]
\centering
\caption{Sample complexity for a dataset-level $\tau$-approximation of the $N$-point Gram matrix (Hoeffding + union bound). All counts are free of the input dimension $d$ at this level; the constant is the squared per-draw range. Standard RFF uses an absolutely bounded cosine feature, so its count is free of $R$ and the bandwidth $\sigma$; the IMQ--Laplace estimator carries the radial mass $m_f=1/\varepsilon$, giving $\varepsilon^{-2}$. RAY's count is exactly that radial $\varepsilon^{-2}$ times the \emph{unbounded} polynomial modulation range $(R^2+b)^4$, the honest price of the alignment numerator, removed only by normalizing the modulation (Proposition~\ref{prop:normalized}). The methods differ \emph{qualitatively} in applicability, since RFF needs shift-invariance and sketching a dot-product form, and $k_{\yat,b}$ has neither. $R$ is the data radius.}
\label{tab:rf_comparison}
\begin{tabular}{lll}
\toprule
Kernel (structure) & Method & $D$ for $\tau$-approx \\
\midrule
Gaussian (shift-invariant) & RFF & $O\!\bigl(\tau^{-2}\log N\bigr)$ \\[2pt]
IMQ $h_\varepsilon$ (shift-invariant) & RFF--Laplace & $O\!\bigl(\varepsilon^{-2}\,\tau^{-2}\log N\bigr)$ \\[2pt]
$k_{\yat,b}$ (neither shift-inv.\ nor dot-product) & RAY (ours) & $O\!\bigl(\tfrac{(R^2+b)^4}{\varepsilon^2}\,\tau^{-2}\log N\bigr)$ \\
\bottomrule
\end{tabular}
\end{table}

\begin{table}[h]
\centering
\caption{Approximation taxonomy. Random-feature methods draw feature parameters from kernel-defined distributions before seeing the dataset and are unbiased for arbitrary pairs. Nystr\"om methods are data-dependent deployment baselines: they choose columns, centers, or leverage-sampled landmarks from the observed training set and approximate the empirical Gram matrix or downstream predictor.}
\label{tab:method_taxonomy}
{\small
\begin{tabular}{llll}
\toprule
Method & Data-dependent? & Unbiased for $k(x,w)$? & Streaming primal? \\
\midrule
Exact modulation (analyzable limit) & No & Yes & Yes \\
Gaussian / IMQ RFF & No & Yes & Yes \\
RAY (sketched modulation) & No (sketch-random) & Yes, over all draws & Yes \\
Uniform Nystr\"om & Yes (columns from $X$) & No & No \\
k-means Nystr\"om & Yes (clustered $X$) & No & No \\
Ridge-leverage Nystr\"om & Yes (scores from $X$) & No & No \\
\bottomrule
\end{tabular}
}
\end{table}

What RAY genuinely buys is therefore not a smaller $d$-dependence than RFF, but \emph{applicability}: it brings a kernel that is neither shift-invariant nor a dot-product, on which RFF and polynomial sketching cannot be run, into the same dataset-level, $d$-free regime that RFF enjoys for the kernels it does cover, at the cost of a larger constant: $(R^2+b)^4/\varepsilon^2$ in place of the absolute constant of a bounded-cosine RFF, the unbounded polynomial modulation being the one factor RFF does not carry. The explicit $d$ familiar from RFF reappears only in the \emph{domain-level} (uniform-over-a-$d$-ball) analysis through the covering number; there RAY's inner Gaussian features incur it as well, so RAY is not $d$-free in that stronger sense either. Nystr\"om occupies a different category (Table~\ref{tab:method_taxonomy}): it is an adaptive landmark approximation to the observed sample, not a data-independent random-feature map. Its error can degrade with dimension through the spectral-decay rate $O(m^{-2s/d})$, which is the contrast our matched-landmark experiments target (Section~\ref{sec:exp_dimfree}, Table~\ref{tab:gram_error}), but its adaptivity can also make it a stronger deployment baseline at fixed representation size (Appendix~\ref{sec:exp_faircost}).

\subsection{Variance reduction}
\label{sec:variance_reduction}

Three optional refinements of the sampling lower the variance further; all are evaluated in Appendix~\ref{app:further}.

\paragraph{Quasi-Monte Carlo over $t$.} Replace $t_j\sim\Exp(\varepsilon)$ by quasi-random samples through the inverse CDF $t_j=-\varepsilon^{-1}\log u_j$, $u_j$ from a Sobol or Halton sequence. The radial integrand $t\mapsto e^{-t(\varepsilon+\|x-w\|^2)}$ is $C^\infty$ and of bounded variation, so by Koksma--Hlawka \citep{kuo2016application} the \emph{$t$-integral's} QMC error is $O((\log D)^s/D)$. We emphasize, however, that this is the rate of the outer integral only: the estimator also carries the inner Gaussian RFF Monte-Carlo noise, which is not quasi-randomized, and empirically (Appendix~\ref{sec:exp_var}) it is this inner noise that sets the overall convergence, so QMC delivers a constant-factor variance reduction rather than a faster rate. Deterministic Sobol/Halton points also trade the exact i.i.d.\ unbiasedness of Theorem~\ref{thm:unbiased} for a bounded deterministic integration error; \emph{randomized} QMC (a scrambled or shifted sequence) restores unbiasedness in expectation over the randomization while keeping the improved error, and is the variant we recommend when unbiasedness must hold exactly.

\paragraph{Orthogonal features for the inner approximation.} For each scale $t_j$, drawing the inner frequencies as scaled rows of a Haar-random orthogonal matrix \citep{yu2016orthogonal} reduces the inner-loop variance without affecting unbiasedness. The combined QMC-outer / orthogonal-inner estimator achieves the best of both. At the recommended $D'=1$ there is no inner block to decorrelate, and the right target is instead the \emph{marginal} spectral measure of the radial factor (Section~\ref{sec:step-rff}): drawing the $D$ frequencies $\omega_j$ jointly orthogonal or quasi-random against that single isotropic distribution, with radii coupled to the inverse radial CDF, decorrelates the draws the current i.i.d.\ estimator leaves independent. This is free variance reduction and the natural form of the refinement when $D'=1$.

\paragraph{Importance sampling over $t$.} Sampling $t\sim\Exp(\varepsilon+\eta)$ with weight $w_t=\frac{\varepsilon}{\varepsilon+\eta}e^{\eta t}$ keeps the estimator unbiased and reduces variance for nearby pairs ($\|x-w\|^2\le\eta$) by up to $(\varepsilon/(\varepsilon+\eta))^2$, \emph{provided $\eta$ is not too large}: the importance estimate of the radial factor at squared distance $r$ has a finite second moment only for $\eta<\varepsilon+2r$ (Appendix~\ref{app:further}), so a single proposal safe for every pair, including $r=0$, requires $\eta<\varepsilon$. The proposal is data-independent and so practical within that range, if suboptimal for the full pairwise-distance spread.

\paragraph{Deterministic radial quadrature.} The outer scale integral can be discretized deterministically rather than by Monte Carlo, replacing the across-scale variance of Proposition~\ref{prop:budget} by a uniform, controllable bias. This answers the kernel-quadrature question of Section~\ref{sec:discussion} for the radial factor in the affirmative.

\begin{proposition}[Positive-weight radial quadrature]\label{prop:quadrature}
The radial factor is the Laplace transform $h_\varepsilon(r)=\int_0^\infty e^{-\varepsilon t}e^{-tr}\,dt=1/(\varepsilon+r)$. For every tolerance $\tau>0$ there exist, by exponential-sum approximation of $1/x$ on a bounded interval \citep{beylkin2010approximation}, $D=O\!\bigl(\log(1+4R^2/\varepsilon)\,\log(1/\tau)\bigr)$ positive nodes $t_1,\dots,t_D>0$ and positive weights $w_1,\dots,w_D>0$ with
\begin{equation}\label{eq:quad_bias}
\sup_{r\in[0,4R^2]}\Bigl|\textstyle\sum_{j=1}^D w_j\,e^{-t_j r}-h_\varepsilon(r)\Bigr|\le\tau.
\end{equation}
Replacing the random scales $t_j\sim\Exp(\varepsilon)$ by these nodes and the average by $\sum_j w_j(\cdot)$ leaves a surrogate radial Gram that is a nonnegative combination of Gaussian kernels, so the surrogate $\widehat k=\bigl(\sum_j w_j e^{-t_j\|\cdot-\cdot\|^2}\bigr)\cdot p$ is positive definite, and the entrywise error is the deterministic bias $\sup_{i,j}|\widehat k_{ij}-k_{ij}|\le\tau(R^2+b)^2$, free of the $O(D^{-1/2})$ radial fluctuation of Theorem~\ref{thm:uniform}.
\end{proposition}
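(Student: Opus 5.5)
The proof splits into three parts: an approximation-theoretic construction, a positivity argument, and an entrywise error bound.

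\emph{Construction.} Since $r=\|x_i-x_j\|^2\le(\|x_i\|+\|x_j\|)^2\le4R^2$, only $r\in[0,4R^2]$ matters. Substituting $s=\varepsilon+r\in[\varepsilon,\varepsilon+4R^2]$ gives $h_\varepsilon(r)=1/s$. This is a function on an interval of condition number $\kappa=1+4R^2/\varepsilon$. I will invoke the exponential-sum approximation of $1/s$ from \citet{beylkin2010approximation}. It yields positive nodes $\tau_j$ and positive weights $w_j$ with $\sup_{s\in[\varepsilon,\varepsilon+4R^2]}|\sum_j w_j e^{-\tau_j s}-1/s|\le\tau$ using $D=O(\log\kappa\,\log(1/\tau))$ terms.

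If the cited result is stated with relative error, or on the normalized interval $[1,\kappa]$, I rescale $s=\varepsilon\sigma$. A relative error $\tau\varepsilon$ becomes an absolute error $\le\tau$ because $1/s\le1/\varepsilon$. This only changes $\log(1/\tau)$ to $\log(1/(\tau\varepsilon))$, so I will state the constants accordingly, absorbing $\varepsilon$ into the constant.

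The construction behind the citation is the one to sketch. Apply a trapezoidal rule to $1/s=\int_{\mathbb{R}}e^{-se^{u}+u}\,du$ with step $h\sim1/\log(1/\tau)$, truncated to $u\in[-\log(\kappa/\tau)-O(1),\,\log\log(1/\tau)+O(1)]$ after rescaling. Every node is $t_j=e^{u_j}>0$ and every weight is $w_j=he^{u_j}>0$ automatically. The two truncation tails and the discretization error (analytic strip, exponentially small in $1/h$) are each $\le\tau/3$. The node count is the interval length divided by $h$, namely $O(\log(\kappa/\tau)\log(1/\tau))$; the sharper $O(\log\kappa\log(1/\tau))$ is taken from the citation. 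Finally, set $t_j:=\tau_j$ and absorb $e^{-\tau_j\varepsilon}$ into $w_j$, which keeps the weights positive and gives \eqref{eq:quad_bias} in the variable $r$.

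\emph{Positivity.} Each $e^{-t_j\|x-w\|^2}$ is a Gaussian kernel, which is PD. A positive combination of PD kernels is PD, and its Schur product with the PD modulation $p_b$ is PD by the Schur product theorem, exactly as in Section~\ref{sec:step-schur}.

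\emph{Error.} Entrywise, $|\widehat k_{ij}-k_{ij}|=p_b(x_i,x_j)\,|\sum_jw_je^{-t_jr_{ij}}-h_\varepsilon(r_{ij})|$. This is the exact-modulation collapse of Proposition~\ref{prop:gate}. By Cauchy--Schwarz, $p_b(x_i,x_j)\le(R^2+b)^2$ via $|x_i^\top x_j+b|\le R^2+b$, and this product is bounded by $\tau(R^2+b)^2$. No randomness enters, so there is no $D^{-1/2}$ term.

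The main obstacle is pinning down the node count with the precise logarithmic dependence and the positivity of all weights, together with correct bookkeeping between relative and absolute error under the rescaling. I would lean on the cited theorem for the stated count, and use the trapezoid-on-$e^{u}$ argument as a self-contained fallback that gives positivity trivially at the cost of a slightly weaker $\log(\kappa/\tau)$ factor.
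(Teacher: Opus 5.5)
Your proposal is correct and is essentially the paper's proof: invoke the Beylkin--Monz\'on exponential-sum bound for $1/s$ on the shifted interval $[\varepsilon,\varepsilon+4R^2]$, get positive definiteness from a positive combination of Gaussian kernels Schur-multiplied by $p_b$, and multiply the uniform radial error by $p_b(x_i,x_j)\le(R^2+b)^2$. Your extra bookkeeping makes explicit what the paper's $O(\cdot)$ leaves implicit: absorbing $e^{-\tau_j\varepsilon}$ into the weights, and noting that a relative-error guarantee gives absolute error only with $\log(1/(\tau\varepsilon))$ in place of $\log(1/\tau)$. Your trapezoidal fallback is also a valid self-contained route to positive nodes and weights, at the cost of the weaker $\log(\kappa/\tau)\log(1/\tau)$ count.
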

\begin{proof}
Equation~\eqref{eq:quad_bias} is the cited exponential-sum bound applied to $1/x$ on $[\varepsilon,\varepsilon+4R^2]$ after the shift $x=\varepsilon+r$; positive weights keep each $e^{-t_j\|x-w\|^2}$ a Gaussian kernel, so the nonnegative combination and its Schur product with the PSD modulation $p$ are PSD. Multiplying~\eqref{eq:quad_bias} by the modulation, bounded by $(R^2+b)^2$, gives the entrywise bias.
\end{proof}
\noindent Each fixed-scale Gaussian still needs an inner feature, so the deployed dimension is unchanged; what the nodes remove is the outer scale sampling. The empirical scoping is sharp (Appendix~\ref{app:exp_details}, \texttt{radial\_quadrature}): with the Gaussian factor computed \emph{exactly} (the Gram model), a $D$-node Gauss--Laguerre rule reaches machine precision by $D=32$ where Monte-Carlo is still at $5.4\times10^{-2}$ RMSE; but in the deployed estimator the inner Fourier noise dominates (Appendix~\ref{sec:exp_var}), and a naive equal-allocation node rule is in fact \emph{worse} than i.i.d.\ sampling at matched draws ($0.38$ vs.\ $0.17$ at $D=32$), because the nodes spend budget on scales the random law visits in proportion to their weight. The nodes are therefore the right tool for exact-Gaussian surrogates and small-Gram settings, and as the bounded scale sets that positive features require (Proposition~\ref{prop:pos_dichotomy}, through the largest node), not a drop-in replacement for the trigonometric sampler. The count grows only \emph{logarithmically} with the radial dynamic range $4R^2/\varepsilon$; the price is the loss of exact unbiasedness, traded for the uniform bias~\eqref{eq:quad_bias}, and the unbiased sampler remains the variant for the streaming and analysis claims.

\section{Experiments}
\label{sec:experiments}

The experiments answer four questions in order. First, does RAY approximate the off-sphere kernel in the regime where neither shift-invariant RFF nor dot-product sketches apply? Second, does TensorSketch remove the $O(d^2)$ modulation floor while preserving the predicted radial-plus-sketch error split? Third, is the alignment$\times$proximity kernel useful exactly when the target couples those two factors? Finally, does the resulting feature map run in the streaming, Gram-impossible regime it is designed for? Cost-matched caveats, gating diagnostics, attention fidelity, sphere-normalized sanity checks, and other ablations are deferred to Appendix~\ref{app:further}. This keeps the main section focused on the empirical spine; the appendix records the negative and scoped results needed to interpret it honestly.

\subsection{Off-sphere validation on a bounded ball}
\label{sec:exp_offsphere}

The key test is off-sphere (Figure~\ref{fig:offsphere}): a bounded ball where $k_{\yat,b}$ is not a function of $x^\top w$ alone, so no dot-product reduction exists. We sample $N=1000$ points with directions uniform on $\mathbb{S}^{d-1}$ and radii uniform on $[0.25,1]$ (a dimension-independent spread of norms; uniform-in-ball sampling would collapse toward the sphere as $d$ grows). Approximating the exact Gram with RAY (flat $D'=1$, $b=1$, $\varepsilon$ the median squared distance, $5$ seeds) reproduces the $O(1/\sqrt D)$ rate, while uniform- and k-means-landmark Nystr\"om degrade with $d$ (Table~\ref{tab:offsphere}): k-means helps by a constant but not the trend, and RAY at $D=1000$ is below both by $d=8$. This matches radial draws against landmarks, not feature dimension (RAY's is $D\,d_b$); Appendix~\ref{sec:exp_faircost} gives the cost-matched comparison. Bounded norm, not sphere normalization, is what the estimator requires.

\begin{table}[h]
\centering
\caption{Relative Frobenius Gram error on an \emph{off-sphere} bounded ball ($\|x\|\in[0.25,1]$, varying norms; $N=1000$, $b=1$, mean over $5$ seeds). RAY follows the $O(1/\sqrt D)$ rate at every dimension; both uniform and k-means Nystr\"om ($m=100$) worsen as $d$ grows, k-means by a smaller constant. Standard deviation over the $5$ seeds is $\le0.015$ for the RAY columns and $\le0.003$ for the Nystr\"om columns, so the $d\ge8$ ordering (RAY below both Nystr\"om variants) is outside seed noise.}
\label{tab:offsphere}
\begin{tabular}{lccccc}
\toprule
& \multicolumn{3}{c}{RAY, radial samples $D$} & uniform Nystr\"om & k-means Nystr\"om \\
\cmidrule(lr){2-4}
$d$ & $10$ & $100$ & $1000$ & $m=100$ & $m=100$ \\
\midrule
$2$  & $0.36$ & $0.09$ & $0.040$ & $0.001$ & $\mathbf{0.000}$ \\
$8$  & $0.55$ & $0.14$ & $0.044$ & $0.073$ & $0.051$ \\
$16$ & $0.48$ & $0.17$ & $0.051$ & $0.092$ & $0.077$ \\
$32$ & $0.53$ & $0.17$ & $0.057$ & $0.103$ & $0.093$ \\
\bottomrule
\end{tabular}
\end{table}

\noindent The adaptive \emph{ridge-leverage-score} Nystr\"om \citep{musco2017recursive}, the strongest standard baseline, does not change the picture: it tracks uniform Nystr\"om ($0.072$/$0.094$/$0.110$ at $d=8$/$16$/$32$), k-means stays the best landmark variant ($0.051$/$0.079$/$0.097$), and all three worsen with $d$ while RAY at $D=1000$ remains below them by $d=16$. The behavior persists into genuinely high dimension: on the same off-sphere ball at $d\in\{128,256,512\}$, RAY follows the $O(1/\sqrt D)$ rate (fitted slopes $-0.41$, $-0.51$, $-0.52$) with relative error $0.05$--$0.06$ at $D=1000$, unchanged in order across a $4\times$ range of $d$.

The fidelity carries to real data kept off-sphere. Standardizing and scaling each dataset by its maximum row norm (so $\|x\|\le1$ with \emph{varying} norms, not on the sphere), RAY at $D=128$ tracks the exact $\yat$-kernel's downstream accuracy on \texttt{digits} ($0.983$ vs.\ exact $0.986$) and on a larger \texttt{covtype} subsample ($N{=}3000$, $d{=}54$: $0.695$ vs.\ exact $0.682$), in both cases matching or beating Gaussian RFF and k-means Nystr\"om at the same budget. So the sphere-normalized ridge-regression results (Appendix~\ref{app:further}) are not an artifact of normalization: RAY reproduces the exact kernel off-sphere as well.

\subsection{Compressing the polynomial factor with TensorSketch}
\label{sec:exp_ts}

RAY sketches the degree-2 modulation to escape the $O(d^2)$ feature floor (Section~\ref{sec:step-sketch}); here we validate the error decomposition and exercise the accuracy--cost trade. The variance splits into a radial Monte-Carlo term ($\sim D^{-1/2}$), a polynomial-sketch term ($\sim m^{-1/2}$), and their product (Proposition~\ref{prop:ts_variance}). For a fixed pair ($4000$ repetitions) the split is clean: varying $D$ at $m=256$ drives only the radial term ($0.09\to8.5\times10^{-4}$), varying $m$ at $D=1000$ only the sketch term ($4.2\times10^{-3}\to1.4\times10^{-4}$), and the empirical variance matches the three-term formula throughout (ratio $0.94$--$1.06$). The operator-norm error of Theorem~\ref{thm:ts_opnorm} shows the same structure (Figure~\ref{fig:ts_opnorm}): at $m{=}128$ the radial term falls as $D^{-1/2}$ ($40.5\to3.6$ over $D{=}10\to1000$) while the sketch term is a $D$-independent floor that decays with $m$ ($19.5\to7.4$ as $m{=}64\to256$), vanishing as $m\to\infty$.

\begin{figure}[t]
\centering
\includegraphics[width=\textwidth]{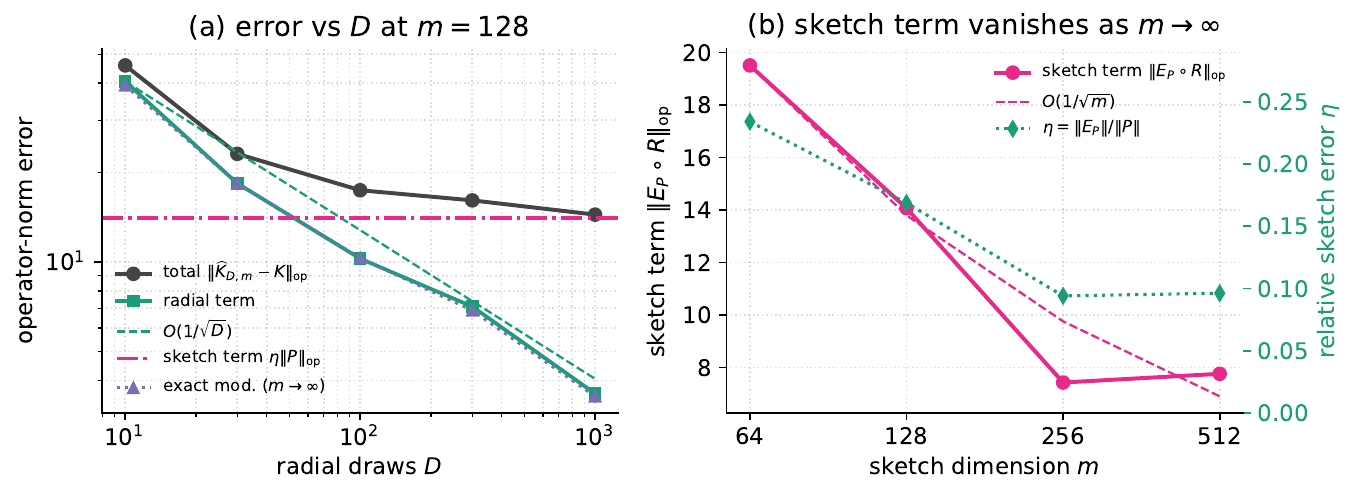}
\caption{Operator-norm error of the deployed (doubly-randomized) RAY estimator, validating Theorem~\ref{thm:ts_opnorm} (off-sphere, $d=16$, $N=300$, $\|P\|_{\mathrm{op}}=186$). \textbf{(a)} At fixed sketch size $m{=}128$, the radial term falls as $O(1/\sqrt D)$ while the sketch term $\eta\|P\|_{\mathrm{op}}$ is a $D$-independent floor; the total decays to that floor, and the $m\to\infty$ (exact-modulation) curve is the zero-floor limit. \textbf{(b)} The sketch term $\|E_P\circ R\|_{\mathrm{op}}$ and the relative sketch error $\eta=\|E_P\|_{\mathrm{op}}/\|P\|_{\mathrm{op}}$ both shrink with $m$, so increasing $m$ recovers exact modulation (Corollary~\ref{cor:bernstein_tail}). The single additive sketch term is the entire price of randomizing the modulation.}
\label{fig:ts_opnorm}
\end{figure}

At finite $m$ the compression recovers most of the efficiency the exact feature loses to its $O(d^2)$ size: on sphere-normalized digits ($d=64$) at $M=d_b$, RAY ($m=128$) reaches $0.977$ where exact modulation, starved to a single radial draw, scores $0.928$, near the optimal rank-$M$ ceiling $0.986$ (sanity check, Table~\ref{tab:ts}, Appendix~\ref{app:further}). In low dimension there is nothing to compress (california, $d_b=45$): the sketch only adds noise and one takes $m\to\infty$. RAY is the default scalable map; exact modulation is its low-$d$ endpoint, not a separate method.

At a fixed feature dimension $M=D(m{+}d{+}1)$ the sketch size $m$ trades against the radial-draw count $D$, and the optimum is interior (Table~\ref{tab:dm}), at the closed-form $m^\star=\sqrt{BM/A}$ derived from the two-term variance (Proposition~\ref{prop:optimal_m}). The best allocation for Gram fidelity is an intermediate sketch size; downstream KRR error instead favors smaller $m$ (more radial draws), so the operating point depends on whether Gram fidelity or prediction is the goal. This operationalizes the radial-vs-sketch split of Theorem~\ref{thm:ts_opnorm}.

\begin{table}[h]
\centering
\caption{Deployed-RAY allocation at fixed feature dimension $M=8192$ on an off-sphere ball (relative Frobenius Gram error, mean over $3$ seeds): at fixed $M=D(m{+}d{+}1)$ the sketch size $m$ trades against the radial-draw count $D$. The error is minimized at an intermediate $m$ (small $m$ leaves a large sketch term, large $m$ starves the radial draws), and the optimal $m$ grows slowly with $d$.}
\label{tab:dm}
\begin{tabular}{lccc}
\toprule
$m$ & $d{=}16$ & $d{=}64$ & $d{=}256$ \\
\midrule
$16$  & $0.212$ & $0.252$ & $0.424$ \\
$32$  & $\mathbf{0.173}$ & $\mathbf{0.216}$ & $0.397$ \\
$64$  & $0.173$ & $0.216$ & $0.305$ \\
$128$ & $0.185$ & $0.228$ & $0.376$ \\
$256$ & $0.240$ & $0.281$ & $0.343$ \\
$512$ & $0.520$ & $0.488$ & $0.423$ \\
\bottomrule
\end{tabular}
\end{table}

Sketching also removes the $O(d^2)$ representation floor outright (Table~\ref{tab:runtime_d}). At $d{=}1024$ the exact-modulation feature needs $33.7$\,GB per the $N{=}1000$ representation and cannot be built, while the sketched feature is $74$\,MB and builds in $28$\,ms, a direct demonstration that sketching removes the $O(d^2)$ floor (Limitation~(v)).

\begin{table}[h]
\centering
\caption{Feature-construction cost vs.\ dimension $d$ at fixed radial draws $D{=}8$, sketch $m{=}128$, $N{=}1000$ (off-sphere ball). Exact modulation's per-point feature dimension $D\,d_b$ grows as $O(d^2)$ and becomes impossible to build; the sketched feature $D(m{+}d{+}1)$ stays linear in $d$. ``--'' marks a feature too large to materialize ($>2$M coordinates/point).}
\label{tab:runtime_d}
\resizebox{\textwidth}{!}{%
\begin{tabular}{lcccccc}
\toprule
$d$ & exact dim & exact mem (GB) & exact build (s) & sketch dim & sketch mem (MB) & sketch build (s) \\
\midrule
$64$   & $17160$   & $0.14$  & $0.043$ & $1544$ & $12.4$ & $0.005$ \\
$128$  & $67080$   & $0.54$  & $0.143$ & $2056$ & $16.4$ & $0.007$ \\
$256$  & $265224$  & $2.12$  & $1.585$ & $3080$ & $24.6$ & $0.011$ \\
$512$  & $1054728$ & $8.44$  & $8.202$ & $5128$ & $41.0$ & $0.017$ \\
$1024$ & $4206600$ & $33.65$ & --      & $9224$ & $73.8$ & $0.028$ \\
\bottomrule
\end{tabular}}
\end{table}

\subsection{Coupled-target preference}
\label{sec:exp_necessity}

The comparisons so far measure approximation quality, not whether the $\yat$-kernel is the right kernel. We test that inductive bias directly: the kernel couples alignment and proximity, so it should be preferred exactly when the target benefits from both, and not otherwise. On off-sphere data ($\|x\|\in[0.3,1.5]$, $d=16$) we build three regression targets from small atom sums and compare kernel ridge regression with the Gaussian, IMQ, degree-2 polynomial, and $\yat$ kernels plus RAY (Table~\ref{tab:necessity}). The two single-factor controls are kernel-natural (\emph{proximity} $y=\sum_k a_k/(\|x-v_k\|^2+\varepsilon_0)$ and \emph{alignment} $y=\sum_k a_k(u_k^\top x)^2$), so they should, and do, favor the distance kernels and the polynomial kernel respectively. The \emph{coupled} target $y=\sum_k a_k\tanh(2u_k^\top x)\,e^{-\|x-v_k\|}$ is deliberately \emph{not} of the $\yat$-kernel's form: it multiplies a $\tanh$ alignment by a Laplace proximity, matching no candidate kernel, so a $\yat$ win there cannot be an artifact of $\yat$-generated data. The $\yat$-kernel wins the coupled target and only there. The evidence is therefore a controlled compatibility result for the alignment$\times$proximity coupling, not a general necessity theorem. RAY inherits the preference once its approximation has converged: at $D=4000$ it reaches $0.093$ on the coupled target, below both distance kernels, approaching the exact $\yat$-kernel's $0.087$; at smaller $D$ the residual approximation error keeps it above them, consistent with the $O(1/\sqrt D)$ convergence measured elsewhere.

\begin{table}[h]
\centering
\caption{Kernel ridge regression test RMSE ($\downarrow$) on off-sphere targets ($d=16$, varying norms, mean over $5$ seeds, RAY at $D=4000$, best per row in bold). The coupled target (tanh alignment $\times$ Laplace proximity) matches no candidate kernel; the $\yat$-kernel is best only there, while distance kernels win the proximity control and the polynomial wins the alignment control. RAY tracks the exact $\yat$-kernel, beating both distance kernels on the coupled target.}
\label{tab:necessity}
\begin{tabular}{lccccc}
\toprule
Target & Gaussian & IMQ & polynomial & $\yat$ (exact) & RAY \\
\midrule
coupled (needs both) & $0.098$ & $0.099$ & $0.350$ & $\mathbf{0.087}$ & $0.093$ \\
proximity only       & $0.341$ & $\mathbf{0.141}$ & $0.612$ & $0.237$ & $0.278$ \\
alignment only       & $0.349$ & $0.363$ & $\mathbf{0.001}$ & $0.268$ & $0.268$ \\
\bottomrule
\end{tabular}
\end{table}

The preference survives fair per-kernel tuning and has the expected budget caveat: exact $\yat$ keeps the lead on the coupled target after hyperparameter tuning, but a small fixed feature budget can under-resolve the $O(d^2)$ modulation and let cheaper single-factor RFFs win. Those controls, together with the gating diagnostic that explains the effect, are in Appendix~\ref{sec:exp_controls}. The main point is the scoped one Table~\ref{tab:necessity} isolates: the kernel is useful when the target actually couples alignment and proximity, not as a universal replacement for simpler kernels.

\subsection{Large-scale primal training on HIGGS}
\label{sec:exp_higgs}

RAY is a primal feature map, never forming a Gram. We close by exercising that property where it is forced: HIGGS \citep{baldi2014searching}, $11{,}000{,}000$ examples with $d=28$ physics features, a scale at which the $N\times N$ Gram ($\sim10^{14}$ entries) cannot be stored. Compressed Bernstein--Schur features train as a memory-flat streaming primal (peak $8.5$\,GB, no Gram) at this scale, with compression matching or beating exact modulation at every budget and decisively at the largest, where the exact polynomial starves the radial factor to a handful of draws whose seed spread is itself large; Gaussian RFF leads on AUC, as expected on a smooth detector task with no alignment$\times$proximity coupling to exploit. The point is streaming feasibility where the Gram is impossible, not peak AUC; the full sweep, protocol, and table are in Appendix~\ref{app:further} (Table~\ref{tab:higgs}).

\section{Discussion}
\label{sec:discussion}

The biased $\yat$-kernel fits neither random-feature template (it is neither shift-invariant nor a dot-product kernel (Proposition~\ref{prop:nonstationary})) yet RAY linearizes it with operator-norm guarantees, and for the entire Bernstein--Schur \emph{class} rather than a single kernel (Theorem~\ref{thm:bernstein_schur}). The Schur factorization is what makes this work: it turns an intractable product into two textbook objects, an exact finite modulation and one completely monotone radial factor, each randomized by its native tool. The matrix-Bernstein analysis (Theorem~\ref{thm:bernstein}) then controls the error through the top eigenvalues and an intrinsic dimension, not the crude $N\max_{ij}$ route, and the deployed doubly-randomized estimator inherits that guarantee up to a single tunable sketch term (Theorem~\ref{thm:ts_opnorm}). Whitening that argument at the ridge sharpens it into a deployment condition: the effective dimension $d_{\mathrm{eff}}(\lambda)=\tr(KA^{-1})$ is the \emph{exact} intrinsic dimension of the whitened matrix variance, so $O\bigl((1+\|P\|_{\mathrm{op}}/\lambda)\log(d_{\mathrm{eff}}/\delta)\bigr)$ radial draws preserve the kernel-ridge coefficients and the objective value to constants (Theorem~\ref{thm:krr_whitened}, Corollary~\ref{cor:krr_highprob}), and tilting the radial draw by the whitened leverage $\bar d_\lambda(\theta)=\psi_\theta^\top(A^{-1}\circ P)\psi_\theta$, whose mean is exactly $d_{\mathrm{eff}}(\lambda)$, improves the count to $O\bigl((1+d_{\mathrm{eff}})\log(d_{\mathrm{eff}}/\delta)\bigr)$ (Theorem~\ref{thm:krr_leverage}). This entire matrix-level guarantee set, operator norm through ridge condition, holds verbatim for every Bernstein--Schur kernel with the modulation Gram in place of the polynomial one (Theorem~\ref{thm:class_bernstein}). The tensor-product closure these pieces rely on is classical \citep{aronszajn1950theory}; the contribution is the factorization that brings it to bear on a kernel outside both templates, and the class-level concentration analysis of the result.

The modulation feature is interchangeable: Theorem~\ref{thm:ts_opnorm} sees it only as PSD and an $\eta$-spectral approximation of $P$, so any unbiased or low-rank such feature (TensorSketch \citep{avron2014subspace}, random-Maclaurin products \citep{kar2012random}, or anchor squares) plugs into the same bound. On the sphere, where the modulation is a dot-product kernel, anchor squares are exact with few anchors, the route of the spherical $\yat$-attention of \citet{luna2026slay}; off the sphere they are biased ($\mathbb{E}_a(a^\top x)^2(a^\top w)^2=\|x\|^2\|w\|^2+2(x^\top w)^2\ne(x^\top w+b)^2$), so we default to an unbiased sketch. The choice is geometry, not analysis.

Computationally, via the Kronecker structure
\begin{equation}\label{eq:kronecker}
z(x)^\top z(w)=\frac{(x^\top w+b)^2}{\varepsilon D}\sum_{j=1}^D\psi_{t_j}(x)^\top\psi_{t_j}(w),
\end{equation}
a kernel evaluation costs $O(D(D'+d))$ without forming the $M_b$-dimensional feature. In primal form the estimator never materializes the $N\times N$ Gram, paying $O(N\,D\,d_b)$ feature memory; this beats the $O(N^2)$ Gram whenever $D\,d_b\ll N$. The polynomial dimension $d_b=O(d^2)$ is the only term scaling poorly in $d$. The deployed estimator sketches it to $m\ll d^2$ (Section~\ref{sec:exp_ts}), making the feature dimension $Dm$ at the price of one tunable sketch term (Theorem~\ref{thm:ts_opnorm}).

Several caveats bound the method. The variance constant $(R^2+b)^4$ exceeds the $R^4$ of Gaussian RFF (the price of the polynomial factor and the bias), and it is sharp, not a loose bound: it factors out of the variance of \emph{any} exact-modulation estimator with equality (Proposition~\ref{prop:variance_sharp}), so the realized Gram error grows weakly with $d$ through that constant rather than the rate (Appendix~\ref{sec:exp_gram}). The estimator needs bounded-norm inputs, since the unbounded numerator destabilizes the kernel on raw data, though the off-sphere ball (Section~\ref{sec:exp_offsphere}) shows exact spherical support is not required; the data-independent importance proposal is likewise suboptimal when distances spread widely. The $O(d^2)$ floor of the exact-modulation feature is removed by sketching (Theorem~\ref{thm:ts_opnorm}), worthwhile except at very low $d$ or very high target accuracy, where one takes $m\to\infty$. And at moderate $N$ and fixed representation size, adaptive Nystr\"om can be more accurate, fitting landmarks to the observed sample; RAY does not aim to replace landmark methods, but to provide data-independent, unbiased, streaming features.

Several questions remain open. \emph{(1) Risk.} The leverage count of Theorem~\ref{thm:krr_leverage} closes the draw-count side of the ridge question; a minimax excess-risk theorem now needs exactly two more pieces: a deployment-grade approximation of the tilt $\bar d_\lambda$ with its error folded into the bound, and control of the approximate kernel's effective dimension $d_{\mathrm{eff}}(K^*_D)$ (Remark~\ref{rmk:risk}), the Rudi--Rosasco-style feature-covariance refinement. A matching lower bound, that plain $\Exp(\varepsilon)$ sampling \emph{requires} $\Omega(\|P\|_{\mathrm{op}}/\lambda)$ draws on worst-case data (the empirical gap in Theorem~\ref{thm:krr_leverage}'s validation points that way), would certify that the tilt is necessary and not merely sufficient. \emph{(2) Is the prefactor fundamental?} Proposition~\ref{prop:variance_sharp} pins the $(R^2+b)^4$ variance prefactor with equality \emph{within} the factored family (exact modulation times any unbiased radial estimate); whether any unbiased, data-independent feature map of $k_{\yat,b}$ can beat it, for instance by correlating the modulation and radial randomness so their errors cancel, is open. Sketching only adds variance (Proposition~\ref{prop:ts_variance}) and normalizing changes the kernel (Proposition~\ref{prop:normalized}), so a positive answer would have to leave the factored family without leaving unbiasedness; we conjecture the prefactor is optimal. \emph{(3) The peaked regime.} Small $\varepsilon$ with aligned data defeats both estimators we have: the trigonometric error grows with attention sharpness (Appendix~\ref{sec:exp_attention}) and positive features have infinite variance precisely there (Proposition~\ref{prop:pos_dichotomy}). A finite-variance estimator for this regime, e.g.\ a FAVOR-type mechanism with data-dependent normalization, is open. \emph{(4) Geometry.} Can the variance bound be tightened on low-dimensional manifolds, where the data occupies a thin shell and the intrinsic dimension of Theorem~\ref{thm:bernstein} is empirically tiny (Appendix~\ref{sec:exp_gram})?

\section{Conclusion}
\label{sec:conclusion}

We gave a random-feature scheme for \emph{Bernstein--Schur kernels} (finite-feature modulations of completely monotone radial kernels) and its flagship, the biased $\yat$-kernel $k_{\yat,b}(w,x)=(w^\top x+b)^2/(\|w-x\|^2+\varepsilon)$, a kernel that is neither shift-invariant nor a dot-product kernel and so fits neither standard random-feature template. The scheme reads the kernel as a Schur product and randomizes \emph{both} factors: it samples the radial Bernstein--Widder scale $t\sim\Exp(\varepsilon)$ and applies random Fourier features to the resulting Gaussian, and it sketches the modulation, so the deployed feature has dimension $Dm$, free of the $O(d^2)$ size of the exact polynomial feature. Keeping the modulation exact is the analyzable limit ($m\to\infty$): there the estimator is unbiased with $O((R^2+b)^4/(D\varepsilon^2))$ variance and a radial sample count free of explicit dimension for a fixed dataset, and a matrix-Bernstein operator-norm bound holds; conditioning on the sketch carries that guarantee to the doubly-randomized estimator up to a single tunable sketch term (Theorem~\ref{thm:ts_opnorm}). Whitening the matrix-Bernstein argument at the ridge upgrades the operator-norm bound to a high-probability kernel-ridge guarantee, in which the effective dimension $d_{\mathrm{eff}}(\lambda)$ is the exact intrinsic dimension of the matrix variance and $O\bigl((1+\|P\|_{\mathrm{op}}/\lambda)\log(d_{\mathrm{eff}}/\delta)\bigr)$ radial draws preserve the ridge solution and objective value (Theorem~\ref{thm:krr_whitened}); tilting the radial draw by a closed-form whitened leverage function, whose mean over the base law is exactly $d_{\mathrm{eff}}(\lambda)$, improves the count to $O\bigl((1+d_{\mathrm{eff}})\log(d_{\mathrm{eff}}/\delta)\bigr)$ (Theorem~\ref{thm:krr_leverage}). Experiments confirm the $O(1/\sqrt D)$ rate, the $(R^2+b)^4$ bias law, and the matched-landmark off-sphere behavior, where $k_{\yat,b}$ is genuinely non-dot-product and no dot-product reduction is available. Cost-matched, adaptive data-dependent Nystr\"om can be more accurate at moderate $N$, while RAY provides data-independent, unbiased, streaming primal features for a nonstationary product kernel. The same doubly-randomized construction applies unchanged to the whole Bernstein--Schur class, and so does the full matrix-level guarantee set, operator norm through ridge condition, with the modulation Gram in place of the polynomial one (Theorem~\ref{thm:class_bernstein}); the modulation randomizer (sketch, random-Maclaurin, or anchor features) is a geometry-dependent choice, with anchor features the natural primitive on the sphere and an unbiased sketch off it.

\paragraph{Reproducibility.} The code for every experiment, with the paper's figures and a project page, is available online.\footnote{\url{https://www.tahabouhsine.com/ray}} The per-table datasets, splits, seed counts, and swept grids are collected in Appendix~\ref{app:exp_details}, and each result table states its own seed count and error bars in its caption.

\bibliographystyle{tmlr}
\bibliography{references}

\appendix

\section{Proofs of the Approximation Guarantees}
\label{app:proofs}

Both results rest on a single per-scale term $Y=(\psi_t(x)^\top\psi_t(w))(p(x)^\top p(w))$, whose polynomial factor is exact, $p(x)^\top p(w)=(x^\top w+b)^2/\varepsilon$, and whose Gaussian factor is an unbiased $D'$-sample estimate of $g_t(x,w)$. Two facts about it are used repeatedly: the a.s.\ bound $|\psi_t(x)^\top\psi_t(w)|\le\|\psi_t(x)\|\|\psi_t(w)\|\le2$ (each $\|\psi_t\|^2=\tfrac2{D'}\sum_\ell\cos^2\le2$), so $|Y|\le 2(R^2+b)^2/\varepsilon$; and the second moment $\mathbb{E}[(\psi_t(x)^\top\psi_t(w))^2]\le\mathbb{E}_T[e^{-2T\|x-w\|^2}]+O(1/D')$. Corollary~\ref{thm:gram_concentration} needs no separate proof: it is $\|A\|_{\mathrm{op}}\le\|A\|_F\le N\max_{ij}|A_{ij}|$ applied to Theorem~\ref{thm:uniform}.

\begin{theorem}[Variance envelope]\label{thm:variance}
Let $V_D=\Var[z(x)^\top z(w)]$. Accounting for both the inner ($D'$) and outer ($D$) sampling,
\begin{equation}\label{eq:variance_bound}
V_D \le \frac{(\|x\|^2+b)^2(\|w\|^2+b)^2}{D\,\varepsilon^2}\!\left(\frac{\varepsilon}{2\|x-w\|^2+\varepsilon}+\frac{3}{2D'}\right) \le \frac{(R^2+b)^4}{D\,\varepsilon^2}\!\left(1+\frac{3}{2D'}\right).
\end{equation}
The $3/(2D')$ term is the inner random-Fourier-feature variance; it does not vanish at the recommended $D'=1$ (where the constant is $5/2$), only as $D'\to\infty$.
\end{theorem}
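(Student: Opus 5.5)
The plan is to reduce the variance of the average to the second moment of a single outer term, and then to compute that second moment by conditioning on the scale. By Definition~\ref{def:ryf}, $z(x)^\top z(w)=\frac1D\sum_{j=1}^D Y_j$ with $Y_j=(\psi_{t_j}(x)^\top\psi_{t_j}(w))\,(p(x)^\top p(w))$. The blocks are i.i.d., since each uses its own $t_j$, $\omega_{j,\ell}$ and $\beta_{j,\ell}$. Hence $V_D=\Var[Y]/D\le\mathbb{E}[Y^2]/D$, where I drop the subtracted $k_{\yat,b}^2$. The modulation factor is deterministic: $p(x)^\top p(w)=a/\varepsilon$ with $a=(x^\top w+b)^2$. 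So $\mathbb{E}[Y^2]=\frac{a^2}{\varepsilon^2}\,\mathbb{E}[S^2]$, where $S=\psi_T(x)^\top\psi_T(w)=\frac1{D'}\sum_{\ell}2\cos(\omega_\ell^\top x+\beta_\ell)\cos(\omega_\ell^\top w+\beta_\ell)$.

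Next I condition on $T=t$. The $D'$ summands are then i.i.d. with mean $g_t(x,w)=e^{-tr}$, where $r=\|x-w\|^2$, by the RFF identity used in Theorem~\ref{thm:unbiased}. This gives $\mathbb{E}[S^2\mid t]=g_t^2+\frac1{D'}\Var[2\cos\cos\mid t]\le g_t^2+\frac1{D'}\mathbb{E}[(2\cos\cos)^2\mid t]$. To bound the single-frequency second moment I write $2\cos(u)\cos(v)=\cos(u-v)+\cos(u+v+2\beta)$ with $u-v=\omega^\top(x-w)$. Averaging over the uniform phase $\beta$ kills the cross term. The remaining second moment is $\mathbb{E}[\cos^2(\omega^\top(x-w))]+\tfrac12\le\tfrac32$, so $\mathbb{E}[S^2\mid t]\le g_t^2+\frac{3}{2D'}$. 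Taking expectation over $T\sim\Exp(\varepsilon)$ and using the Laplace identity from~\eqref{eq:expectation} at doubled distance, $\mathbb{E}_T[e^{-2Tr}]=\varepsilon/(\varepsilon+2r)$, I obtain
\[
V_D\le\frac{a^2}{D\varepsilon^2}\Bigl(\frac{\varepsilon}{2r+\varepsilon}+\frac{3}{2D'}\Bigr).
\]

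It remains to bound $a$ by the norms. By Proposition~\ref{prop:biased_feature} and Cauchy--Schwarz on the finite feature, $a=p_b(x)^\top p_b(w)\le\|p_b(x)\|\,\|p_b(w)\|=(\|x\|^2+b)(\|w\|^2+b)$. Squaring gives the first inequality in~\eqref{eq:variance_bound}. The second follows from $\|x\|,\|w\|\le R$ and $\varepsilon/(2r+\varepsilon)\le1$. At $D'=1$ the bracket is at most $1+\tfrac32=\tfrac52$, and the $3/(2D')$ term disappears only as $D'\to\infty$.

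The only step needing care is the phase computation behind the $\tfrac32$ constant. There one must use that $\beta$ is independent of $\omega$ and uniform, so that $\mathbb{E}_\beta[\cos(u-v)\cos(u+v+2\beta)]=0$ and $\mathbb{E}_\beta[\cos^2(u+v+2\beta)]=\tfrac12$. One must also check that the bound is uniform in $t$, so that it survives the outer expectation. Everything else is bookkeeping with the independence structure of Definition~\ref{def:ryf}.
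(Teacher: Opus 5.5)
Your proposal is correct and follows the paper's proof essentially step for step. You reduce to a single i.i.d. block, factor out the deterministic modulation $(x^\top w+b)^2/\varepsilon$, condition on $T$ to split the Gaussian second moment into $\mathbb{E}_T[g_T^2]=\varepsilon/(\varepsilon+2r)$ plus a phase-averaged inner term bounded by $3/(2D')$, and bound the modulation by Cauchy--Schwarz. The only cosmetic difference is that you bound $\cos^2\le1$ directly, where the paper first evaluates $\mathbb{E}[(2\cos\cos)^2\mid t]=1+\tfrac12e^{-4tr}$ and then bounds it by $\tfrac32$.
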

\begin{proof}[Proof of Theorem~\ref{thm:variance}]
Write $z(x)^\top z(w)=\frac1D\sum_j Y_j$; the $Y_j$ are i.i.d., so $V_D=\frac1D\Var[Y_1]\le\frac1D\mathbb{E}[Y_1^2]=\frac1D(p(x)^\top p(w))^2\,\mathbb{E}[(\psi_T(x)^\top\psi_T(w))^2]$, the polynomial factor being deterministic. It obeys $(p(x)^\top p(w))^2=(x^\top w+b)^4/\varepsilon^2\le(\|x\|^2+b)^2(\|w\|^2+b)^2/\varepsilon^2$ by Cauchy--Schwarz and $b\ge0$. For the Gaussian factor, the law of total variance over $\omega\mid T$ then $T$ gives $\mathbb{E}[(\psi_T(x)^\top\psi_T(w))^2]=\mathbb{E}_T[g_T(x,w)^2]+\mathbb{E}_T[v_T]/D'$, where $v_t=\Var_\omega[2\cos(\omega^\top x+\beta)\cos(\omega^\top w+\beta)]$. The outer term is $\mathbb{E}_{T\sim\Exp(\varepsilon)}[e^{-2T\|x-w\|^2}]=\varepsilon/(2\|x-w\|^2+\varepsilon)$. For the inner term, $\mathbb{E}_{\omega,\beta}[(2\cos(\omega^\top x+\beta)\cos(\omega^\top w+\beta))^2]=1+\tfrac12 e^{-4t\|x-w\|^2}\le\tfrac32$, so $v_t\le\tfrac32$ and $\mathbb{E}_T[v_T]\le\tfrac32$. Substituting and dividing by $D$ gives~\eqref{eq:variance_bound}; the second form uses $\varepsilon/(2\|x-w\|^2+\varepsilon)\le1$. (The $1/\varepsilon^2$ comes from the polynomial normalization $p=\varepsilon^{-1/2}p_b$.)
\end{proof}

\begin{proposition}[Sharpness of the variance prefactor]\label{prop:variance_sharp}
For any unbiased estimator of $k_{\yat,b}(x,w)$ of the factored form $\widehat k=\bigl(p(x)^\top p(w)\bigr)\,\widehat h$, with the modulation inner product kept exact (deterministic) and $\widehat h$ any unbiased estimator of $h_\varepsilon(x,w)$,
\[
\Var[\widehat k]=\frac{(x^\top w+b)^4}{\varepsilon^2}\,\Var[\widehat h].
\]
The prefactor $(x^\top w+b)^4/\varepsilon^2\le(R^2+b)^4/\varepsilon^2$ is therefore common to the entire factored family and is attained with \emph{equality}, not a slack upper bound: Theorem~\ref{thm:exact_variance} is the instance with $\widehat h$ the flat radial estimator, and no sharper analysis can remove the prefactor while the modulation is kept exact. Lowering it requires leaving the family, by normalizing the modulation (Proposition~\ref{prop:normalized}, a different, cosine-rescaled kernel) or randomizing it (Proposition~\ref{prop:ts_variance}, which only adds variance).
\end{proposition}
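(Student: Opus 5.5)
The plan is to pull the deterministic modulation out of the variance and then read off every other claim from that one identity. Fix the pair $(x,w)$ and set $c:=p(x)^\top p(w)$. By Proposition~\ref{prop:biased_feature} and the normalization $p=\varepsilon^{-1/2}p_b$ of Definition~\ref{def:ryf}, $c=(x^\top w+b)^2/\varepsilon$ exactly. This constant carries no randomness, since the modulation is kept exact by hypothesis. The only random object is $\widehat h$, so $\widehat k=c\,\widehat h$ is a deterministic scalar multiple of a random variable.

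The elementary scaling rule $\Var[cX]=c^2\Var[X]$ then gives the displayed identity immediately:
\[
\Var[\widehat k]=\frac{(x^\top w+b)^4}{\varepsilon^2}\Var[\widehat h].
\]
Unbiasedness follows the same way: $\mathbb{E}[\widehat k]=c\,h_\varepsilon(x,w)=k_{\yat,b}(x,w)$ by the Schur factorization~\eqref{eq:schur}. This step assumes only that $\widehat h$ has finite second moment. If it does not, both sides are infinite and the identity holds trivially, which is worth stating.

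For the bound on the prefactor I would use Cauchy--Schwarz together with $b\ge0$. These give $|x^\top w+b|\le\|x\|\|w\|+b\le R^2+b$, so $(x^\top w+b)^4/\varepsilon^2\le(R^2+b)^4/\varepsilon^2$, exactly as in the proof of Theorem~\ref{thm:variance}.

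For the ``attained with equality'' and ``instance'' claims, take $\widehat h$ to be the flat radial estimator
\[
\widehat h=\frac1{D}\sum_j 2\cos(\omega_j^\top x+\beta_j)\cos(\omega_j^\top w+\beta_j).
\]
Its variance is the bracket of Theorem~\ref{thm:exact_variance} divided by $D$, and multiplying by $c^2$ recovers that theorem's formula verbatim. This is a consistency check, not a new computation. The prefactor is tight as a bound in $R$: equality $x^\top w+b=R^2+b$ holds at $x=w$ with $\|x\|=R$. So no uniform-in-data bound on the factored family can have a smaller prefactor, because it multiplies $\Var[\widehat h]$ identically.

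The remaining remarks are interpretive and need only pointers. Normalization changes the kernel, by Proposition~\ref{prop:normalized}. Randomizing the modulation adds a nonnegative sketch term, by Proposition~\ref{prop:ts_variance}, via the law of total variance over the sketch: $\Var=\mathbb{E}[\Var\mid\text{sketch}]+\Var[\mathbb{E}\mid\text{sketch}]$.

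The main obstacle is wording rather than mathematics. I must make precise that ``no sharper analysis can remove the prefactor'' means the identity holds pointwise for every $\widehat h$, so any bound on $\Var[\widehat k]$ is exactly $c^2$ times a bound on $\Var[\widehat h]$. I would phrase it that way to avoid claiming a lower bound over all unbiased estimators, which remains the open question~(2).
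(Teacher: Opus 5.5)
Your core argument is the paper's: the modulation inner product $c=p(x)^\top p(w)=(x^\top w+b)^2/\varepsilon$ is deterministic, so $\Var[c\,\widehat h]=c^2\Var[\widehat h]$, and Cauchy--Schwarz with $b\ge0$ bounds the prefactor. That identity holds for every $\widehat h$, and it is all the paper's proof uses.

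One step you added is false as written. You claim $\mathbb{E}[\widehat k]=c\,h_\varepsilon(x,w)=k_{\yat,b}(x,w)$, but in fact $c\,h_\varepsilon(x,w)=(x^\top w+b)^2/\bigl(\varepsilon(\|x-w\|^2+\varepsilon)\bigr)=k_{\yat,b}(x,w)/\varepsilon$. The reason is that the Schur factorization~\eqref{eq:schur} pairs $h_\varepsilon$ with the unnormalized modulation $(x^\top w+b)^2$, not with $p=\varepsilon^{-1/2}p_b$.

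Your own instance exposes the mismatch. The flat cosine average $\frac1D\sum_j 2\cos(\omega_j^\top x+\beta_j)\cos(\omega_j^\top w+\beta_j)$ has mean $\varepsilon/(\varepsilon+\|x-w\|^2)=\varepsilon h_\varepsilon(x,w)$. So it is not an unbiased estimator of $h_\varepsilon$ and does not belong to the family you defined, even though $c$ times it is unbiased for $k_{\yat,b}$ and reproduces Theorem~\ref{thm:exact_variance}. The statement as printed carries the same normalization slip.

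There are two consistent repairs. You can let $\widehat h$ range over unbiased estimators of the unit-diagonal radial factor $\varepsilon h_\varepsilon$; then unbiasedness, the prefactor $(x^\top w+b)^4/\varepsilon^2$, and the instance claim all hold together. Alternatively, keep $\widehat h$ unbiased for $h_\varepsilon$ but use $p_b$, so the prefactor becomes $(x^\top w+b)^4$ and the $\varepsilon^{-2}$ moves inside $\Var[\widehat h]$.

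A minor point: at $x^\top w=-b$ you have $c=0$, so the left side is $0$ and your remark that ``both sides are infinite'' when $\widehat h$ lacks a second moment fails there. The identity still holds under the convention $0\cdot\infty=0$.
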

\begin{proof}
The modulation factor $p(x)^\top p(w)=(x^\top w+b)^2/\varepsilon$ is deterministic, so it leaves the variance as its square; the inequality is Cauchy--Schwarz $(x^\top w+b)^2\le(\|x\|^2+b)(\|w\|^2+b)\le(R^2+b)^2$ with $b\ge0$.
\end{proof}

\begin{proposition}[Explicit $D,D'$ variance and optimal allocation]\label{prop:budget}
Let $v_t(x,w)=\Var_\omega[\,2\cos(\omega^\top x+\beta)\cos(\omega^\top w+\beta)\,]$ be the variance of one trigonometric feature pair at scale $t$, and set $V_{\mathrm{out}}=\Var_{T\sim\Exp(\varepsilon)}[g_T(x,w)]$, $V_{\mathrm{in}}=\mathbb{E}_{T\sim\Exp(\varepsilon)}[v_T(x,w)]$. Then
\begin{equation}\label{eq:budget_var}
\Var[z(x)^\top z(w)]=\frac{(x^\top w+b)^4}{\varepsilon^2}\Bigl(\frac{V_{\mathrm{out}}}{D}+\frac{V_{\mathrm{in}}}{D\,D'}\Bigr).
\end{equation}
At a fixed feature budget $B=DD'$ the right side is minimized by $D'=1$ (hence $D=B$) whenever $V_{\mathrm{out}}>0$: the inner term $V_{\mathrm{in}}/B$ is fixed by the budget, while the outer term $V_{\mathrm{out}}/D$ decreases as $1/D$. If $V_{\mathrm{out}}=0$ (for example at $x=w$), every allocation with the same $DD'$ ties.
\end{proposition}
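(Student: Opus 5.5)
The plan is a two-level law of total variance applied to one block of the hierarchical estimator. Since the modulation inner product $p(x)^\top p(w)=(x^\top w+b)^2/\varepsilon$ is deterministic, Proposition~\ref{prop:variance_sharp} reduces everything to the variance of the radial part $\frac1D\sum_{j=1}^D S_j$. Here $S_j=\psi_{t_j}(x)^\top\psi_{t_j}(w)=\frac1{D'}\sum_{\ell=1}^{D'}c_{j\ell}$, with $c_{j\ell}=2\cos(\omega_{j,\ell}^\top x+\beta_{j,\ell})\cos(\omega_{j,\ell}^\top w+\beta_{j,\ell})$. The squared prefactor $(x^\top w+b)^4/\varepsilon^2$ then comes out in front of the whole expression.

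The first step uses independence across blocks. The pairs $(t_j,\{\omega_{j\ell},\beta_{j\ell}\}_\ell)$ are i.i.d. across $j$, so $\Var[\frac1D\sum_j S_j]=\frac1D\Var[S_1]$.

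The second step conditions on the scale $T=t_1$. Given $T$, the $c_{1\ell}$ are i.i.d. with conditional mean $g_T(x,w)$ (the RFF identity used in Theorem~\ref{thm:unbiased}) and conditional variance $v_T(x,w)$. Hence $\mathbb{E}[S_1\mid T]=g_T$ and $\Var[S_1\mid T]=v_T/D'$. The law of total variance then gives
\[
\Var[S_1]=\Var_T[g_T]+\mathbb{E}_T[v_T]/D'=V_{\mathrm{out}}+V_{\mathrm{in}}/D'.
\]
Dividing by $D$ and multiplying by the prefactor yields \eqref{eq:budget_var}.

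For the allocation claim, substitute $D=B/D'$ into \eqref{eq:budget_var}. The bracket becomes $V_{\mathrm{out}}D'/B+V_{\mathrm{in}}/B$. The second term does not depend on the split, and the first is strictly increasing in $D'\ge1$ whenever $V_{\mathrm{out}}>0$, so the minimum is at $D'=1$, $D=B$. If $V_{\mathrm{out}}=0$, every factorization of $B$ gives the same value. The example $x=w$ qualifies because then $g_T=1$ for every $T$.

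There is no real obstacle here. The one point needing care is that the inner frequencies within a block share $T$ and so are only conditionally independent; conditioning on $T$ before averaging over $\ell$ is exactly what resolves this. Finiteness of all moments is immediate from $|c_{j\ell}|\le2$.
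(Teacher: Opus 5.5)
Your proposal is correct and is essentially the paper's own proof: the deterministic modulation factor squares out as $(x^\top w+b)^4/\varepsilon^2$, the law of total variance conditioned on the shared scale $T$ gives $\Var_T[g_T]+\mathbb{E}_T[v_T]/D'$ (since the $D'$ inner pairs are conditionally i.i.d.\ given $T$), and dividing by $D$ uses independence of the blocks. For the allocation, at fixed $B=DD'$ only the outer term $V_{\mathrm{out}}D'/B$ depends on the split, exactly as the paper argues.
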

\begin{proof}[Proof of Proposition~\ref{prop:budget}]
The polynomial factor is deterministic, so $\Var[Y_1]=(p(x)^\top p(w))^2\Var[\psi_T(x)^\top\psi_T(w)]$ with $(p(x)^\top p(w))^2=(x^\top w+b)^4/\varepsilon^2$. By the law of total variance, conditioning on the scale $T$ and then on the $D'$ inner frequencies,
\[
\Var[\psi_T(x)^\top\psi_T(w)]=\mathbb{E}_T\bigl[\Var_\omega(\psi_T(x)^\top\psi_T(w)\mid T)\bigr]+\Var_T\bigl(\mathbb{E}_\omega[\psi_T(x)^\top\psi_T(w)\mid T]\bigr).
\]
The conditional mean is $g_T(x,w)$, giving the outer term $\Var_T(g_T)=V_{\mathrm{out}}$. The conditional object is an average of $D'$ i.i.d.\ feature pairs, so its variance is $v_T/D'$, giving the inner term $\mathbb{E}_T[v_T]/D'=V_{\mathrm{in}}/D'$. Dividing by $D$ (the $D$ blocks are i.i.d.) yields~\eqref{eq:budget_var}. With $DD'=B$ fixed, $V_{\mathrm{in}}/(DD')=V_{\mathrm{in}}/B$ is constant and $V_{\mathrm{out}}/D$ is decreasing in $D$, so $D'=1$ is optimal.
\end{proof}

\begin{theorem}[Uniform Gram approximation]\label{thm:uniform}
For $X=\{x_1,\dots,x_N\}$ with $\|x_i\|\le R$, the exact-modulation estimator with $D$ outer samples satisfies, with probability at least $1-\delta$,
\begin{equation}\label{eq:uniform_bound}
\sup_{i,j\in[N]}\bigl|z(x_i)^\top z(x_j)-k_{\yat,b}(x_i,x_j)\bigr| \le \frac{(R^2+b)^2}{\varepsilon}\sqrt{\frac{8\log(2N^2/\delta)}{D}}.
\end{equation}
The constant uses the a.s.\ bound $|\psi_t(x)^\top\psi_t(w)|\le2$, not the expectation-level value $1$.
\end{theorem}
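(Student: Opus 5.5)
The bound follows from Hoeffding's inequality applied entrywise, followed by a union bound over all pairs. Fix a pair $(i,j)$ and write the estimator as the average $z(x_i)^\top z(x_j)=\frac1D\sum_{\ell=1}^D Y_\ell$. Each summand is
\[
Y_\ell=\bigl(\psi_{t_\ell}(x_i)^\top\psi_{t_\ell}(x_j)\bigr)\,\bigl(p(x_i)^\top p(x_j)\bigr),
\]
and the $Y_\ell$ are i.i.d. because the $D$ radial blocks $(t_\ell,\omega_{\ell,\cdot},\beta_{\ell,\cdot})$ are drawn independently. By Theorem~\ref{thm:unbiased}, $\mathbb{E}[Y_\ell]=k_{\yat,b}(x_i,x_j)$. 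The two facts recorded at the start of this appendix give an almost-sure range. First, $|\psi_t(x)^\top\psi_t(w)|\le\|\psi_t(x)\|\,\|\psi_t(w)\|\le2$. Second, the modulation factor is deterministic and satisfies
\[
0\le p(x_i)^\top p(x_j)=\frac{(x_i^\top x_j+b)^2}{\varepsilon}\le\frac{(R^2+b)^2}{\varepsilon},
\]
by Cauchy--Schwarz and $b\ge0$. Hence each $Y_\ell$ lies in an interval $[-c,c]$ with $c=2(R^2+b)^2/\varepsilon$, whose width is $2c$.

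Hoeffding's inequality for bounded i.i.d.\ variables of range $2c$ then gives
\[
\mathbb{P}\Bigl\{\Bigl|\tfrac1D\textstyle\sum_\ell Y_\ell-k_{\yat,b}(x_i,x_j)\Bigr|\ge s\Bigr\}\le 2\exp\!\bigl(-2Ds^2/(2c)^2\bigr)=2\exp\!\bigl(-Ds^2/(2c^2)\bigr).
\]
There are at most $N^2$ ordered pairs (the diagonal and the symmetric duplicates only help). A union bound gives failure probability at most $2N^2\exp(-Ds^2/(2c^2))$. Setting this equal to $\delta$ yields
\[
s=c\sqrt{2\log(2N^2/\delta)/D}=\frac{(R^2+b)^2}{\varepsilon}\sqrt{8\log(2N^2/\delta)/D},
\]
which is exactly the claimed bound~\eqref{eq:uniform_bound}.

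The main point to watch is the constant. The range must come from the almost-sure bound $2$ on the Fourier inner product, not from its mean $e^{-t\|x-w\|^2}\le1$; the remark in the statement flags exactly this. The range is $2c$ rather than $c$ because the cosine products can be negative, and with range $2c$ Hoeffding produces the factor $8=2\cdot 2^2$.

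One remaining subtlety is that, for general $D'$, each $Y_\ell$ is itself an inner average over the $D'$ frequencies of block $\ell$. This affects neither independence across blocks nor the $\pm2$ bound, since $\|\psi_t\|^2\le2$ for every $D'$. So the argument covers every $D'$, including the flat case $D'=1$.
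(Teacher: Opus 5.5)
Your proposal is correct and follows the paper's proof essentially verbatim: Hoeffding per entry with the almost-sure range $[-c,c]$, $c=2(R^2+b)^2/\varepsilon$, then a union bound over the at most $N^2$ pairs, and solving $2N^2\exp(-Ds^2/(2c^2))=\delta$ for $s$. Your remarks on the constant and on general $D'$ (via $\|\psi_t\|^2\le 2$) match the appendix's stated facts.
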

\begin{corollary}[Sample complexity]\label{cor:sample_complexity}
$D=O\!\bigl((R^2+b)^4\varepsilon^{-2}\tau^{-2}\log(N/\eta)\bigr)$ outer samples suffice for entrywise error at most $\tau$ with probability $1-\eta$, independent of dimension $d$.
\end{corollary}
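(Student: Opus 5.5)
The statement to prove is Corollary~\ref{cor:sample_complexity}. I would obtain it by inverting the uniform bound of Theorem~\ref{thm:uniform}, so the plan is to set up that bound and then solve for $D$.

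\textbf{Step 1: the per-entry summands.} For a fixed pair $(i,j)$, write the entry error as an average of $D$ i.i.d.\ centered summands,
\[
z(x_i)^\top z(x_j)-k_{\yat,b}(x_i,x_j)=\frac1D\sum_{\ell=1}^D\bigl(Y_\ell-\mathbb{E}Y_\ell\bigr),
\]
with $Y_\ell=(\psi_{t_\ell}(x_i)^\top\psi_{t_\ell}(x_j))(x_i^\top x_j+b)^2/\varepsilon$. Unbiasedness comes from Theorem~\ref{thm:unbiased}.

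\textbf{Step 2: the almost-sure range.} The a.s.\ bound $|\psi_t(x)^\top\psi_t(w)|\le2$ from the opening of this appendix gives $|Y_\ell|\le 2(x_i^\top x_j+b)^2/\varepsilon$. By Cauchy--Schwarz with $b\ge0$, $(x_i^\top x_j+b)^2\le(R^2+b)^2$. So each $Y_\ell$ lies in an interval of length $4(R^2+b)^2/\varepsilon$.

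\textbf{Step 3: Hoeffding and a union bound.} Hoeffding's inequality gives
\[
\mathbb{P}\Bigl(|\cdot|>s\Bigr)\le2\exp\bigl(-Ds^2\varepsilon^2/(8(R^2+b)^4)\bigr).
\]
A union bound over the at most $N^2$ ordered pairs, with the right side set equal to $\delta$, reproduces exactly the bound~\eqref{eq:uniform_bound}, constant $8$ included.

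\textbf{Step 4: invert for $D$.} Rename the failure probability $\eta$ and require the right side of~\eqref{eq:uniform_bound} to be at most $\tau$. This gives
\[
D\ge 8(R^2+b)^4\varepsilon^{-2}\tau^{-2}\log(2N^2/\eta).
\]
Since $\log(2N^2/\eta)\le 2\log(N/\eta)+\log 2$, this is $O\bigl((R^2+b)^4\varepsilon^{-2}\tau^{-2}\log(N/\eta)\bigr)$, assuming $N\ge2$ or $\eta<1$ so the $\log 2$ is absorbed.

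\textbf{Step 5: no dependence on $d$.} Neither the range bound nor the union-bound cardinality involves $d$. The frequencies $\omega$ live in $\mathbb{R}^d$, but they enter only through the bounded cosine, so the count carries no explicit $d$.

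The only delicate point is Step 2: it must use the a.s.\ range $2$ for the radial factor rather than its expectation, so that Hoeffding applies. The rest is routine inversion.
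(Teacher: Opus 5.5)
Your proposal is correct and follows the paper's route exactly: the corollary is Theorem~\ref{thm:uniform} inverted for $D$, and that theorem is proved by Hoeffding with per-summand range $2(R^2+b)^2/\varepsilon$ plus a union bound over $\le N^2$ pairs, which is your Steps 1--4. One cosmetic point: absorbing the $\log 2$ needs $N/\eta$ bounded away from $1$ (for example $N\ge2$, which gives $\log(2N^2/\eta)\le 3\log(N/\eta)$); $\eta<1$ alone is not enough, since with $N=1$ the ratio $\log(2/\eta)/\log(1/\eta)$ is unbounded as $\eta\to1$.
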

\begin{corollary}[Operator-norm control]\label{thm:gram_concentration}
Since $\|A\|_{\mathrm{op}}\le\|A\|_F\le N\max_{ij}|A_{ij}|$, Theorem~\ref{thm:uniform} gives $\|K_D-K\|_{\mathrm{op}}\le N(R^2+b)^2\varepsilon^{-1}\sqrt{8\log(2N^2/\delta)/D}$ with probability $1-\delta$. The factor $N$ is the price of this elementary route; Theorem~\ref{thm:bernstein} removes it via matrix concentration, replacing $N\max_{ij}|\cdot|$ with the top eigenvalues of the kernel and modulation Gram matrices and an intrinsic dimension (which can still scale with $N$ in the worst case).
\end{corollary}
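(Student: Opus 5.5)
The statement is Corollary~\ref{thm:gram_concentration}. I would obtain it directly from Theorem~\ref{thm:uniform}, plus an elementary norm comparison for the error matrix $E=K_D-K$, where $E_{ij}=z(x_i)^\top z(x_j)-k_{\yat,b}(x_i,x_j)$. No new probabilistic argument is needed: the whole claim holds on the event of Theorem~\ref{thm:uniform}. That event has probability at least $1-\delta$, and on it every entry satisfies $|E_{ij}|\le(R^2+b)^2\varepsilon^{-1}\sqrt{8\log(2N^2/\delta)/D}$.

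The deterministic step is the chain $\|E\|_{\mathrm{op}}\le\|E\|_F\le N\max_{ij}|E_{ij}|$.

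The first inequality holds because the operator norm is the largest singular value, while the Frobenius norm is the $\ell_2$ norm of all the singular values.

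The second holds because $\|E\|_F^2=\sum_{i,j}E_{ij}^2\le N^2\max_{ij}E_{ij}^2$. Taking square roots gives the factor $N$.

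Substituting the entrywise bound on the good event gives
\[
\|K_D-K\|_{\mathrm{op}}\le N(R^2+b)^2\varepsilon^{-1}\sqrt{8\log(2N^2/\delta)/D}
\]
with probability at least $1-\delta$.

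I do not expect a real obstacle. The only point to check is that the union bound in Theorem~\ref{thm:uniform} already covers all $N^2$ ordered pairs, including the diagonal. This is reflected in the $2N^2$ inside the logarithm, so the supremum controls every entry of $E$ simultaneously and no further probability is lost. The closing remark of the statement, that Theorem~\ref{thm:bernstein} removes the factor $N$, is a comparison rather than a claim needing proof. I would at most observe that $\|P\|_{\mathrm{op}}\le N(R^2+b)^2/\varepsilon$ and $d_{\mathrm{int}}\le N$, so that bound is never worse in order.
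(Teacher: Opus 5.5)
Your proposal is correct and matches the paper's argument: the paper likewise gives no separate proof, deriving the corollary by applying the deterministic chain $\|A\|_{\mathrm{op}}\le\|A\|_F\le N\max_{ij}|A_{ij}|$ on the high-probability event of Theorem~\ref{thm:uniform}. Your check that the union bound over all $N^2$ ordered pairs (diagonal included) already controls every entry simultaneously is exactly the point that makes this work with no extra loss in probability.
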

\begin{proof}[Proof of Theorem~\ref{thm:uniform}]
Fix $(x_i,x_j)$; the estimator averages $D$ i.i.d.\ terms $Y_k\in[-c,c]$ with $c=2(R^2+b)^2/\varepsilon$ by the a.s.\ bound above. Hoeffding's inequality gives $\mathbb{P}(|z(x_i)^\top z(x_j)-k_{\yat,b}|>s)\le2\exp(-Ds^2/(2c^2))=2\exp\!\bigl(-Ds^2\varepsilon^2/(8(R^2+b)^4)\bigr)$. A union bound over the $\le N^2$ pairs and solving $2N^2\exp(\cdot)\le\delta$ for $s$ yields~\eqref{eq:uniform_bound}.
\end{proof}

\begin{lemma}[Schur multiplier bound]\label{lem:schur}
Let $A\succeq0$ with $\max_i A_{ii}\le a$. \textbf{(a)} If $P\succeq0$, then $0\preceq A\circ P\preceq a\|P\|_{\mathrm{op}}I$. \textbf{(b)} If $E=E^\top$ is symmetric (not necessarily PSD), then $\|A\circ E\|_{\mathrm{op}}\le a\|E\|_{\mathrm{op}}$.
\end{lemma}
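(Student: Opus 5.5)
For part (a), I will first get positivity from the Schur product theorem: $A\succeq0$ and $P\succeq0$ give $A\circ P\succeq0$. For the upper bound I will use the PSD comparison $P\preceq\|P\|_{\mathrm{op}}I$. The gap $\|P\|_{\mathrm{op}}I-P$ is PSD, so Schur-multiplying it by $A\succeq0$ keeps it PSD. This gives $A\circ P\preceq\|P\|_{\mathrm{op}}\,(A\circ I)$. Now $A\circ I=\diag(A_{11},\dots,A_{NN})\preceq aI$, and this yields the claim. Equivalently, one can factor $A=B^\top B$ with columns $b_i$, $\|b_i\|^2=A_{ii}\le a$, and compute $v^\top(A\circ P)v=\sum_k (v\circ b^{(k)})^\top P(v\circ b^{(k)})$, where $b^{(k)}$ is the $k$-th row of $B$. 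This is at most $\|P\|_{\mathrm{op}}\sum_k\|v\circ b^{(k)}\|^2=\|P\|_{\mathrm{op}}\sum_i v_i^2A_{ii}\le a\|P\|_{\mathrm{op}}\|v\|^2$. I would present the Loewner-gap argument as the main line, since it reuses the same device as Proposition~\ref{prop:ridge_sketch}.

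For part (b), $E$ is only symmetric, so I will sandwich it: $-\|E\|_{\mathrm{op}}I\preceq E\preceq\|E\|_{\mathrm{op}}I$. Both gaps $\|E\|_{\mathrm{op}}I-E$ and $E+\|E\|_{\mathrm{op}}I$ are PSD. Schur-multiplying each by $A\succeq0$ preserves PSD-ness, which gives
\[
-\|E\|_{\mathrm{op}}\,(A\circ I)\preceq A\circ E\preceq\|E\|_{\mathrm{op}}\,(A\circ I).
\]
Since $0\preceq A\circ I\preceq aI$, this becomes $-a\|E\|_{\mathrm{op}}I\preceq A\circ E\preceq a\|E\|_{\mathrm{op}}I$. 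Because $A\circ E$ is symmetric, its operator norm is its largest absolute eigenvalue, so $\|A\circ E\|_{\mathrm{op}}\le a\|E\|_{\mathrm{op}}$.

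No step is a real obstacle; both parts are linearity of the Schur product plus the Schur product theorem. The one point needing care is in (b): $E$ is not PSD, so the $B^\top B$ quadratic-form computation from (a) does not directly bound $|v^\top(A\circ E)v|$ without absolute values. The two-sided Loewner sandwich avoids this cleanly, and I would use it rather than attempting a Cauchy--Schwarz bound on the factored form.
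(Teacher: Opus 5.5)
Your proof is correct, but your main line differs from the paper's. The paper factors $A=UU^\top=\sum_r u_ru_r^\top$ and writes $A\circ E=\sum_r D_rED_r$ with $D_r=\diag(u_r)$, noting $\sum_rD_r^2=\diag(A_{11},\dots,A_{NN})\preceq aI$. It proves (b) first with one quadratic-form estimate, $|x^\top(A\circ E)x|\le\sum_r|(D_rx)^\top E(D_rx)|\le\|E\|_{\mathrm{op}}\,x^\top\bigl(\sum_rD_r^2\bigr)x$, and obtains (a) as the special case $E=P$, where every summand is nonnegative.

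You instead treat the Schur product theorem as a black box, using it for Loewner monotonicity of $M\mapsto A\circ M$. You apply that to the gaps $\|E\|_{\mathrm{op}}I\mp E$ and finish with $A\circ I=\diag(A)\preceq aI$. Your route is shorter and matches the device of Proposition~\ref{prop:ridge_sketch}. The paper's route is self-contained, since the decomposition itself shows $A\circ P\succeq0$, and it handles both parts with a single computation. Both arrive at the same intermediate bound, $-\|E\|_{\mathrm{op}}\diag(A)\preceq A\circ E\preceq\|E\|_{\mathrm{op}}\diag(A)$.

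One correction to your closing remark. Your alternative computation for (a), with $v\circ b^{(k)}=D_kv$, is exactly the paper's argument, and it extends to a non-PSD symmetric $E$ without difficulty. Put the absolute value on each summand and use $|y^\top Ey|\le\|E\|_{\mathrm{op}}\|y\|^2$; no Cauchy--Schwarz step or sandwich is needed.
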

\begin{proof}
Write $A=UU^\top=\sum_r u_r u_r^\top$ with $u_r$ the columns of $U$; then $A\circ E=\sum_r(u_r u_r^\top)\circ E=\sum_r D_r E D_r$ with $D_r=\operatorname{diag}(u_r)$, and the diagonal matrix $\sum_r D_r^2$ has entries $\sum_r(u_r)_i^2=(UU^\top)_{ii}=A_{ii}\le a$, so $\sum_r D_r^2\preceq aI$. For any unit vector $x$ and symmetric $E$,
\[
\bigl|x^\top(A\circ E)x\bigr|=\Bigl|\sum_r(D_r x)^\top E(D_r x)\Bigr|\le\|E\|_{\mathrm{op}}\sum_r\|D_r x\|^2=\|E\|_{\mathrm{op}}\,x^\top\Bigl(\textstyle\sum_r D_r^2\Bigr)x\le a\|E\|_{\mathrm{op}},
\]
which is~(b). Taking $E=P\succeq0$ removes the absolute value (each summand is nonnegative), giving $0\le x^\top(A\circ P)x\le a\|P\|_{\mathrm{op}}$ and $A\circ P\succeq0$ by the Schur product theorem, which is~(a).
\end{proof}

\begin{proof}[Proof of Theorem~\ref{thm:bernstein}]
Write $K_D-K=\sum_{j=1}^D Y_j$ with $Y_j=D^{-1}(K^{(j)}-K)$ and $K^{(j)}=(\Psi_j\Psi_j^\top)\circ P$ the Gram of the $j$-th radial draw; the $Y_j$ are i.i.d.\ symmetric with $\mathbb{E}[Y_j]=0$ (Theorem~\ref{thm:unbiased}). Two structural facts: $K^{(j)}\succeq0$, being a Schur product of the PSD matrices $\Psi_j\Psi_j^\top$ and $P$; and $\|K^{(j)}\|_{\mathrm{op}}\le2\|P\|_{\mathrm{op}}$ by Lemma~\ref{lem:schur} with $(\Psi_j\Psi_j^\top)_{ii}=\|\psi_{t_j}(x_i)\|^2\le2$. Likewise $\|K\|_{\mathrm{op}}=\|R\circ P\|_{\mathrm{op}}\le\|P\|_{\mathrm{op}}$ by Lemma~\ref{lem:schur}, with $R$ the radial Gram (PSD, unit diagonal).

\emph{A.s.\ bound.} $\|Y_j\|_{\mathrm{op}}\le D^{-1}(\|K^{(j)}\|_{\mathrm{op}}+\|K\|_{\mathrm{op}})\le 3\|P\|_{\mathrm{op}}/D=:L$.

\emph{Matrix variance.} For PSD $M$, $M^2\preceq\|M\|_{\mathrm{op}}M$; with $\|K^{(j)}\|_{\mathrm{op}}\le2\|P\|_{\mathrm{op}}$ this gives $(K^{(j)})^2\preceq2\|P\|_{\mathrm{op}}K^{(j)}$, so $\mathbb{E}[(K^{(j)})^2]\preceq2\|P\|_{\mathrm{op}}K$ and $\mathbb{E}[(K^{(j)}-K)^2]=\mathbb{E}[(K^{(j)})^2]-K^2\preceq2\|P\|_{\mathrm{op}}K$. Hence
\[
V=\sum_{j=1}^D\mathbb{E}[Y_j^2]=\tfrac1D\,\mathbb{E}[(K^{(1)}-K)^2]\preceq\tfrac{2\|P\|_{\mathrm{op}}}{D}\,K,\qquad \|V\|_{\mathrm{op}}\le\tfrac{2\|P\|_{\mathrm{op}}\|K\|_{\mathrm{op}}}{D}=:v.
\]

\emph{Conclusion (expectation).} We use the expectation form obtained by integrating the two-sided intrinsic tail below, rather than substituting $d_{\mathrm{int}}$ into the ambient-dimension expectation theorem. Let $\ell_0=\log(8d_{\mathrm{int}})$ and $s(u)=\sqrt{2vu}+\tfrac23Lu$. The tail bound gives $\mathbb{P}\{\|\sum_jY_j\|_{\mathrm{op}}>s(\ell_0+r)\}\le e^{-r}$ for $r\ge0$; integrating this quantile bound and using $\ell_0\ge1$ gives
\[
\mathbb{E}\|\textstyle\sum_jY_j\|_{\mathrm{op}}\le s(\ell_0)+\int_0^\infty e^{-r}s'(\ell_0+r)\,dr
\le 3\sqrt{v\ell_0}+2L\ell_0.
\]
Substituting $v=2\|P\|_{\mathrm{op}}\|K\|_{\mathrm{op}}/D$ and $L=3\|P\|_{\mathrm{op}}/D$ yields the stated bound; $d_{\mathrm{int}}\le\operatorname{rank}(V)\le N$ and $\|P\|_{\mathrm{op}}\le\operatorname{tr}(P)\le N(R^2+b)^2/\varepsilon$ give the worst-case comparison to Corollary~\ref{thm:gram_concentration}.

\emph{Conclusion (tail).} With the \emph{same} $v$ and $L$, the tail form of the intrinsic matrix Bernstein inequality \citep[Thm.~7.3.1]{tropp2015matrix} bounds the top eigenvalue. Applying it to both $\sum_jY_j$ and $-\sum_jY_j$ and union bounding gives $\mathbb{P}\{\|\sum_jY_j\|_{\mathrm{op}}\ge s\}\le 8d_{\mathrm{int}}\exp\!\bigl(-\tfrac{s^2/2}{v+Ls/3}\bigr)$ for $s\ge\sqrt v+L/3$. Setting the right side to $\delta$ and writing $\ell=\log(8d_{\mathrm{int}}/\delta)$, the quadratic $s^2-\tfrac23 L\ell\,s-2v\ell=0$ has positive root $s=\tfrac13 L\ell+\sqrt{(\tfrac13 L\ell)^2+2v\ell}\le\sqrt{2v\ell}+\tfrac23 L\ell$. Because $\ell\ge\log 8>1$, this root obeys the range restriction: $\sqrt{2v\ell}\ge\sqrt v$ and $\tfrac23L\ell\ge L/3$. Hence, with probability at least $1-\delta$, $\|\sum_jY_j\|_{\mathrm{op}}\le\sqrt{2v\log(8d_{\mathrm{int}}/\delta)}+\tfrac23 L\log(8d_{\mathrm{int}}/\delta)$, which is~\eqref{eq:bernstein_tail} after substituting $v$ and $L$ (the linear coefficient is $\tfrac23 L=2\|P\|_{\mathrm{op}}/D$).
\end{proof}

\begin{proof}[Proof of Theorem~\ref{thm:exact_variance}]
Write $Y=2\cos(\omega^\top x+\beta)\cos(\omega^\top w+\beta)=\cos(\omega^\top(x-w))+\cos(\omega^\top(x+w)+2\beta)$. The second term has zero mean over $\beta\sim\mathrm{Unif}[0,2\pi]$, so with $\omega\mid T\sim\mathcal{N}(0,2TI_d)$ and the Gaussian characteristic function $\mathbb{E}_\omega[\cos(\omega^\top u)]=e^{-T\|u\|^2}$, $\mathbb{E}[Y\mid T]=e^{-Tr}$ and $\mathbb{E}[Y]=\mathbb{E}_{T\sim\Exp(\varepsilon)}[e^{-Tr}]=\varepsilon/(\varepsilon+r)$. For the second moment, $\mathbb{E}_\beta[Y^2\mid\omega]=\cos^2(\omega^\top(x-w))+\tfrac12$, and $\mathbb{E}_\omega[\cos^2(\omega^\top(x-w))\mid T]=\tfrac12(1+e^{-4Tr})$ (using $\cos^2\theta=\tfrac12(1+\cos2\theta)$ and $\mathbb{E}[\cos(2\omega^\top(x-w))\mid T]=e^{-4Tr}$). Hence $\mathbb{E}[Y^2\mid T]=1+\tfrac12 e^{-4Tr}$ and $\mathbb{E}[Y^2]=1+\tfrac12\varepsilon/(\varepsilon+4r)$. Thus $\Var(Y)=1+\tfrac12\varepsilon/(\varepsilon+4r)-(\varepsilon/(\varepsilon+r))^2$, and $\widehat{k}_D$ averages $D$ i.i.d.\ copies scaled by $a/\varepsilon$, giving $\Var[\widehat{k}_D]=(a/\varepsilon)^2\Var(Y)/D$.
\end{proof}

\begin{proof}[Proof of Theorem~\ref{thm:krr_spectral}]
Let $B=A^{-1/2}EA^{-1/2}$, so $\|B\|_{\mathrm{op}}\le\rho$ gives $-\rho I\preceq B\preceq\rho I$, and multiplying by $A^{1/2}$ on both sides gives $-\rho A\preceq E\preceq\rho A$; since $K_D+\lambda I=A+E$ this is the spectral sandwich. For the coefficients, $\tilde\alpha-\hat\alpha=(A+E)^{-1}y-A^{-1}y=-(A+E)^{-1}E\hat\alpha$, so $A^{1/2}(\tilde\alpha-\hat\alpha)=-(I+B)^{-1}B\,A^{1/2}\hat\alpha$. As $\|B\|_{\mathrm{op}}\le\rho<1$, $\|(I+B)^{-1}\|_{\mathrm{op}}\le(1-\rho)^{-1}$, hence $\|\tilde\alpha-\hat\alpha\|_A=\|A^{1/2}(\tilde\alpha-\hat\alpha)\|_2\le\frac{\rho}{1-\rho}\|A^{1/2}\hat\alpha\|_2=\frac{\rho}{1-\rho}\|\hat\alpha\|_A$.
\end{proof}

\begin{proof}[Proof of Theorem~\ref{thm:krr_whitened}]
Write $A^{-1/2}(K_D-K)A^{-1/2}=\sum_{j=1}^D\widetilde Y_j$ with $\widetilde Y_j=D^{-1}A^{-1/2}(K^{(j)}-K)A^{-1/2}$, i.i.d.\ symmetric with mean zero. The three ingredients of the proof of Theorem~\ref{thm:bernstein} whiten as follows.

\emph{(a) A.s.\ bound.} $K^{(j)}\succeq0$ with $\|K^{(j)}\|_{\mathrm{op}}\le2\|P\|_{\mathrm{op}}$ gives $0\preceq A^{-1/2}K^{(j)}A^{-1/2}\preceq2\|P\|_{\mathrm{op}}A^{-1}\preceq(2\|P\|_{\mathrm{op}}/\lambda)I$, and $K\preceq A$ gives $0\preceq A^{-1/2}KA^{-1/2}\preceq I$; hence $\|\widetilde Y_j\|_{\mathrm{op}}\le L_\lambda:=(1+2\|P\|_{\mathrm{op}}/\lambda)/D$.

\emph{(b) Variance majorant.} For PSD $M$, $MA^{-1}M=M^{1/2}(M^{1/2}A^{-1}M^{1/2})M^{1/2}\preceq\lambda^{-1}\|M\|_{\mathrm{op}}M$; with $M=K^{(j)}$ and $\|K^{(j)}\|_{\mathrm{op}}\le2\|P\|_{\mathrm{op}}$, $\mathbb{E}[K^{(j)}A^{-1}K^{(j)}]\preceq(2\|P\|_{\mathrm{op}}/\lambda)\,K$. Since $\mathbb{E}[(K^{(j)}-K)A^{-1}(K^{(j)}-K)]=\mathbb{E}[K^{(j)}A^{-1}K^{(j)}]-KA^{-1}K\preceq\mathbb{E}[K^{(j)}A^{-1}K^{(j)}]$,
\[
\sum_{j}\mathbb{E}[\widetilde Y_j^2]=\tfrac1D\,A^{-1/2}\,\mathbb{E}\bigl[(K^{(1)}-K)A^{-1}(K^{(1)}-K)\bigr]A^{-1/2}\;\preceq\;\widetilde V:=\frac{2\|P\|_{\mathrm{op}}}{\lambda D}\,A^{-1/2}KA^{-1/2}.
\]

\emph{(c) Intrinsic dimension.} $\|\widetilde V\|_{\mathrm{op}}=\frac{2\|P\|_{\mathrm{op}}}{\lambda D}\kappa_\lambda$ and $\operatorname{tr}(\widetilde V)=\frac{2\|P\|_{\mathrm{op}}}{\lambda D}\,d_{\mathrm{eff}}(\lambda)$, so $\operatorname{intdim}(\widetilde V)=d_{\mathrm{eff}}(\lambda)/\kappa_\lambda=\tilde d_\lambda$.

The intrinsic-dimension matrix Bernstein inequality \citep[Thm.~7.3.1]{tropp2015matrix} accepts the semidefinite majorant $\widetilde V$; applied to $\pm\sum_j\widetilde Y_j$ (the union over the two tails giving the prefactor $8$) and inverted by the quadratic-root computation in the proof of Corollary~\ref{cor:bernstein_tail}, it yields, with probability at least $1-\delta$ and $\ell=\log(8\tilde d_\lambda/\delta)$, $\rho_D\le\sqrt{2\|\widetilde V\|_{\mathrm{op}}\ell}+\tfrac23L_\lambda\ell$, which is~\eqref{eq:whitened_tail}. For the count: $2\sqrt{\kappa_\lambda\|P\|_{\mathrm{op}}\ell/(\lambda D)}\le\rho_0/2$ iff $D\ge16\kappa_\lambda\|P\|_{\mathrm{op}}\ell/(\lambda\rho_0^2)$, and $\tfrac23(1+2\|P\|_{\mathrm{op}}/\lambda)\ell/D\le\rho_0/2$ iff $D\ge\tfrac43(1+2\|P\|_{\mathrm{op}}/\lambda)\ell/\rho_0$; both follow from $D\ge16\rho_0^{-2}(1+\|P\|_{\mathrm{op}}/\lambda)\ell$ using $\kappa_\lambda\le1$, $1+2s\le2(1+s)$, and $\rho_0\le1$, giving~\eqref{eq:whitened_count}.
\end{proof}

\begin{proof}[Proof of Theorem~\ref{thm:krr_leverage}]
\emph{The leverage identity.} With $D_\psi=\diag(\psi_\theta)$, $K^{(\theta)}=D_\psi PD_\psi$, so $\operatorname{tr}(A^{-1}K^{(\theta)})=\operatorname{tr}(D_\psi A^{-1}D_\psi P)=\psi_\theta^\top(A^{-1}\circ P)\psi_\theta$. The matrix $A^{-1}\circ P$ is a Schur product of PSD matrices, hence PSD, so $\bar d_\lambda\ge0$; and $\mathbb{E}_\pi[\psi_\theta\psi_\theta^\top]=R$ (the unit-diagonal radial Gram) gives $\mathbb{E}_\pi[\bar d_\lambda]=\operatorname{tr}(A^{-1}(R\circ P))=\operatorname{tr}(A^{-1}K)=d_{\mathrm{eff}}(\lambda)$, so $\pi^*_\lambda$ is a probability law. On $\{\bar d_\lambda=0\}$ the PSD matrix $A^{-1/2}K^{(\theta)}A^{-1/2}$ has zero trace, hence is zero, so $K^{(\theta)}=0$ ($A^{-1/2}$ is invertible) and discarding these draws loses nothing. \emph{Unbiasedness:} $\mathbb{E}_{\pi^*_\lambda}\bigl[\tfrac{d_{\mathrm{eff}}}{\bar d_\lambda}K^{(\theta)}\bigr]=\int K^{(\theta)}\mathbbm{1}\{\bar d_\lambda>0\}\,d\pi=K$.

\emph{Whitened ingredients.} Set $M_\theta=\tfrac{d_{\mathrm{eff}}}{\bar d_\lambda(\theta)}A^{-1/2}K^{(\theta)}A^{-1/2}\succeq0$ and $\widetilde Y_j=D^{-1}\bigl(M_{\theta_j}-A^{-1/2}KA^{-1/2}\bigr)$, i.i.d.\ with mean zero under $\pi^*_\lambda$. \emph{(a) A.s.\ bound.} On the support of $\pi^*_\lambda$, $\|M_\theta\|_{\mathrm{op}}\le\operatorname{tr}(M_\theta)=\tfrac{d_{\mathrm{eff}}}{\bar d_\lambda}\operatorname{tr}(A^{-1}K^{(\theta)})=d_{\mathrm{eff}}$ (a PSD matrix's norm is at most its trace, and the tilt normalizes the trace exactly); with $\|A^{-1/2}KA^{-1/2}\|_{\mathrm{op}}\le1$, $\|\widetilde Y_j\|_{\mathrm{op}}\le L^*:=(1+d_{\mathrm{eff}})/D$. \emph{(b) Variance majorant.} $M_\theta^2\preceq\|M_\theta\|_{\mathrm{op}}M_\theta\preceq d_{\mathrm{eff}}M_\theta$ and $\mathbb{E}_{\pi^*_\lambda}[M_\theta]=A^{-1/2}KA^{-1/2}$, so
\[
\sum_{j=1}^D\mathbb{E}[\widetilde Y_j^2]\preceq\tfrac1D\,\mathbb{E}_{\pi^*_\lambda}[M_\theta^2]\preceq\widetilde V^*:=\frac{d_{\mathrm{eff}}}{D}\,A^{-1/2}KA^{-1/2}.
\]
\emph{(c) Intrinsic dimension.} $\widetilde V^*$ is the same core matrix $A^{-1/2}KA^{-1/2}$ as in the proof of Theorem~\ref{thm:krr_whitened}, rescaled, so $\operatorname{intdim}(\widetilde V^*)=\tilde d_\lambda$ exactly and $\|\widetilde V^*\|_{\mathrm{op}}=d_{\mathrm{eff}}\kappa_\lambda/D$.

The intrinsic-dimension matrix Bernstein inequality applied to $\pm\sum_j\widetilde Y_j$ and inverted as in Corollary~\ref{cor:bernstein_tail} gives, with probability at least $1-\delta$ and $\ell=\log(8\tilde d_\lambda/\delta)$, $\rho_D\le\sqrt{2\|\widetilde V^*\|_{\mathrm{op}}\ell}+\tfrac23L^*\ell$, which is~\eqref{eq:leverage_tail}. For the count: $\sqrt{2d_{\mathrm{eff}}\kappa_\lambda\ell/D}\le\rho_0/2$ iff $D\ge8d_{\mathrm{eff}}\kappa_\lambda\ell/\rho_0^2$, and $\tfrac23(1+d_{\mathrm{eff}})\ell/D\le\rho_0/2$ iff $D\ge\tfrac43(1+d_{\mathrm{eff}})\ell/\rho_0$; both follow from $D\ge8\rho_0^{-2}(1+d_{\mathrm{eff}})\ell$ using $\kappa_\lambda\le1$ and $\rho_0\le1$, giving~\eqref{eq:leverage_count}.
\end{proof}

\begin{proof}[Proof of Corollary~\ref{cor:krr_highprob}]
The sandwich and the coefficient bound are Theorem~\ref{thm:krr_spectral} on the event $\rho_D\le\rho_0$. For the objective value, the representer computation gives the optimal value $\lambda y^\top(K+\lambda I)^{-1}y$ for the exact problem (substitute $\hat\alpha=(K+\lambda I)^{-1}y$: $\|K\hat\alpha-y\|^2+\lambda\hat\alpha^\top K\hat\alpha=\lambda^2y^\top A^{-2}y+\lambda y^\top A^{-1}KA^{-1}y=\lambda y^\top A^{-1}y$) and likewise $\lambda y^\top(K_D+\lambda I)^{-1}y$ for the feature-space ridge problem with Gram $K_D$. Operator anti-monotonicity of the inverse on $(1-\rho_0)A\preceq K_D+\lambda I\preceq(1+\rho_0)A$ gives $(1+\rho_0)^{-1}A^{-1}\preceq(K_D+\lambda I)^{-1}\preceq(1-\rho_0)^{-1}A^{-1}$; evaluating the quadratic form at $y$ finishes.
\end{proof}

\begin{proof}[Proof of Corollary~\ref{cor:krr_deployed}]
Proposition~\ref{prop:ridge_sketch} gives, on the sketch event, the deterministic bound $\rho_{\mathrm{sk}}:=\|A^{-1/2}(K_S-K)A^{-1/2}\|_{\mathrm{op}}\le\eta$ for $K_S=\widehat P_m\circ R$. Conditioned on the sketch, $\widehat K_{D,m}$ is an exact-modulation estimate of $K_S$ with modulation Gram $\widehat P_m$ and $\|\widehat P_m\|_{\mathrm{op}}\le(1+\eta)\|P\|_{\mathrm{op}}$. Applying Theorem~\ref{thm:krr_whitened} to the conditioned pair $(\widehat P_m,K_S)$, whitened by $A_S=K_S+\lambda I$, with target $\rho_0/2$ and intrinsic dimension $\tilde d_{\lambda,S}$ gives
\[
\bigl\|A_S^{-1/2}(\widehat K_{D,m}-K_S)A_S^{-1/2}\bigr\|_{\mathrm{op}}\le\rho_0/2
\]
under the count in the statement. The ridge transfer also gives $A_S\preceq(1+\eta)A$, so the same radial error measured in the $A$-whitened norm is at most $(1+\eta)\rho_0/2$. Therefore
\[
\bigl\|A^{-1/2}(\widehat K_{D,m}-K)A^{-1/2}\bigr\|_{\mathrm{op}}
\le (1+\eta)\rho_0/2+\eta \le \rho_0,
\]
where the last inequality follows from $\eta\le\rho_0/4$ and $\rho_0\le1$. The final claim is Corollary~\ref{cor:krr_highprob} applied on the combined event.
\end{proof}

\begin{proposition}[Normalized estimator]\label{prop:normalized}
Let $q_b(x)=p_b(x)/(\|x\|^2+b)$, so $\|q_b(x)\|=1$ (for $b>0$ this is defined on all of $\mathbb{R}^d$; for $b=0$ we restrict to domains excluding the origin). Then $q_b(x)^\top q_b(w)=(x^\top w+b)^2/\bigl((\|x\|^2+b)(\|w\|^2+b)\bigr)$, and the normalized kernel $\bar k_{\yat,b}(x,w)=q_b(x)^\top q_b(w)\,(\|x-w\|^2+\varepsilon)^{-1}$ is positive definite. Its flat estimator (replace $p_b$ by $q_b$) is unbiased with, by Theorem~\ref{thm:exact_variance} and $\|q_b\|=1$, $\Var[\widehat{\bar k}_D(x,w)]\le\frac{1}{D\varepsilon^2}(1+\tfrac12\tfrac{\varepsilon}{\varepsilon+4\|x-w\|^2})\le\frac{3}{2D\varepsilon^2}$, a bound free of $R$ and $b$.
\end{proposition}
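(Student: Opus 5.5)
The plan has three parts: verify the algebraic identities, check positive definiteness, and transfer the variance computation. For the identities, I will start from Proposition~\ref{prop:biased_feature}. It gives $p_b(x)^\top p_b(w)=(x^\top w+b)^2$ and $\|p_b(x)\|^2=(\|x\|^2+b)^2$, so $\|p_b(x)\|=\|x\|^2+b$, which is nonnegative because $b\ge0$. This quantity is strictly positive whenever $b>0$, or when $b=0$ and $x\ne0$. Dividing by it gives $\|q_b(x)\|=1$, and the inner product formula follows by dividing $p_b(x)^\top p_b(w)$ by $(\|x\|^2+b)(\|w\|^2+b)$.

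Positive definiteness comes from the same Schur argument as Eq.~\eqref{eq:schur}. The map $(x,w)\mapsto q_b(x)^\top q_b(w)$ is an explicit finite-feature kernel, so every Gram matrix $Q=[q_b(x_i)^\top q_b(x_j)]$ is PSD. The radial factor $h_\varepsilon$ is PSD by the Bernstein--Widder mixture~\eqref{eq:bernstein}. The Schur product theorem then gives $Q\circ H\succeq0$. Equivalently, $\bar k_{\yat,b}$ is a Bernstein--Schur kernel with $u=q_b$ and $d\nu=e^{-\varepsilon t}dt$, so Theorem~\ref{thm:bernstein_schur} applies directly. That theorem also gives unbiasedness for the flat estimator with $m_f=1/\varepsilon$, using the feature $\varepsilon^{-1/2}q_b$ in place of $p$.

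For the variance, the proof of Theorem~\ref{thm:exact_variance} goes through with $a=(x^\top w+b)^2$ replaced by $\bar a=q_b(x)^\top q_b(w)$. The modulation factor is deterministic (Proposition~\ref{prop:variance_sharp}), so
\[
\Var[\widehat{\bar k}_D]=\frac{\bar a^2}{D\varepsilon^2}\Bigl[1+\tfrac12\tfrac{\varepsilon}{\varepsilon+4r}-\bigl(\tfrac{\varepsilon}{\varepsilon+r}\bigr)^2\Bigr].
\]
Cauchy--Schwarz with unit-norm features gives $\bar a\in[0,1]$. Dropping the subtracted square then yields the first bound, and the bound $\varepsilon/(\varepsilon+4r)\le1$ yields $3/(2D\varepsilon^2)$.

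No step is a real obstacle; the only point needing care is the domain caveat at $b=0$. There $\|x\|^2+b$ vanishes at the origin, which is why that case is restricted to domains excluding the origin; off it the normalization is well defined and the argument above applies unchanged.
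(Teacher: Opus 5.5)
Your proposal is correct and follows the paper's proof essentially step for step. You get the unit norm and the inner-product formula from Proposition~\ref{prop:biased_feature}, positive definiteness from the Schur product with $h_\varepsilon$, and the variance bound by substituting $a=q_b(x)^\top q_b(w)\le1$ into Theorem~\ref{thm:exact_variance}; your extra observation that $\bar k_{\yat,b}$ is a Bernstein--Schur kernel with $u=q_b$ (Theorem~\ref{thm:bernstein_schur}) is a valid, slightly more explicit way to certify the unbiasedness the paper leaves implicit.
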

\begin{proof}[Proof of Proposition~\ref{prop:normalized}]
$\|p_b(x)\|^2=(\|x\|^2+b)^2$ (Proposition~\ref{prop:biased_feature}), so $q_b=p_b/(\|x\|^2+b)$ has unit norm and $q_b(x)^\top q_b(w)=p_b(x)^\top p_b(w)/((\|x\|^2+b)(\|w\|^2+b))=(x^\top w+b)^2/((\|x\|^2+b)(\|w\|^2+b))$. The normalized kernel is a Schur product of this PSD kernel and $h_\varepsilon$, hence PSD; the variance bound is Theorem~\ref{thm:exact_variance} with $a=q_b(x)^\top q_b(w)\le1$.
\end{proof}

\begin{proposition}[RAY error decomposition]\label{prop:ts_variance}
Sketch only the quadratic term: $\widehat p_m(x,w)=\mathrm{TS}_2(x)^\top\mathrm{TS}_2(w)+2b\,x^\top w+b^2$ with $\mathbb{E}[\mathrm{TS}_2(x)^\top\mathrm{TS}_2(w)]=(x^\top w)^2$, so $\mathbb{E}[\widehat p_m]=p=(x^\top w+b)^2$ (the linear and constant terms are kept exact and add no sketch variance). Let $\widehat p_m$ be independent of the unbiased radial estimate $\widehat h_D$ of $h(x,w)=(\|x-w\|^2+\varepsilon)^{-1}$. Then $\widehat k_{D,m}=\widehat p_m\widehat h_D$ is unbiased for $k_{\yat,b}$ and
\[
\Var[\widehat k_{D,m}]=\underbrace{p^2\Var[\widehat h_D]}_{\text{radial Monte Carlo}}+\underbrace{h^2\Var[\widehat p_m]}_{\text{polynomial sketch}}+\underbrace{\Var[\widehat p_m]\,\Var[\widehat h_D]}_{\text{interaction}}.
\]
The degree-2 TensorSketch bound $\Var[\widehat p_m]\le C\,\|x\|^4\|w\|^4/m$ (the bias terms being exact) makes the sketch terms vanish as $m\to\infty$, recovering the exact-modulation estimator.
\end{proposition}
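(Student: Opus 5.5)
The plan is to treat $\widehat k_{D,m}=\widehat p_m\widehat h_D$ as a product of two independent scalar random variables with means $p=(x^\top w+b)^2$ and $h=(\|x-w\|^2+\varepsilon)^{-1}$, and then apply the standard product-variance identity. Unbiasedness comes first. By linearity, $\mathbb{E}[\widehat p_m]=\mathbb{E}[\mathrm{TS}_2(x)^\top\mathrm{TS}_2(w)]+2b\,x^\top w+b^2=(x^\top w)^2+2b\,x^\top w+b^2=p$. The radial estimate satisfies $\mathbb{E}[\widehat h_D]=h$; this is the content of Theorem~\ref{thm:unbiased} with the modulation factor stripped off, i.e.\ the Laplace identity~\eqref{eq:bernstein} combined with the conditional RFF identity. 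Because the sketch randomness is independent of $(t_j,\omega_j,\beta_j)$, we get $\mathbb{E}[\widehat p_m\widehat h_D]=\mathbb{E}[\widehat p_m]\,\mathbb{E}[\widehat h_D]=ph=k_{\yat,b}(x,w)$.

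Next comes the variance identity. Write $\widehat p_m=p+E_p$ and $\widehat h_D=h+E_h$, where $E_p,E_h$ are independent and mean zero; this is the scalar version of Proposition~\ref{prop:gate}. Then
\[
\widehat k_{D,m}-ph=hE_p+pE_h+E_pE_h .
\]
Squaring and taking expectations, every cross term carries a lone factor $\mathbb{E}[E_p]$ or $\mathbb{E}[E_h]$ after factoring by independence, so it vanishes. Specifically, $\mathbb{E}[E_pE_h]=0$, $\mathbb{E}[E_p^2E_h]=\mathbb{E}[E_p^2]\mathbb{E}[E_h]=0$, and symmetrically $\mathbb{E}[E_pE_h^2]=0$. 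What remains is
\[
h^2\Var[\widehat p_m]+p^2\Var[\widehat h_D]+\mathbb{E}[E_p^2]\,\mathbb{E}[E_h^2],
\]
which is exactly the three-term formula. Second moments are finite because the cosine features are bounded and the sketch has finite fourth moments on fixed inputs.

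For the limiting statement, the key step is to cite the degree-2 TensorSketch variance bound of \citet{pham2013fast,avron2014subspace}. This bound controls the estimate of $(x^\top w)^2=\langle x\otimes x,w\otimes w\rangle$ and gives $\Var\le C\|x\otimes x\|^2\|w\otimes w\|^2/m=C\|x\|^4\|w\|^4/m$. Since the linear and constant terms are deterministic, they do not contribute, so $\Var[\widehat p_m]$ is governed by this bound alone. We also need $\Var[\widehat h_D]$ to be bounded; for the flat estimator, Theorem~\ref{thm:exact_variance} with $a=1$ gives $\Var[\widehat h_D]\le 3/(2D\varepsilon^2)$.

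Both sketch-dependent terms, the second and third in the decomposition, are therefore $O(1/m)$ and vanish as $m\to\infty$. What is left is $p^2\Var[\widehat h_D]$, which matches Proposition~\ref{prop:variance_sharp} for the exact-modulation estimator. The main obstacle is making sure the cited sketch bound applies to the quadratic-only $\mathrm{TS}_2$ as constructed, including the hash/sign independence conditions and the constant $C$. The algebraic decomposition itself is routine, so I would invoke the existing TensorSketch result as a black box rather than rederive it.
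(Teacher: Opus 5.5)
Your proposal is correct and is essentially the paper's argument. The paper uses independence to factor the second moment, $\mathbb{E}[\widehat p_m^2\widehat h_D^2]=(p^2+\Var\widehat p_m)(h^2+\Var\widehat h_D)$, and subtracts $p^2h^2$; this is the same computation as your centered expansion $hE_p+pE_h+E_pE_h$ with vanishing cross terms. Your handling of the $m\to\infty$ claim, which cites the TensorSketch variance bound and uses $\Var[\widehat h_D]\le 3/(2D\varepsilon^2)$ from Theorem~\ref{thm:exact_variance}, supplies detail the paper only asserts.
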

\begin{proof}[Proof of Proposition~\ref{prop:ts_variance}]
By independence $\mathbb{E}[\widehat p_m\widehat h_D]=ph=k$ and $\mathbb{E}[\widehat p_m^2\widehat h_D^2]=\mathbb{E}[\widehat p_m^2]\mathbb{E}[\widehat h_D^2]=(p^2+\Var\widehat p_m)(h^2+\Var\widehat h_D)$; subtracting $p^2h^2$ gives the three-term identity.
\end{proof}

\begin{proposition}[Optimal sketch size]\label{prop:optimal_m}
At a fixed feature budget $M=D(m+d+1)$, the deployed variance of Proposition~\ref{prop:ts_variance} is, to leading order in the two sampling counts,
\[
V(m)=\underbrace{\frac{A\,(m+d+1)}{M}}_{\text{radial},\ A=p^2V_{\mathrm{rad}}}+\underbrace{\frac{B}{m}}_{\text{sketch},\ B=h^2C\|x\|^4\|w\|^4}+O\!\Bigl(\frac1M\Bigr),
\]
with $A$ the radial Monte-Carlo constant ($V_{\mathrm{rad}}=\Var[\widehat h_1]$) and $B$ the degree-2 TensorSketch constant. It is minimized at the interior point
\[
m^\star=\sqrt{\frac{B\,M}{A}},\qquad V(m^\star)=\frac{2\sqrt{AB}}{\sqrt M}+\frac{A(d+1)}{M},
\]
so the optimal sketch size grows as $m^\star\propto\sqrt M$ with the budget and as $\sqrt{B/A}$ with the sketch-to-radial constant ratio; its $d$-dependence enters only through that ratio, hence is slow.
\end{proposition}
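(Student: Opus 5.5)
The plan is to start from the three-term identity of Proposition~\ref{prop:ts_variance}, substitute the budget constraint, and then minimize a one-variable convex function. Let $\widehat h_D$ be the flat radial estimate with $D$ i.i.d.\ draws. Then $\Var[\widehat h_D]=V_{\mathrm{rad}}/D$, where $V_{\mathrm{rad}}=\Var[\widehat h_1]$ is the single-draw radial variance; it is finite by Theorem~\ref{thm:exact_variance}, which gives $V_{\mathrm{rad}}=\varepsilon^{-2}[\,1+\tfrac12\tfrac{\varepsilon}{\varepsilon+4r}-(\tfrac{\varepsilon}{\varepsilon+r})^2]$. For the sketch factor I will use the TensorSketch bound $\Var[\widehat p_m]\le C\|x\|^4\|w\|^4/m$ quoted in Proposition~\ref{prop:ts_variance}, and take it as the leading-order model $B'/m$ with $B'=C\|x\|^4\|w\|^4$. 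Substituting these into the identity gives
\[
\Var[\widehat k_{D,m}]=\frac{p^2V_{\mathrm{rad}}}{D}+\frac{h^2B'}{m}+\frac{B'V_{\mathrm{rad}}}{mD}.
\]
Under the budget $M=D(m+d+1)$ we have $1/D=(m+d+1)/M$. The first term therefore becomes $A(m+d+1)/M$ with $A=p^2V_{\mathrm{rad}}$, and the second term is $B/m$ with $B=h^2B'$.

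Next I bound the interaction term. It equals
\[
\frac{B'V_{\mathrm{rad}}}{m}\cdot\frac{m+d+1}{M}=\frac{B'V_{\mathrm{rad}}}{M}\Bigl(1+\frac{d+1}{m}\Bigr)\le\frac{B'V_{\mathrm{rad}}(d+2)}{M}
\]
for $m\ge1$. This is $O(1/M)$ with a constant depending on $(x,w,d)$ but not on $m$, so it can be absorbed into the remainder. This step contains the only real judgment call. The phrase ``to leading order'' has to be read as treating $d$ and the pair as fixed while $M$ grows, and I will state that convention explicitly. Note also that the term $A(d+1)/M$ is kept exact rather than absorbed, because it appears in the displayed value $V(m^\star)$.

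The minimization itself is straightforward. Treating $m$ as a continuous variable, $V(m)=Am/M+B/m+A(d+1)/M$ is strictly convex on $m>0$ because its second derivative is $2B/m^3>0$. Setting the derivative $A/M-B/m^2$ to zero gives $m^\star=\sqrt{BM/A}$. Substituting back, $Am^\star/M$ and $B/m^\star$ both equal $\sqrt{AB/M}$, so $V(m^\star)=2\sqrt{AB}/\sqrt M+A(d+1)/M$.

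The scaling remarks follow directly from this formula: $m^\star\propto\sqrt M$, and $m^\star\propto\sqrt{B/A}$. Neither $A$ nor $B$ contains an explicit factor of $d$; they depend on $d$ only through the data norms and pairs, which is why the $d$-dependence of $m^\star$ is slow. To make $m^\star$ genuinely interior, I will require $1\le m^\star$ and that $D=M/(m^\star+d+1)$ be at least $1$, which holds once $M$ is large relative to $B/A$ and $d$. Since the statement is about leading order, I will note that rounding $m^\star$ to an integer changes $V$ only at second order, because the minimum is a smooth critical point of a convex function.
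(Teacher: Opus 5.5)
Your proposal is correct and follows the paper's argument: substitute $D=M/(m+d+1)$ into the radial term, keep the sketch term $B/m$, and solve $A/M-B/m^2=0$ to get $m^\star=\sqrt{BM/A}$. The paper's one-line proof leaves several points implicit that you make explicit: the bound of the interaction term by $B'V_{\mathrm{rad}}(d+2)/M$, the convexity check, and the feasibility conditions $m^\star\ge1$, $D\ge1$; you also correctly note that the $B'/m$ sketch variance is a model built from an upper bound, which is exactly how the paper uses it.
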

\begin{proof}
Substitute $D=M/(m+d+1)$ into the radial term $A/D$ of Proposition~\ref{prop:ts_variance} and keep the budget-independent sketch term $B/m$; then $dV/dm=A/M-B/m^2=0$ gives $m^\star=\sqrt{BM/A}$, and substitution gives $V(m^\star)$.
\end{proof}
\noindent Fitting $A,B$ to the deployed-allocation data behind Table~\ref{tab:dm} by this two-term law confirms the rule: it explains $88$--$90\%$ of the squared-error variance ($R^2$), the predicted $m^\star=\sqrt{BM/A}$ lands at or within one grid step of the empirical error minimum, and the ratio $\sqrt{B/A}$ rises only from $0.41$ to $0.58$ as $d$ goes from $16$ to $256$, the slow $d$-dependence the rule predicts (Appendix~\ref{app:exp_details}, \texttt{sketch\_size\_rule}).

\begin{proposition}[The IMQ factor is a finite difference in $b$]\label{prop:imq_findiff}
For every $x,w$, every $b\ge0$, and every step $h>0$,
\begin{equation}\label{eq:imq_findiff}
\frac{k_{\yat,b+2h}(w,x)-2\,k_{\yat,b+h}(w,x)+k_{\yat,b}(w,x)}{2h^2}=\frac{1}{\|w-x\|^2+\varepsilon}=h_\varepsilon(w,x),
\end{equation}
exactly, with no limit required and all three biases $b,\,b+h,\,b+2h$ inside the kernel's domain $b\ge0$.
\end{proposition}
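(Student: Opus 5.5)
The plan is to exploit that, for fixed $x,w$, the biased kernel is a quadratic polynomial in $b$ divided by a $b$-independent denominator, and that the forward second difference of a quadratic returns its leading coefficient exactly. Fix $x,w$ and write $s=w^\top x$ and $h_\varepsilon=h_\varepsilon(w,x)=(\|w-x\|^2+\varepsilon)^{-1}$, which is finite and positive since $\varepsilon>0$. By the Schur factorization~\eqref{eq:schur}, $k_{\yat,c}(w,x)=q(c)\,h_\varepsilon$ for every $c\ge0$, where $q(c)=(s+c)^2=c^2+2sc+s^2$. Since $h_\varepsilon$ does not depend on the bias, it factors out of the numerator of~\eqref{eq:imq_findiff}, so it suffices to show $q(b+2h)-2q(b+h)+q(b)=2h^2$.

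For that identity I will split $q$ into its monomials and use linearity of the second-difference operator $\Delta_h^2 g(b)=g(b+2h)-2g(b+h)+g(b)$. The constant $s^2$ is annihilated because $1-2+1=0$. The linear term $2sc$ is annihilated because $(b+2h)-2(b+h)+b=0$. The quadratic term gives $(b+2h)^2-2(b+h)^2+b^2=4h^2-2h^2=2h^2$, since the $b^2$ and $bh$ terms cancel. Hence $\Delta_h^2 q(b)=2h^2$ identically in $b$ and $s$. Dividing by $2h^2>0$ then yields $h_\varepsilon$ exactly, with no limit taken, for every $h>0$.

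The domain claim is immediate: $b\ge0$ and $h>0$ give $b+h,\,b+2h>b\ge0$, so all three kernels are the biased kernels~\eqref{eq:yat_biased} with admissible bias. This is the reason for using a \emph{forward} rather than a central difference. There is no genuine obstacle. The only point needing care is that $s$ may be negative or equal to $-b$ (where some numerators vanish), but the computation is purely polynomial in $b$ and never divides by $q$, so those cases cause no difficulty.
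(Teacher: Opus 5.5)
Your proof is correct and follows essentially the same approach as the paper. Both observe that the numerator $(w^\top x+b)^2$ is quadratic in $b$ with leading coefficient $1$, so its forward second difference is exactly $2h^2$; dividing out the $b$-independent denominator then gives $h_\varepsilon$, and the forward (rather than centered) difference keeps all three biases in $b\ge0$, with your monomial-by-monomial check simply spelling out the step the paper states in one line.
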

\begin{proof}
The numerator $(w^\top x+b)^2$ is a quadratic polynomial in $b$, so any second difference equals the constant $2h^2$; the forward difference keeps all biases in $b\ge0$ (the centered form would require $b\ge h$). Dividing by the $b$-independent denominator $\|w-x\|^2+\varepsilon$ yields $h_\varepsilon$.
\end{proof}

\section{Kronecker Implementation}
\label{app:implementation}

For kernel evaluation only the inner product is needed, and it factors as~\eqref{eq:kronecker}, computable in $O(D(D'+d))$ per pair. For the full Gram matrix, Algorithm~\ref{alg:gram} assembles it via Hadamard products.

\begin{algorithm}[h]
\caption{RAY Gram-matrix approximation}
\label{alg:gram}
\begin{algorithmic}[1]
\REQUIRE Data $X=\{x_1,\dots,x_N\}$, parameters $D,D',\varepsilon,b$
\STATE Draw $t_1,\dots,t_D\sim\Exp(\varepsilon)$
\STATE Compute polynomial Gram $P_{ij}=(x_i^\top x_j+b)^2/\varepsilon$ \hfill $O(N^2 d)$
\STATE $K_{\mathrm{approx}}\gets\mathbf{0}_{N\times N}$
\FOR{$j=1,\dots,D$}
\STATE Draw $\omega_{j,\ell}\sim\mathcal{N}(0,2t_j I_d)$, $\beta_{j,\ell}\sim\mathrm{Unif}([0,2\pi])$
\STATE Build $\Psi_j\in\mathbb{R}^{N\times D'}$, $\Psi_j[i,\ell]=\sqrt{2/D'}\cos(\omega_{j,\ell}^\top x_i+\beta_{j,\ell})$
\STATE $K_{\mathrm{approx}}\mathrel{+}=(\Psi_j\Psi_j^\top)\circ P/D$ \hfill $O(N^2 D'+ND'd)$
\ENDFOR
\RETURN $K_{\mathrm{approx}}$
\end{algorithmic}
\end{algorithm}

\section{Spectral Interpretation: Bernstein--Schur Kernels}
\label{app:spectral}

The scheme draws $t$ from the Bernstein--Widder mixing measure of the completely monotone radial factor $h_\varepsilon$. This is not a Bochner spectral measure ($k_{\yat,b}$ is not shift-invariant) but plays the analogous role, and the construction depends on $h_\varepsilon$ only through this measure. Here we prove the general Theorem~\ref{thm:bernstein_schur}.

\begin{proof}[Proof of Theorem~\ref{thm:bernstein_schur}]
For one draw set $Y=2\cos(\omega^\top x+\beta)\cos(\omega^\top w+\beta)$; conditioned on $T$ the standard RFF identity gives $\mathbb{E}[Y\mid T]=e^{-T\|x-w\|^2}$, so $\mathbb{E}[m_f\,Y]=m_f\,\mathbb{E}_{T\sim\nu/m_f}[e^{-T\|x-w\|^2}]=\int_0^\infty e^{-t\|x-w\|^2}d\nu(t)=f(\|x-w\|^2)$. With $u(x)^\top u(w)=p(x,w)$, $\mathbb{E}[z(x)^\top z(w)]=p(x,w)f(\|x-w\|^2)=k(x,w)$. For the variance, $|m_f\,Y\,u(x)^\top u(w)|\le 2m_fB^2$ and $\mathbb{E}[Y^2\mid T]\le\tfrac32$ (as in the proof of Theorem~\ref{thm:exact_variance}), so $\Var[m_f\,Y\,u(x)^\top u(w)]\le m_f^2B^4\,\mathbb{E}[Y^2]\le\tfrac32 m_f^2B^4$; averaging $D$ i.i.d.\ copies divides by $D$. For the uniform bound, each summand lies in an interval of length $4m_fB^2$, so Hoeffding gives $\mathbb{P}(|z(x_i)^\top z(x_j)-k(x_i,x_j)|\ge s)\le2\exp(-Ds^2/(8m_f^2B^4))$, and a union bound over $\le N^2$ pairs yields the claim.
\end{proof}
\noindent The biased $\yat$-kernel is the case $u=p_b$ ($d_p=d(d{+}1)/2+d+1$), $f(r)=(r+\varepsilon)^{-1}$ with $d\nu(t)=e^{-\varepsilon t}\,dt$, so $m_f=1/\varepsilon$ and $\nu/m_f=\Exp(\varepsilon)$; nontrivial members are tabulated in the main text (Table~\ref{tab:bernstein_schur}).

\begin{proof}[Proof of Theorem~\ref{thm:class_bernstein}]
The class estimator of Theorem~\ref{thm:bernstein_schur} has per-draw Gram $K^{(j)}=(\psi_j\psi_j^\top)\circ P_u$ with $\psi_j\in\mathbb{R}^N$, $\psi_j[i]=\sqrt2\cos(\omega_j^\top x_i+\beta_j)$, and $P_u=m_fG_u$ (the mass $m_f$ entering through the $\sqrt{m_f/D}$ scaling of the feature). The four cited proofs consume exactly three structural facts about the triple $(K^{(j)},P,K)$; we verify each with $P_u$ in place of $P$.

\emph{(i) PSD per-draw Gram with bounded operator norm.} $\psi_j\psi_j^\top\succeq0$ is rank one with diagonal $2\cos^2(\omega_j^\top x_i+\beta_j)\le2$, and $G_u=[u(x_i)^\top u(x_j)]\succeq0$ is a Gram matrix, so $K^{(j)}\succeq0$ by the Schur product theorem and $\|K^{(j)}\|_{\mathrm{op}}\le2\|P_u\|_{\mathrm{op}}$ by Lemma~\ref{lem:schur}(a) with $a=2$.

\emph{(ii) Unbiasedness through a unit-diagonal PSD radial Gram.} Conditional on $T_j$, the RFF identity gives $\mathbb{E}[(\psi_j\psi_j^\top)_{ik}\mid T_j]=e^{-T_j\|x_i-x_k\|^2}$; averaging over $T_j\sim\nu/m_f$ yields $\mathbb{E}[K^{(j)}]=R_u\circ P_u=K$ with $(R_u)_{ik}=f(\|x_i-x_k\|^2)/m_f$. The radial Gram $R_u$ is PSD (a nonnegative mixture of Gaussian Grams under $\nu/m_f$) with unit diagonal ($f(0)=m_f$), so Lemma~\ref{lem:schur}(a) also gives $K\succeq0$ and $\|K\|_{\mathrm{op}}\le\|P_u\|_{\mathrm{op}}$.

\emph{(iii) The variance step.} For PSD $M$, $M^2\preceq\|M\|_{\mathrm{op}}M$ and $MA^{-1}M\preceq\lambda^{-1}\|M\|_{\mathrm{op}}M$; with (i) these give $\mathbb{E}[(K^{(j)}-K)^2]\preceq2\|P_u\|_{\mathrm{op}}K$ and $\mathbb{E}[(K^{(j)}-K)A^{-1}(K^{(j)}-K)]\preceq(2\|P_u\|_{\mathrm{op}}/\lambda)K$, the two variance majorants.

Substituting (i)--(iii) into the proofs of Theorem~\ref{thm:bernstein} and Corollary~\ref{cor:bernstein_tail} (a.s.\ bound $L=3\|P_u\|_{\mathrm{op}}/D$, variance $v=2\|P_u\|_{\mathrm{op}}\|K\|_{\mathrm{op}}/D$) and of Theorem~\ref{thm:krr_whitened} and Corollary~\ref{cor:krr_highprob} (whitened a.s.\ bound $L_\lambda=(1+2\|P_u\|_{\mathrm{op}}/\lambda)/D$, majorant $\frac{2\|P_u\|_{\mathrm{op}}}{\lambda D}A^{-1/2}KA^{-1/2}$, whose intrinsic dimension is again exactly $\tilde d_\lambda$) reproduces every statement verbatim with $P\mapsto P_u$. The leverage theorem transfers identically: its proof consumes, beyond (i)--(iii), only the identity $\mathbb{E}_\pi[\psi\psi^\top]=R_u$ (the unit-diagonal radial Gram of (ii)), so $\bar d_\lambda(\theta)=\psi_\theta^\top(A^{-1}\circ P_u)\psi_\theta$ has mean $d_{\mathrm{eff}}(\lambda)$ and Theorem~\ref{thm:krr_leverage} holds with $P\mapsto P_u$. For $k_{\yat,b}$, $u=p_b$ and $m_f=1/\varepsilon$ give $P_u=[(x_i^\top x_j+b)^2/\varepsilon]=P$.
\end{proof}
\noindent For the polynomially modulated Mat\'ern-$\tfrac12$ kernel $(x^\top w+b)^q\,e^{-\|x-w\|/\sigma}$ of Theorem~\ref{thm:class_bernstein}: the radial factor $f(r)=e^{-\sqrt r/\sigma}$ is completely monotone in $r=\|x-w\|^2$ with the L\'evy/inverse-Gaussian Bernstein representation
\[
e^{-\sqrt r/\sigma}=\int_0^\infty \frac{1}{2\sigma\sqrt{\pi}}\,t^{-3/2}e^{-1/(4\sigma^2t)}\,e^{-tr}\,dt,
\]
mass $m_f=f(0)=1$, and exact sampler $T=1/(2\sigma^2Z^2)$, $Z\sim\mathcal{N}(0,1)$ (the L\'evy law of parameter $c=1/(2\sigma^2)$ is the law of $c/Z^2$); the modulation Gram is $G_u=[(x_i^\top x_j+b)^q]$ with $\|u(x)\|^2=(\|x\|^2+b)^q\le(R^2+b)^q$, so $P_u=G_u$ and every bound of Theorem~\ref{thm:class_bernstein} applies with $\|P_u\|_{\mathrm{op}}\le N(R^2+b)^q$ in the worst case and the data-adaptive $\|P_u\|_{\mathrm{op}}$ in general.

\section{Further Validations}
\label{app:further}

\subsection{Experiment controls and scoped deployment checks}
\label{sec:exp_controls}

\paragraph{Fair-cost comparison.}
\label{sec:exp_faircost}
Matched draws understate RAY's cost; matched representation is the honest axis. The Nystr\"om methods here are strong \emph{data-dependent} deployment baselines (Table~\ref{tab:method_taxonomy}), not data-independent feature maps. On the off-sphere bounded ball (Table~\ref{tab:offsphere_faircost}; $d=64$, $\|x\|\in[0.3,1.5]$, coupled target, $M=d_b=2145$), RAY far outperforms the budget-starved exact-modulation limit, but adaptive k-means and ridge-leverage Nystr\"om of the \emph{exact} $\yat$-kernel are most accurate. This is the main deployment caveat: at moderate $N$ and fixed representation size, adaptive landmarks can be better unless streaming, unbiasedness, or pre-data features are required.

\begin{table}[h]
\centering
\caption{Off-sphere fair-cost: KRR test RMSE ($\downarrow$) on a bounded ball ($d=64$, $\|x\|\in[0.3,1.5]$, coupled target, mean $\pm$ std over $3$ seeds) at matched representation dimension. RAY beats the budget-starved exact-modulation limit; k-means and ridge-leverage Nystr\"om of the exact $\yat$-kernel are most accurate; Gaussian RFF approximates a different kernel. The sphere-normalized version is Table~\ref{tab:faircost}.}
\label{tab:offsphere_faircost}
\begin{tabular}{lcccc}
\toprule
Method & dim & RMSE & memory (MB) & build (s) \\
\midrule
exact modulation & $2145$ & $1.114{\scriptstyle\pm.360}$ & $25.7$ & $0.020$ \\
RAY ($m{=}128$) & $2123$ & $0.473{\scriptstyle\pm.010}$ & $25.5$ & $\mathbf{0.015}$ \\
k-means Nystr\"om & $1500$ & $\mathbf{0.099}{\scriptstyle\pm.003}$ & $\mathbf{18.0}$ & $1.887$ \\
rls-leverage Nystr\"om & $1500$ & $\mathbf{0.099}{\scriptstyle\pm.003}$ & $\mathbf{18.0}$ & $0.928$ \\
\midrule
Gaussian RFF (different kernel) & $2145$ & $0.260{\scriptstyle\pm.003}$ & $25.7$ & $0.024$ \\
\bottomrule
\end{tabular}
\end{table}

\paragraph{Tuned and matched-budget controls for the coupling test.}
The coupled-target preference of Table~\ref{tab:necessity} survives fair per-kernel tuning (Table~\ref{tab:tuned}): with every kernel's $b,\varepsilon,\lambda$ grid-searched on a held-out split, the exact $\yat$-kernel is best on the coupled target and statistically tied on \texttt{digits}, where one factor already suffices.

\begin{table}[h]
\centering
\caption{Validation-tuned downstream comparison (every kernel's $b,\varepsilon,\lambda$ grid-searched on a held-out split; off-sphere; mean $\pm$ std over $3$ seeds). The $\yat$-kernel's coupled-target advantage survives fair tuning: it wins the synthetic coupled (alignment$\times$proximity) target and ties on \texttt{digits} where one factor suffices. RAY is the deployed sketched approximation at a fixed budget.}
\label{tab:tuned}
\begin{tabular}{lcc}
\toprule
kernel & coupled (RMSE $\downarrow$) & digits (acc $\uparrow$) \\
\midrule
Gaussian   & $0.038{\scriptstyle\pm.001}$ & $0.980{\scriptstyle\pm.007}$ \\
IMQ        & $0.036{\scriptstyle\pm.002}$ & $0.979{\scriptstyle\pm.004}$ \\
polynomial & $0.114{\scriptstyle\pm.008}$ & $\mathbf{0.982}{\scriptstyle\pm.006}$ \\
$\yat$ (exact) & $\mathbf{0.034}{\scriptstyle\pm.001}$ & $0.978{\scriptstyle\pm.004}$ \\
RAY (sketched) & $0.078{\scriptstyle\pm.011}$ & $0.974{\scriptstyle\pm.005}$ \\
Nystr\"om-$\yat$ & $0.043{\scriptstyle\pm.003}$ & $0.979{\scriptstyle\pm.003}$ \\
\bottomrule
\end{tabular}
\end{table}

At small fixed feature dimension the story changes because resolving the coupling through random features costs the polynomial floor. Table~\ref{tab:coupled_matched} shows this explicitly: the exact $\yat$ kernel keeps the lead on the coupled target, but sketched RAY is under-resolved at $M=4096$ and a cheaper single-factor RFF can score better. This does not negate the kernel preference; it identifies the regime where the feature construction is meant to be used.

\begin{table}[h]
\centering
\caption{Matched-dimension random-feature comparison (RMSE $\downarrow$, off-sphere $d{=}32$, $M{=}4096$, mean of $3$ seeds). At this small fixed budget the $O(d^2)$ polynomial floor leaves the coupling under-resolved, so a cheaper RFF for a simpler kernel scores better while the exact $\yat$-kernel keeps the lead; RAY's regime is large-$N$ streaming, not a fixed-budget bake-off.}
\label{tab:coupled_matched}
\begin{tabular}{lccccc}
\toprule
target & RAY (sketched) & Gaussian RFF & IMQ RFF & poly sketch & exact $\yat$ \\
\midrule
coupled   & $0.103$ & $0.038$ & $0.039$ & $0.143$ & $\mathbf{0.025}$ \\
proximity & $0.477$ & $0.214$ & $\mathbf{0.102}$ & $0.447$ & $0.061$ \\
alignment & $0.131$ & $0.102$ & $0.108$ & $\mathbf{0.016}$ & $0.088$ \\
\bottomrule
\end{tabular}
\end{table}

\paragraph{The alignment numerator as signal modulation.}
\label{sec:exp_gate}
The coupled-target preference is explained by the modulation--radial decomposition (Proposition~\ref{prop:gate}): radial Monte-Carlo noise enters each Gram entry scaled by the alignment modulation $p_b$, so the product sharpens \emph{between}-pair discrimination while leaving any single pair's relative error unchanged. On off-sphere data ($d=16$) we score three pair types (\emph{true} (close and aligned), \emph{radial distractors} (close but weakly aligned), and \emph{alignment distractors} (aligned but far)) by the AUC separating true from each ($400$ pairs/type). A radial-only kernel (IMQ) false-positives on radial distractors (AUC $0.22$); an alignment-only kernel (degree-2 polynomial) false-positives on alignment distractors (AUC $0.00$); only the $\yat$ product suppresses both (AUC $1.00$/$1.00$), the exact-modulation estimator inherits this almost exactly, and the deployed RAY (sketched modulation) partially survives on the alignment distractors (AUC $0.73$).

\paragraph{Linear-time streaming attention.}
\label{sec:exp_attention}
Attention is a kernel smoother over tokens, $\mathrm{attn}(q_i)=\sum_j k(q_i,k_j)v_j/\sum_j k(q_i,k_j)$, at $O(N^2)$ cost. An explicit feature map collapses it to $O(NM)$; with $\phi$ such that $\phi(q)^\top\phi(k)\approx k(q,k)$, the smoother factorizes as $\phi(q_i)^\top(\sum_j\phi(k_j)v_j^\top)/\phi(q_i)^\top\sum_j\phi(k_j)$. RAY supplies such a map for $k_{\yat,b}$ and is causal through the recurrent state $S_t=S_{t-1}+\phi(k_t)v_t^\top$. A landmark method cannot serve here because its landmarks are chosen from the sequence.

We measure fidelity and scaling, not language quality. The linear-attention output matches exact $\yat$-attention with a median per-token error that falls with the feature dimension $M$ (Figure~\ref{fig:attention}a, $d=32$: $0.45$ at $M{=}1552$ down to $0.12$ at $M{=}12416$), and the induced attention-weight matrix tracks it ($0.57\to0.12$). The approximation does not drift upward with context length, but it does depend on sharpness: diffuse attention is easy, peaked attention is the hard regime for the radial RFF (Figure~\ref{fig:attention}b). The payoff is the $O(N^2)$ wall (Figure~\ref{fig:attention}c): the exact $N\times N$ attention matrix reaches $137$\,GB at $N{=}131072$ while RAY stays linear and a single recurrent state decodes causally.

The sign indefiniteness has a principled fix, and whether that fix also cures the peaked regime has a sharp, negative answer. The trigonometric radial feature is the source both of the peaked-regime error and of an estimator that can return negative attention weights; a positive feature removes the second problem (Proposition~\ref{prop:positive}), and Proposition~\ref{prop:pos_dichotomy} shows it cannot remove the first.

\begin{proposition}[Positive features give nonnegative attention]\label{prop:positive}
The Gaussian radial factor admits a positive random feature: for $t>0$ and $\omega\sim\mathcal N(0,2tI_d)$,
\begin{equation}\label{eq:favor}
e^{-t\|x-w\|^2}=\mathbb{E}_\omega\bigl[\phi^+_t(x)\,\phi^+_t(w)\bigr],\qquad \phi^+_t(x)=\exp\!\bigl(\omega^\top x-2t\|x\|^2\bigr)>0,
\end{equation}
the FAVOR$^+$ construction of \citet{choromanski2021rethinking}, with controlled relative error where the kernel is small. Combined with a nonnegative modulation feature, anchor squares $\phi_a(x)=(a^\top x)^2\ge0$ (exact on the sphere, Section~\ref{sec:discussion}), the product feature $\Phi(x)=\phi^+_t(x)\,\phi_a(x)$ is entrywise positive, so the estimated $\yat$-attention weights $\Phi(q)^\top\Phi(k)\ge0$ are nonnegative by construction, which the signed trigonometric/TensorSketch estimator cannot guarantee.
\end{proposition}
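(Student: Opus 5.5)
The proof reduces to one Gaussian moment-generating-function computation followed by two positivity observations. For the identity \eqref{eq:favor}, expand the product of the two positive features:
\[
\phi^+_t(x)\,\phi^+_t(w)=\exp\!\bigl(\omega^\top(x+w)\bigr)\exp\!\bigl(-2t\|x\|^2-2t\|w\|^2\bigr).
\]
The second factor is deterministic. For $\omega\sim\mathcal N(0,2tI_d)$, the Gaussian moment generating function gives $\mathbb{E}_\omega[e^{\omega^\top u}]=e^{t\|u\|^2}$. Taking $u=x+w$, the expectation is therefore $\exp\bigl(t\|x+w\|^2-2t\|x\|^2-2t\|w\|^2\bigr)$. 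By the polarization identity $\|x+w\|^2-2\|x\|^2-2\|w\|^2=-\|x-w\|^2$, this equals $e^{-t\|x-w\|^2}$, which is exactly \eqref{eq:favor}.

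Two remarks complete this step. The moment generating function is finite for every $u$, so the expectation exists, and no interchange issue arises because everything is a direct Gaussian integral. Positivity $\phi^+_t(x)>0$ holds because the feature is an exponential.

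For the attention claim, form the product feature $\Phi(x)=\phi^+_t(x)\,\phi_a(x)$, or its stacked version over draws $(t_j,\omega_j)$ and anchors $a_k$ with nonnegative normalizing constants $D^{-1/2}$ and $\varepsilon^{-1/2}$. Each coordinate is a product of a strictly positive exponential and a square $(a^\top x)^2\ge0$, so it is entrywise nonnegative. The inner product $\Phi(q)^\top\Phi(k)$ is then a sum of products of nonnegative numbers, hence $\ge0$. This also makes the attention denominator $\phi(q_i)^\top\sum_j\phi(k_j)$ nonnegative, so the normalized weights are nonnegative wherever the denominator is positive.

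To support the qualitative contrast, I would note the source of negativity in the other estimator. For the trigonometric estimator, the factor $2\cos(\omega^\top x+\beta)\cos(\omega^\top w+\beta)$ takes negative values with positive probability. Likewise, TensorSketch inner products involve Rademacher signs and can be negative.

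The "controlled relative error where the kernel is small" clause is the only part that is not mechanical, and I would treat it as a cited property of FAVOR$^+$ (\citealp{choromanski2021rethinking}) rather than reprove it. If a justification were wanted, I would compute the second moment with the same moment-generating-function trick:
\[
\mathbb{E}\bigl[(\phi^+_t(x)\phi^+_t(w))^2\bigr]=e^{4t\|x+w\|^2-4t\|x\|^2-4t\|w\|^2}.
\]
Dividing by the squared mean then gives the relative variance $e^{2t\|x+w\|^2}-1$. This quantity depends on $\|x+w\|$ rather than on the kernel value, which is the sense in which the relative error is uniform where $e^{-t\|x-w\|^2}$ is small. I expect this to be the main obstacle in wording rather than in mathematics. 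The same formula blows up once it is integrated against $t\sim\Exp(\varepsilon)$ with $2\|x+w\|^2\ge\varepsilon$, and that blow-up is precisely the dichotomy of Proposition~\ref{prop:pos_dichotomy}. So I would keep the claim at fixed $t$, as the statement does.
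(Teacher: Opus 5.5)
Your argument is the paper's proof: the Gaussian moment generating function $\mathbb{E}_\omega[e^{\omega^\top(x+w)}]=e^{t\|x+w\|^2}$, combined with $\|x+w\|^2-2\|x\|^2-2\|w\|^2=-\|x-w\|^2$, gives \eqref{eq:favor}, and $\Phi(q)^\top\Phi(k)\ge0$ because every coordinate is a positive exponential times a square. One correction to your optional aside: the $\Exp(\varepsilon)$-mixed second moment is $\mathbb{E}_T\bigl[e^{-2T\|x-w\|^2}e^{2T\|x+w\|^2}\bigr]=\mathbb{E}_T[e^{8Tx^\top w}]$, because you integrated the relative variance and dropped the squared-mean factor. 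So the divergence threshold of Proposition~\ref{prop:pos_dichotomy} is $8x^\top w\ge\varepsilon$, not $2\|x+w\|^2\ge\varepsilon$; since $2\|x+w\|^2=8x^\top w+2\|x-w\|^2$, your condition overstates the divergent set everywhere off the diagonal $x=w$.
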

\begin{proof}
For $\omega\sim\mathcal N(0,2tI_d)$, $\mathbb{E}_\omega[e^{\omega^\top(x+w)}]=e^{t\|x+w\|^2}$, so $\mathbb{E}_\omega[\phi^+_t(x)\phi^+_t(w)]=e^{-2t\|x\|^2-2t\|w\|^2}e^{t\|x+w\|^2}=e^{-t\|x-w\|^2}$, which is~\eqref{eq:favor}; positivity of $\phi^+_t$ and of $\phi_a$ is immediate, and a product of nonnegative features has a nonnegative inner product.
\end{proof}

\begin{proposition}[Variance dichotomy: the mixing law caps positive features]\label{prop:pos_dichotomy}
Let $\widehat g^+=\phi^+_T(x)\,\phi^+_T(w)$ with $T\sim\Exp(\varepsilon)$, $\omega\mid T\sim\mathcal N(0,2TI_d)$, the positive-feature estimate of the rescaled radial factor. Then $\mathbb{E}[\widehat g^+]=\varepsilon/(\|x-w\|^2+\varepsilon)$, but
\[
\mathbb{E}\bigl[(\widehat g^+)^2\bigr]=\mathbb{E}_T\bigl[e^{8Tx^\top w}\bigr]=
\begin{cases}
\dfrac{\varepsilon}{\varepsilon-8x^\top w}, & 8x^\top w<\varepsilon,\\[4pt]
+\infty, & 8x^\top w\ge\varepsilon,
\end{cases}
\]
so the positive-feature radial estimator has \emph{infinite variance} on every pair with $x^\top w\ge\varepsilon/8$. Truncating the scale at $T\le T_{\max}$ (relative bias at most $e^{-\varepsilon T_{\max}}$ on the radial factor) caps the second moment at $e^{8T_{\max}x^\top w}$, which remains exponential in $x^\top w/\varepsilon$. The trigonometric estimator instead satisfies $\mathbb{E}[\widehat g^2\mid T]=1+\tfrac12e^{-4T\|x-w\|^2}\le\tfrac32$ uniformly in the scale and the pair.
\end{proposition}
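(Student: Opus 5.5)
The plan is to compute both moments by conditioning first on $T$ and then averaging over $T\sim\Exp(\varepsilon)$, using only the Gaussian moment generating function $\mathbb{E}_\omega[e^{\omega^\top v}\mid T]=e^{T\|v\|^2}$ for $\omega\sim\mathcal N(0,2TI_d)$.

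\textbf{Mean.} Proposition~\ref{prop:positive} already gives $\mathbb{E}[\widehat g^+\mid T]=e^{-T\|x-w\|^2}$. Averaging over $T\sim\Exp(\varepsilon)$ with the Laplace identity behind~\eqref{eq:expectation} yields $\varepsilon/(\|x-w\|^2+\varepsilon)$. This step is routine.

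\textbf{Second moment.} Write
\[
(\widehat g^+)^2=\exp\bigl(2\omega^\top(x+w)-4T\|x\|^2-4T\|w\|^2\bigr).
\]
Applying the moment generating function with $v=2(x+w)$ gives
\[
\mathbb{E}[(\widehat g^+)^2\mid T]=\exp\bigl(4T\|x+w\|^2-4T\|x\|^2-4T\|w\|^2\bigr)=e^{8Tx^\top w}.
\]
This identity, that the norm penalties cancel exactly and leave only the cross term, is the key step. With it, $\mathbb{E}[(\widehat g^+)^2]=\mathbb{E}_T[e^{8Tx^\top w}]=\int_0^\infty\varepsilon e^{-(\varepsilon-8x^\top w)t}\,dt$. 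This integral equals $\varepsilon/(\varepsilon-8x^\top w)$ when $8x^\top w<\varepsilon$ and diverges otherwise. The divergence is justified by Tonelli, since the integrand is nonnegative. Because the mean is finite, the variance is infinite precisely when $x^\top w\ge\varepsilon/8$.

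\textbf{Truncation.} For the truncation claim, restrict to $T\le T_{\max}$. The radial factor loses at most the mass $\int_{T_{\max}}^\infty\varepsilon e^{-\varepsilon t}e^{-tr}\,dt\le e^{-\varepsilon T_{\max}}\cdot\varepsilon/(\varepsilon+r)$, which is the stated relative bias bound. On this event the conditional second moment $e^{8Tx^\top w}$ is at most $e^{8T_{\max}x^\top w}$ when $x^\top w\ge0$. For $x^\top w<0$ it is at most $1$, so the cap holds whichever normalization of the truncated law is used.

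\textbf{Trigonometric comparison.} The final claim $\mathbb{E}[\widehat g^2\mid T]=1+\tfrac12e^{-4Tr}\le\tfrac32$ is exactly the conditional computation in the proof of Theorem~\ref{thm:exact_variance}, so it can be cited directly.

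The only point needing care is bookkeeping. The Gaussian moment generating function must be applied to $2(x+w)$ with covariance $2T$, giving $T\|v\|^2$, and the expansion $\|x+w\|^2=\|x\|^2+\|w\|^2+2x^\top w$ must be checked to produce the factor $8$.
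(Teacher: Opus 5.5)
Your proposal is correct and follows the paper's proof step for step. You condition on $T$, apply the Gaussian moment generating function to $2(x+w)$ to obtain $e^{8Tx^\top w}$, and integrate against $\varepsilon e^{-\varepsilon t}$. You then bound the truncated second moment separately for $x^\top w\ge0$ and $x^\top w<0$, and cite the conditional computation in the proof of Theorem~\ref{thm:exact_variance} for the trigonometric case. Two small additions the paper leaves implicit are welcome: the Tonelli justification of the divergence, and the remark that the mean is finite, so the variance itself, not just the second moment, is infinite.
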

\begin{proof}
Conditionally on $T=t$, $(\widehat g^+)^2=e^{2\omega^\top(x+w)}e^{-4t(\|x\|^2+\|w\|^2)}$, and $\mathbb{E}_\omega[e^{a^\top\omega}]=e^{t\|a\|^2}$ for $\omega\sim\mathcal N(0,2tI_d)$ gives $\mathbb{E}_\omega[(\widehat g^+)^2\mid t]=e^{4t\|x+w\|^2-4t\|x\|^2-4t\|w\|^2}=e^{8tx^\top w}$. Integrating against $\varepsilon e^{-\varepsilon t}\,dt$ gives $\varepsilon/(\varepsilon-8x^\top w)$ when $8x^\top w<\varepsilon$ and diverges otherwise; on $\{T\le T_{\max}\}$ the conditional bound $e^{8tx^\top w}\le e^{8T_{\max}x^\top w}$ (for $x^\top w\ge0$; for $x^\top w<0$ it is at most $1$) caps the second moment, and the truncated integral $\int_0^{T_{\max}}\varepsilon e^{-\varepsilon t}e^{-tr}\,dt$ misses at most $e^{-(\varepsilon+r)T_{\max}}\le e^{-\varepsilon T_{\max}}$ of the total in relative terms. The trigonometric bound is in the proof of Theorem~\ref{thm:exact_variance}.
\end{proof}
\noindent Empirically, the dichotomy is sharp: below the threshold the finite-side prediction is approached, while above it the empirical second moment grows without plateau and the positive-feature Gram estimator becomes heavy-tailed or catastrophic; the signed trigonometric estimator keeps the Monte-Carlo rate. Positive features are therefore appropriate only in diffuse regimes where the radial RFF is already accurate and nonnegative weights matter.

\begin{figure}[t]
\centering
\includegraphics[width=\textwidth]{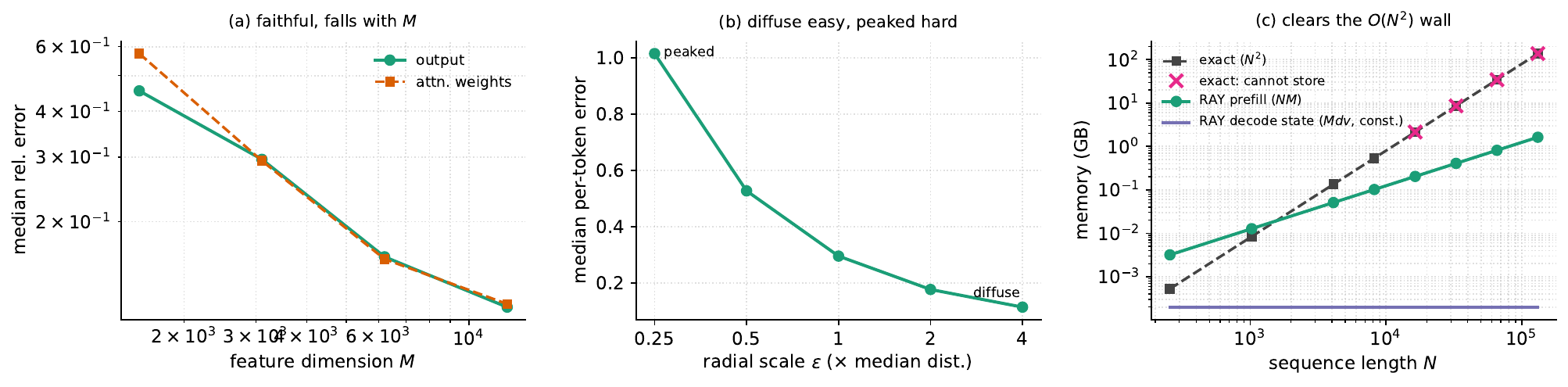}
\caption{RAY as a linear-time, streaming $\yat$-attention primitive (random queries/keys/values, $d=32$). \textbf{(a)} The linear-attention output and the induced attention-weight matrix both match exact $\yat$-attention with a median error that falls with the feature dimension $M$; one fixed map is applied to every token. \textbf{(b)} The one limitation: error scales with attention sharpness: diffuse attention (large radial scale $\varepsilon$) is easy, peaked attention is the hard regime for the radial RFF. \textbf{(c)} The exact $N\times N$ attention matrix becomes impossible to store ($137$\,GB at $N{=}131072$, $\boldsymbol\times$), while RAY's $O(NM)$ features stay linear in $N$ and a single $O(M d_v)$ recurrent state (constant in $N$) decodes causally; that recurrence $S_t=S_{t-1}+\phi(k_t)v_t^\top$ is exact, which a data-dependent landmark scheme cannot offer.}
\label{fig:attention}
\end{figure}

\paragraph{Gram-matrix approximation error.}
\label{sec:exp_gram}

\begin{figure}[h]
\centering
\includegraphics[width=\textwidth]{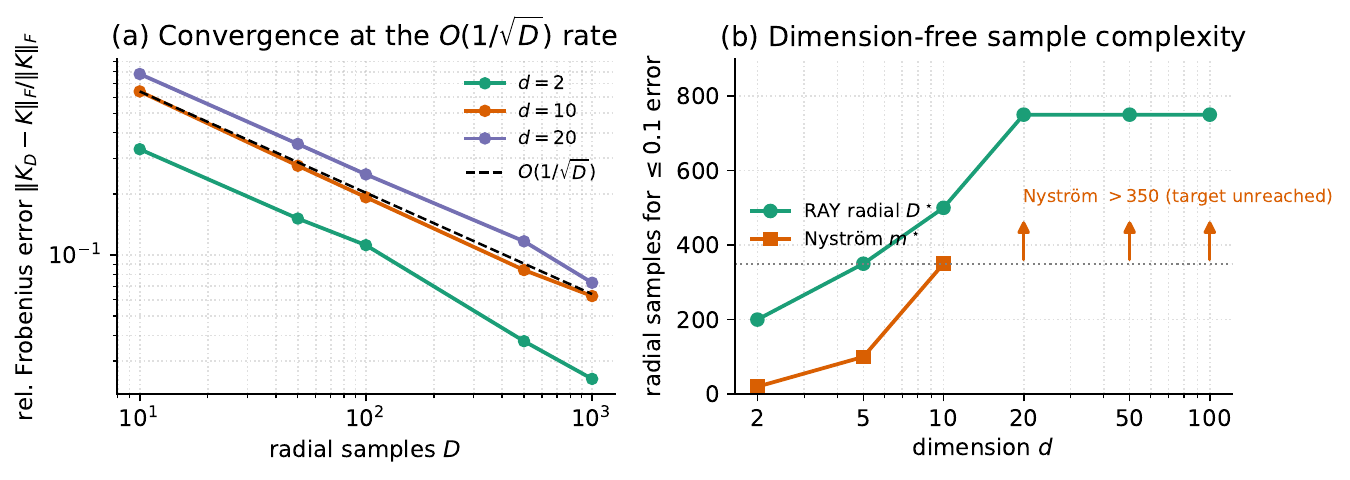}
\caption{Sphere-normalized sanity check (here the kernel coincides with a dot-product kernel, so this isolates the dimension behavior and is not a representation claim). RAY approximates the biased Gram at the Monte-Carlo rate with a radial sample count that grows little with dimension (flat $D'=1$). \textbf{(a)} Relative Frobenius error vs.\ $D$ ($N=1000$, $b=1$, $\varepsilon=1$); all dimensions track the $O(1/\sqrt D)$ guide within a factor of $\sim3$. \textbf{(b)} Radial samples for $\le10\%$ error vs.\ dimension: RAY's $D^\star$ stays bounded and plateaus by $d=20$, while the Nystr\"om landmark count $m^\star$ exceeds the tested range for $d\ge20$.}
\label{fig:overview}
\end{figure}

Random $\yat$-features converge at the Monte-Carlo rate, and their accuracy grows only mildly with dimension. Approximating the exact biased Gram matrix of $N=1000$ sphere-normalized points (normalization fixes $R=1$, isolating the dimension behavior from the $R^2$ growth of raw data; $b=1$, $\varepsilon=1$, mean over $5$ seeds, the recommended flat $D'=1$ so $D$ is the number of cosine draws), the relative Frobenius error falls as $O(1/\sqrt D)$ in the number of radial samples $D$ (from $0.33$ at $D=10$ to $0.024$ at $D=1000$ at $d=2$ (Figure~\ref{fig:overview}a)) and at $D=1000$ stays within $[0.024,0.073]$ across $d\in\{2,\dots,20\}$, a factor of $\sim3$. Uniform-landmark Nystr\"om at a fixed budget behaves oppositely (Table~\ref{tab:gram_error}; we add stronger k-means and adaptive ridge-leverage-score Nystr\"om \citep{musco2017recursive} baselines in Section~\ref{sec:exp_offsphere}): at $d=2$ the error is below the table's displayed precision, likely reflecting unusually rapid spectral decay on this sampled problem rather than literal exact recovery, but by $d=20$ it degrades to $0.39$ ($m{=}50$) and $0.29$ ($m{=}100$). At matched draws versus landmarks, RAY is therefore preferable in moderate-to-high dimension, Nystr\"om only in low. The operator-norm error controlled by Theorem~\ref{thm:bernstein} tracks the Frobenius error: the same $O(1/\sqrt D)$ rate, and smaller in relative terms ($0.43\to0.059$ vs.\ $0.59\to0.071$ as $D$ runs $10\to1000$ at $d=10$). The theorem's data-adaptive constant is borne out directly: across bounded-ball datasets of increasing spectral spread (intrinsic dimension $d_{\mathrm{int}}$ from $1.4$ to $3.5$, $N=400$) the matrix-Bernstein expression stays near $55$--$60$ while the crude $N\max_{ij}$ route grows from $66$ to $147$, the gap widening with the spectrum, and both upper-bound the measured $\|K_D-K\|_{\mathrm{op}}$ ($20$--$24$).

\begin{table}[h]
\centering
\caption{Relative Frobenius error $\|K_{\mathrm{approx}}-K\|_F/\|K\|_F$ of the biased Gram matrix ($b=1$, $\varepsilon=1$, unit sphere, mean over $5$ seeds). RAY uses $D$ radial samples at the recommended flat $D'=1$, so $D$ is the number of cosine draws (total feature dimension $D\,d_b$); Nystr\"om uses $m$ landmarks.}
\label{tab:gram_error}
\begin{tabular}{lcccc}
\toprule
Method & $d=2$ & $d=5$ & $d=10$ & $d=20$ \\
\midrule
RAY, $D{=}100$  & $0.112$ & $0.174$ & $0.192$ & $0.250$ \\
RAY, $D{=}500$  & $0.038$ & $0.076$ & $0.084$ & $0.117$ \\
RAY, $D{=}1000$ & $0.024$ & $0.045$ & $0.063$ & $0.073$ \\
\midrule
Nystr\"om, $m{=}50$  & $0.000$ & $0.188$ & $0.332$ & $0.393$ \\
Nystr\"om, $m{=}100$ & $0.000$ & $0.081$ & $0.216$ & $0.289$ \\
\bottomrule
\end{tabular}
\end{table}

\paragraph{Bias scaling.}
\label{sec:exp_bias}

The bias enters the variance exactly as the fourth power Theorem~\ref{thm:variance} predicts (Figure~\ref{fig:bias}). For two unit-vector pairs in $\mathbb{R}^2$ (aligned ($x^\top w=1$) and $x^\top w=0.5$) we sweep $b\in\{0,\dots,5\}$ at $D=200$, $\varepsilon=1$ and fit the log-log slope of the empirical variance ($2000$ repetitions) against $x^\top w+b$.

\begin{figure}[h]
\centering
\includegraphics[width=0.5\textwidth]{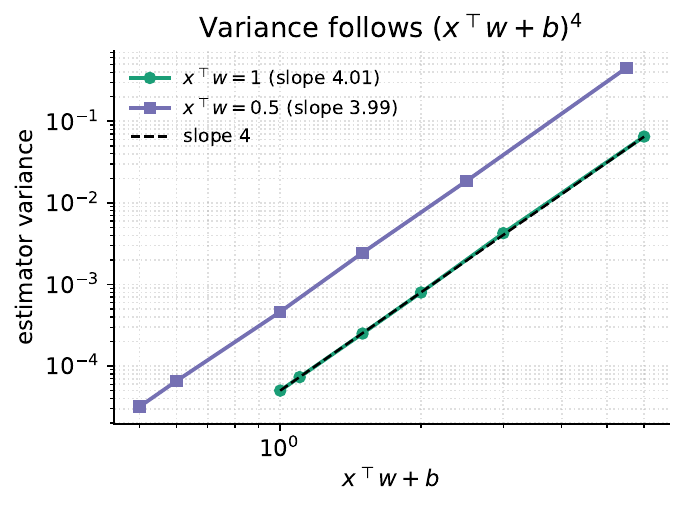}
\caption{Estimator variance vs.\ the bias-shifted alignment $x^\top w+b$ (log-log, $2000$ repetitions). Both pairs follow a fourth power (fitted slopes $4.01$ and $3.99$ against the slope-$4$ guide). For the aligned pair the variance equals the $(R^2+b)^4$ envelope of Theorem~\ref{thm:variance} (the ratio $\Var/(R^2+b)^4$ is constant at $\approx5\times10^{-5}$, so the bound is tight); the $x^\top w=0.5$ pair lies below it, the gap being the Cauchy--Schwarz step in the proof.}
\label{fig:bias}
\end{figure}

The fitted exponents are $4.01$ and $3.99$ (and $2.01$, $1.99$ for the standard deviation), matching the predicted $4$ and $2$. The aligned pair sits on the $(R^2+b)^4$ envelope, where $\Var/(R^2+b)^4$ is constant and the bound is tight; the $x^\top w=0.5$ pair lies strictly below it, the gap being the Cauchy--Schwarz step $x^\top w\le\|x\|\|w\|$ in the proof.

Beyond the envelope, the \emph{exact} variance of Theorem~\ref{thm:exact_variance} is confirmed pointwise: across $b\in\{0,1,2\}$ and $18$ off-sphere pairs spanning $r=\|x-w\|^2\in[0.4,2.2]$, the ratio of empirical (over $2\times10^5$ draws) to predicted variance is $1.001\pm0.002$, and the estimator is unbiased to within $0.5\%$.

\paragraph{The normalized estimator removes the bias/radius variance blow-up.}
\label{sec:exp_normalized}

The normalized variant (Proposition~\ref{prop:normalized}) trades $k_{\yat,b}$ for a bounded-variance rescaling. We confirm the exact relation $\Var(\text{exact modulation})/\Var(\text{normalized})=((\|x\|^2+b)(\|w\|^2+b))^2$ by sweeping $b$ at $R=1$ and the data radius $R$ at $b=1$ ($2\times10^5$ draws per point, Table~\ref{tab:normalized}). The measured ratio matches $(R^2+b)^4$ to three figures, the exact-modulation variance grows from $0.02$ to $449$ across the sweep, and the normalized variant stays bounded in $[0.045,0.53]\le\tfrac32$, as the proposition predicts. The normalized estimator is therefore the stable choice for large-radius or large-bias data, at the cost of estimating the rescaled kernel.

\begin{table}[h]
\centering
\caption{Flat-estimator variance of exact vs.\ normalized modulation ($\varepsilon=1$, $2\times10^5$ draws). The ratio equals $(R^2+b)^4$ exactly; exact modulation blows up while the normalized estimator stays bounded ($\le\tfrac32$).}
\label{tab:normalized}
\begin{tabular}{lcccc@{\hskip 2em}lcccc}
\toprule
\multicolumn{5}{c}{$b$-sweep ($R{=}1$)} & \multicolumn{5}{c}{$R$-sweep ($b{=}1$)} \\
\cmidrule(lr){1-5}\cmidrule(lr){6-10}
$b$ & exact & norm. & ratio & $(1{+}b)^4$ & $R$ & exact & norm. & ratio & $(R^2{+}1)^4$ \\
\midrule
$0$ & $0.02$ & $0.02$ & $1.0$ & $1$    & $0.5$ & $0.93$ & $0.38$ & $2.4$  & $2.4$ \\
$1$ & $3.39$ & $0.21$ & $16$  & $16$   & $1.0$ & $3.38$ & $0.21$ & $16$   & $16$ \\
$2$ & $29.3$ & $0.36$ & $81$  & $81$   & $2.0$ & $45.6$ & $0.07$ & $625$  & $625$ \\
$4$ & $331$  & $0.53$ & $625$ & $625$  & $3.0$ & $449$  & $0.045$& $10^4$ & $10^4$ \\
\bottomrule
\end{tabular}
\end{table}

\paragraph{Fixed-dataset radial sample complexity.}
\label{sec:exp_dimfree}

The radial sample count needed for a target accuracy grows little with dimension (and then plateaus) whereas Nystr\"om's landmark count does not stay bounded at all. With the recommended flat $D'=1$ (so $D$ is the number of cosine draws), for each $d\in\{2,\dots,100\}$ ($N=500$ on the sphere, $b=\varepsilon=1$, $3$ seeds) we find $D^\star$, the smallest $D$ reaching relative Frobenius error $\le0.10$, and the analogous Nystr\"om count $m^\star$ (Table~\ref{tab:dimfree}, Figure~\ref{fig:overview}b).

\begin{table}[h]
\centering
\caption{Radial samples needed for relative Frobenius error $\le0.10$ vs.\ dimension ($b=1$, $\varepsilon=1$, unit sphere, flat $D'=1$). RAY's radial count $D^\star$ stays bounded and plateaus; Nystr\"om's landmark count $m^\star$ exceeds the tested range ($>350$) for $d\ge20$. ``$>350$'' means $350$ landmarks did not reach the target.}
\label{tab:dimfree}
\begin{tabular}{lcccccc}
\toprule
$d$ & $2$ & $5$ & $10$ & $20$ & $50$ & $100$ \\
\midrule
RAY $D^\star$        & $200$ & $350$ & $500$ & $750$ & $750$ & $750$ \\
Nystr\"om $m^\star$  & $20$ & $100$ & $350$ & $>350$ & $>350$ & $>350$ \\
\bottomrule
\end{tabular}
\end{table}

RAY's $D^\star$ rises from $200$ to $750$ and then plateaus (no further growth from $d=20$ to $100$), while Nystr\"om reaches the target only for $d\le10$: from $d=20$ on, $350$ landmarks leave the error above $0.10$ (at $d=100$, $0.20$). The contrast is between a radial count that saturates in $d$ and Nystr\"om's curse of dimensionality $O(m^{-2s/d})$, the empirical content of Table~\ref{tab:rf_comparison}; the total RAY feature dimension is $D^\star d_b$, which does grow with $d$ through the polynomial factor.

\paragraph{Downstream kernel ridge regression.}
\label{sec:exp_krr}

The exact polynomial factor becomes a large downstream advantage at a low number of random draws. On two real datasets (\texttt{digits} (classification, $d=64$) and \texttt{california} (regression, $d=8$), standardized and $\ell_2$-normalized to the sphere, with $\varepsilon$ the median squared distance, $b=1$, and ridge $\lambda=10^{-2}$, all fixed rather than tuned on validation) we fit kernel ridge regression in Gram form (to isolate kernel-approximation quality; Appendix~\ref{sec:exp_scaling} evaluates primal-feature scalability separately) and compare, at a \emph{matched number of random draws} $D$, RAY against IMQ random features (the radial-only estimator of Section~\ref{sec:step-rff}), Gaussian RFF, and Nystr\"om on the exact $\yat$-kernel, with the three exact kernels as references. Matching $D$ matches the Gram-assembly compute up to the one-time $O(N^2d)$ polynomial Gram, a small overhead next to the gap below. Figure~\ref{fig:krr} reports the mean over $3$ splits with $\pm1$ standard-deviation bands.

\begin{figure}[h]
\centering
\includegraphics[width=\textwidth]{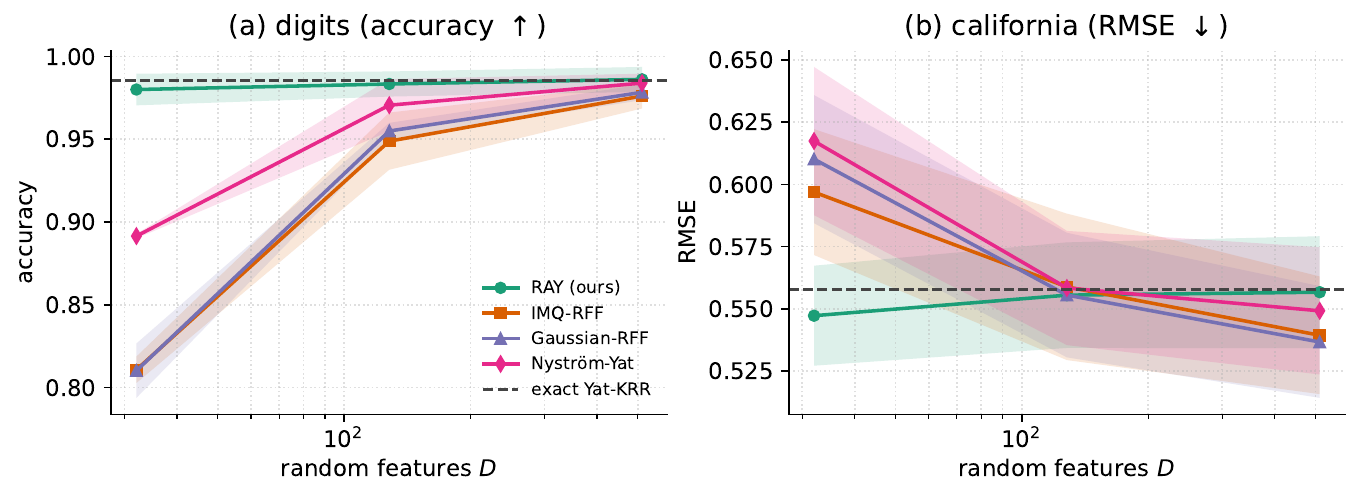}
\caption{Downstream KRR test metric vs.\ the number of random draws $D$ on sphere-normalized real data (mean over $3$ splits, $\pm1$ s.d.\ bands); the dashed line is the exact $\yat$-kernel. \textbf{(a)} digits: RAY-$\yat$ sits at the exact-kernel accuracy already at $D=8$, while Gaussian RFF, IMQ-RFF, and Nystr\"om climb slowly and need $\sim\!512$ features to catch up. \textbf{(b)} california: RAY tracks the exact kernel from the smallest budgets. RAY keeps the alignment numerator exact and spends its $D$ draws only on the radial factor; the others must sample the whole kernel. As \emph{exact} kernels the $\yat$-kernel ties Gaussian on digits ($0.986$) and trails it on california ($0.558$ vs.\ $0.531$ RMSE, IMQ $0.535$).}
\label{fig:krr}
\end{figure}

With only $D=32$ radial samples RAY already reaches $0.980$ accuracy on digits and comes within $0.01$ RMSE of the exact $\yat$-kernel on california, while Gaussian RFF, IMQ-RFF, and Nystr\"om at the same budget trail by a wide margin ($0.81$--$0.89$ on digits): they spend their draws approximating the whole kernel, whereas RAY keeps the alignment numerator exact and spends its $D$ draws only on the radial factor. The IMQ-RFF ablation isolates this: removing the numerator and adding it back lifts digits accuracy from $0.81$ to $0.98$ at $D=32$, so the exact numerator is the source of the low-budget efficiency. As an \emph{exact} kernel the $\yat$-kernel is competitive but not uniformly best (it ties Gaussian on digits and trails it by $0.03$ RMSE on california); the gain here is RAY's draw efficiency. Two caveats define the scope: the comparison is matched in the number of \emph{draws}, not in explicit feature dimension, which for RAY carries the polynomial factor $d_b=O(d^2)$ per draw, so the draw-efficiency advantage does not transfer to a dimension-matched setting in high $d$, where the on-sphere dot-product route (a direction we leave to future work) is preferable; and the $\yat$-kernel needs inputs of bounded norm: on raw standardized features its unbounded numerator destabilizes both the exact kernel and RAY (exact-$\yat$ RMSE $0.84$, RAY diverging), so we keep $\|x\|$ bounded (here by sphere normalization; Section~\ref{sec:exp_offsphere} validates the off-sphere bounded-ball regime).

The relative-spectral KRR theorem (Theorem~\ref{thm:krr_spectral}) holds as stated. On a digits subset ($N=600$, $\lambda=0.1$) the whitened error $\rho=\|A^{-1/2}(K_D-K)A^{-1/2}\|_{\mathrm{op}}$ falls with $D$ (log-log slope $-0.83$), and once $\rho<1$ (at $D=1024$, $\rho=0.69$) the predicted bound $\|\tilde\alpha-\hat\alpha\|_A\le\frac{\rho}{1-\rho}\|\hat\alpha\|_A$ holds with a wide margin ($0.18\le2.18$); at smaller $D$ the relative error decreases monotonically ($1.59\to0.18$ over $D{:}16\to1024$) as the deterministic guarantee predicts once it becomes active.

\paragraph{Real-data check: competitive across regimes.} The coupled-target preference study (Section~\ref{sec:exp_necessity}) is synthetic by construction. The same reading (a single $\yat$-kernel stays competitive whichever geometry carries the labels, rather than failing on one) can be tested on real data. We embed CIFAR-10 and CIFAR-100 with a frozen CLIP image encoder \citep{radford2021clip} and run exact kernel ridge regression with the Gaussian, IMQ, degree-2 polynomial, and $\yat$ kernels under two preprocessings of the embeddings: centered, where class identity is angular and alignment dominates, and bounded-ball, where it is radial and proximity dominates (Table~\ref{tab:cifar}). Across five seeds the four kernels agree to within about one standard deviation: on CIFAR-10 they are statistically tied at $\approx0.944$, and on CIFAR-100 the polynomial (alignment) kernel leads by roughly one standard deviation with the $\yat$-kernel close behind. The reading is modest but consistent with the coupling picture: the $\yat$-kernel is competitive in both regimes, never far from the best, because it carries whichever factor is informative, whereas a single-factor kernel would be exposed on the regime it is blind to; on these embeddings, though, one factor already suffices, so no kernel separates decisively. And this is a statement about the \emph{kernel}: at $d=512$ the random-feature $\yat$ needs more draws than a single-factor RFF to realize it, consistent with the $O(d^2)$ representation floor (Limitation~(v)).

\begin{table}[h]
\centering
\caption{Exact-kernel KRR test accuracy on frozen CLIP embeddings of CIFAR-10/100 ($10$k train / $2$k test, fixed $\lambda$), under centered (alignment-dominated) and bounded-ball (proximity-dominated) preprocessing. Means over $5$ random train/test subsamples; per-cell standard deviation $\le0.006$. On CIFAR-10 all four kernels agree within one standard deviation; on CIFAR-100 the polynomial leads by about one standard deviation. Best mean per row in bold (CIFAR-10 alignment is a four-way tie).}
\label{tab:cifar}
\begin{tabular}{llcccc}
\toprule
dataset & regime & Gaussian & IMQ & polynomial & $\yat$ \\
\midrule
CIFAR-10  & alignment & $0.944$ & $0.944$ & $0.944$ & $0.944$ \\
CIFAR-10  & proximity & $0.942$ & $0.942$ & $\mathbf{0.944}$ & $0.942$ \\
CIFAR-100 & alignment & $0.768$ & $0.766$ & $\mathbf{0.771}$ & $0.766$ \\
CIFAR-100 & proximity & $0.763$ & $0.760$ & $\mathbf{0.768}$ & $0.761$ \\
\bottomrule
\end{tabular}
\end{table}

\paragraph{Real coupled-target test: QM9 atomization energy.}
The coupled-target preference (Section~\ref{sec:exp_necessity}) is synthetic; QM9 \citep{ramakrishnan2014quantum} is the natural real-data test, because atomization energy is the home benchmark of kernel ridge regression in chemistry \citep{rupp2012fast} and is extensive: size and composition live in the \emph{norm} of a Coulomb-matrix descriptor, structure in its \emph{direction}. We run the tuned protocol of Table~\ref{tab:tuned} on all $133{,}885$ molecules (targets $U_0$ atomization in kcal/mol; train $6000$/val $1500$/test $2000$, $3$ resamples; $99$th-percentile norm clip then global scaling, so the data is an off-sphere ball) with two descriptors: the Coulomb-matrix eigenspectrum ($d=29$) and the sorted full Coulomb matrix ($d=435$), and with the two coupling controls of the necessity test: a \emph{direction-only} ablation (inputs projected to the sphere) and a \emph{norm-only} ablation ($1$-D input $\|x\|$), plus the composition-linear ``dressed-atom'' baseline (Table~\ref{tab:qm9}).

\begin{table}[h]
\centering
\caption{QM9 atomization energy, test MAE in kcal/mol (mean $\pm$ std over $3$ resamples; train $6000$ unless stated; per-kernel $\varepsilon$-multiplier/$b$/$\lambda$ tuned on validation). Joint = full descriptor; dir-only = sphere-projected (norms deleted). Norm-only ($1$-D $\|x\|$, Gaussian): $172.9{\pm}14.7$ (eig) / $161.2{\pm}1.6$ (full CM), against $\approx190$ for predicting the mean. Dressed atom (linear in element counts): $19.6$. Full-$N$ sketched-RAY primal trains on all $131{,}885$ rows ($m{=}128$; eig: $D{=}32$, $M{=}5056$, $12$\,s; full CM: $D{=}16$, $M{=}9024$, $35$\,s), where the exact modulation ($d_b\approx95{,}000$ at $d{=}435$) and the $N\times N$ Gram are both impossible.}
\label{tab:qm9}
\begin{tabular}{lcccc}
\toprule
& \multicolumn{2}{c}{eigenspectrum ($d{=}29$)} & \multicolumn{2}{c}{full CM ($d{=}435$)} \\
\cmidrule(lr){2-3}\cmidrule(lr){4-5}
kernel & joint & dir-only & joint & dir-only \\
\midrule
Gaussian   & $\mathbf{31.6}{\scriptstyle\pm.3}$ & $35.0{\scriptstyle\pm1.0}$ & $16.2{\scriptstyle\pm.4}$ & $16.6{\scriptstyle\pm.4}$ \\
IMQ        & $35.6{\scriptstyle\pm.8}$ & $37.3{\scriptstyle\pm.5}$ & $\mathbf{15.9}{\scriptstyle\pm.5}$ & $16.2{\scriptstyle\pm.4}$ \\
Mat\'ern-$\tfrac12$ & $45.7{\scriptstyle\pm1.2}$ & $45.3{\scriptstyle\pm.5}$ & $18.3{\scriptstyle\pm.3}$ & $18.5{\scriptstyle\pm.3}$ \\
polynomial & $32.8{\scriptstyle\pm.3}$ & $35.5{\scriptstyle\pm.6}$ & $17.0{\scriptstyle\pm.4}$ & $17.5{\scriptstyle\pm.5}$ \\
$\yat$ (exact) & $35.7{\scriptstyle\pm.7}$ & $37.3{\scriptstyle\pm.5}$ & $15.9{\scriptstyle\pm.5}$ & $16.3{\scriptstyle\pm.4}$ \\
RAY ($D{=}1000$) & $32.6{\scriptstyle\pm.4}$ & -- & $16.9{\scriptstyle\pm.3}$ & -- \\
\midrule
full-$N$ RAY primal ($131{,}885$ rows) & $26.8$ & -- & $\mathbf{14.5}$ & -- \\
\bottomrule
\end{tabular}
\end{table}

\noindent The verdict is honest in both directions. \emph{The coupling is real but modest:} deleting the norms costs every kernel a consistent $2$--$11\%$ (Gaussian $31.6\to35.0$ on the eigenspectrum, $16.2\to16.6$ on the full CM), while the norm alone is nearly uninformative ($161$--$173$ vs.\ $\approx190$ for the mean), so the target is direction-dominated with a small genuine norm contribution rather than strongly coupled. \emph{The kernel preference does not transfer:} the $\yat$-kernel is statistically tied with the best kernel on the full Coulomb matrix ($15.9{\pm}.5$ vs.\ IMQ $15.9{\pm}.5$, Gaussian $16.2{\pm}.4$) and behind the Gaussian on the eigenspectrum, so the synthetic coupled-target advantage of Table~\ref{tab:necessity} does not reappear here. Two findings are positive and specific to the construction. First, RAY \emph{beats its own exact kernel} on the eigenspectrum ($32.6$ vs.\ $35.7$): the random-feature low-rank truncation acts as a regularizer on a near-noiseless target, an effect the unbiasedness analysis permits but does not predict. Second, and the operative point, the full-dataset sketched primal dominates everything Gram-bound: at $d=435$ the exact modulation feature ($d_b\approx95{,}000$ per draw) cannot be built and the $1.7\times10^{10}$-entry Gram cannot be formed, yet the $M{=}9024$ sketched primal trains on all $131{,}885$ molecules in $35$\,s and reaches $14.5$ kcal/mol, beating the best exact kernel ($15.9$) that is confined to a $6000$-point Gram: more data through compressed features beats a better-resolved kernel on a subsample. Scripts \texttt{qm9\_build\_cache}, \texttt{qm9\_atomization}.

\paragraph{Importance sampling (Section~\ref{sec:variance_reduction}).} The proposal $t\sim\Exp(\varepsilon+\eta)$ is unbiased but has a finite-variance window.

\begin{proposition}[Exponential proposal for the radial factor]\label{prop:importance}
With $T\sim\Exp(\varepsilon+\eta)$ ($\eta\ge0$) and $\widehat h_\eta(r)=\frac{1}{\varepsilon+\eta}e^{-(r-\eta)T}$, we have $\mathbb{E}[\widehat h_\eta(r)]=(r+\varepsilon)^{-1}$, and $\mathbb{E}[\widehat h_\eta(r)^2]=\bigl((\varepsilon+\eta)(\varepsilon+2r-\eta)\bigr)^{-1}$ is finite iff $\eta<\varepsilon+2r$. A proposal safe for all pairs, including $r=0$, requires $\eta<\varepsilon$.
\end{proposition}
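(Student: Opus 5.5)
The plan is a direct one-dimensional computation with the exponential density. I treat unbiasedness, the second moment, and the safe-proposal claim in that order.

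\emph{Unbiasedness.} $T$ has density $(\varepsilon+\eta)e^{-(\varepsilon+\eta)t}$ on $t\ge0$. Multiplying by $\widehat h_\eta(r)=\frac{1}{\varepsilon+\eta}e^{-(r-\eta)T}$ cancels the normalizing constant, so
\[
\mathbb{E}[\widehat h_\eta(r)]=\int_0^\infty e^{-(\varepsilon+\eta)t}e^{-(r-\eta)t}\,dt=\int_0^\infty e^{-(\varepsilon+r)t}\,dt=\frac{1}{\varepsilon+r}.
\]
This integral converges because $\varepsilon>0$ and $r\ge0$. It is simply the Laplace representation~\eqref{eq:bernstein} rewritten as an importance-weighted expectation under the tilted proposal.

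\emph{Second moment.} The square contributes the factor $(\varepsilon+\eta)^{-2}e^{-2(r-\eta)T}$. Integrating against the density leaves one factor $(\varepsilon+\eta)^{-1}$ and the integrand $e^{-(\varepsilon+2r-\eta)t}$. This is finite exactly when $\varepsilon+2r-\eta>0$, and then it equals $\bigl((\varepsilon+\eta)(\varepsilon+2r-\eta)\bigr)^{-1}$. When $\varepsilon+2r-\eta\le0$ the integrand is bounded below by $1$ on $[0,\infty)$, so the integral diverges. This gives the ``iff'' in the statement.

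\emph{Safe proposal.} The pairwise squared distances range over $r\ge0$, and $r=0$ always occurs on the diagonal. The finiteness condition $\eta<\varepsilon+2r$ holds for every $r\ge0$ precisely when it holds at the infimum $r=0$, that is, when $\eta<\varepsilon$.

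There is no real obstacle here. The only care needed is to state the divergent case explicitly, since $\varepsilon+2r-\eta\le0$ makes the integrand non-decaying.
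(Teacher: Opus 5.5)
Your proposal is correct and follows essentially the same route as the paper: a direct integration against the $\Exp(\varepsilon+\eta)$ density, in which the normalizing constant cancels for the mean and one factor $(\varepsilon+\eta)^{-1}$ survives in the second moment, leaving $\int_0^\infty e^{-(\varepsilon+2r-\eta)t}\,dt$. Your explicit divergence argument for $\varepsilon+2r-\eta\le0$ and the observation that the constraint $\eta<\varepsilon+2r$ is tightest at $r=0$ make explicit two steps the paper leaves implicit.
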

\begin{proof}
$\mathbb{E}[\widehat h_\eta(r)]=\int_0^\infty\frac{1}{\varepsilon+\eta}e^{-(r-\eta)t}(\varepsilon+\eta)e^{-(\varepsilon+\eta)t}dt=\int_0^\infty e^{-(r+\varepsilon)t}dt=(r+\varepsilon)^{-1}$. Similarly $\mathbb{E}[\widehat h_\eta(r)^2]=\frac{1}{\varepsilon+\eta}\int_0^\infty e^{-(\varepsilon+2r-\eta)t}dt$, finite iff $\varepsilon+2r-\eta>0$, equal to $((\varepsilon+\eta)(\varepsilon+2r-\eta))^{-1}$.
\end{proof}

\noindent Empirically, on pairs at several squared distances ($x^\top w=0.5$, $b=1$, $\varepsilon=1$, $\eta<\varepsilon$, $D=200$, $3000$ repetitions) the estimator stays unbiased (bias $\le10^{-3}$) and the reduction is real but tuning-dependent: up to a $0.42\times$ variance ratio at $\|x-w\|^2=1$ ($\eta=0.5$) and $0.55\times$ at $\|x-w\|^2=0.25$ ($\eta=0.1$), but \emph{no} improvement for the nearest pair $\|x-w\|^2=0.05$ (where $\eta=0$ is best). Since the optimal $\eta$ is data-dependent, a single fixed proposal is a compromise.

\paragraph{Orthogonal features in $d>1$ (Section~\ref{sec:variance_reduction}).} At a fixed scale $t$ we estimate the Gaussian factor $g_t(x,w)$ with $D'=d$ inner frequencies, drawn either i.i.d.\ $\mathcal{N}(0,2tI_d)$ or as a scaled Haar-orthogonal block \citep{yu2016orthogonal} that preserves the marginal. Both are unbiased (empirical bias $\le6\times10^{-3}$), and the orthogonal block lowers the inner variance by a ratio of $0.72$ at $d=5$ and $0.76$ at $d=20$ ($4000$ repetitions, $x^\top w=0.3$). Unlike the $d=1$ case of Appendix~\ref{sec:exp_var}, where there is no direction to decorrelate, orthogonality helps once $d>1$.

\paragraph{Polynomial sketching (Section~\ref{sec:discussion}).} The TensorSketch that RAY uses (Proposition~\ref{prop:ts_variance}) sketches only the quadratic term $(x^\top w)^2$ and keeps $2b\,x^\top w+b^2$ exact. Here we probe the cruder variant that sketches the whole augmented feature $(x,\sqrt b)$, replacing the exact $d_b$-dimensional polynomial by a $D_{\mathrm{poly}}$-dimensional sketch (this also sketches the bias and linear terms, so we do not use it in the main experiments). Its error decays slowly, as $\sim D_{\mathrm{poly}}^{-1/2}$: at $d=10$ ($d_b=121$) a sketch of $D_{\mathrm{poly}}=128$ gives full-Gram error $0.175$ against the exact feature's $0.046$; at $d=100$ ($d_b=10{,}201$) a $20\times$ smaller sketch ($D_{\mathrm{poly}}=512$) gives $0.172$ against $0.060$. Sketching is thus a controllable memory--accuracy trade, useful only when $d^2$ is prohibitive and some accuracy can be sacrificed; at moderate $d$ the exact polynomial feature is both cheaper and more accurate. The $d^2$ cost is better reduced losslessly by the symmetric $d(d{+}1)/2$ reduction.

\paragraph{A non-$\yat$ Bernstein--Schur instance (Theorem~\ref{thm:bernstein_schur}).} To check that the construction is genuinely about the class and not the $\yat$-kernel alone, we run it unchanged on a different member: $k(x,w)=(x^\top w+b)^3(\|x-w\|^2+\varepsilon)^{-\alpha}$, a degree-3 polynomial modulation times the generalized IMQ ($\alpha=2$), whose Bernstein measure is $\Gamma(\alpha,\varepsilon)$ with shape $\alpha$ and rate $\varepsilon$. Keeping the cubic modulation exact and sampling $T\sim\Gamma(\alpha,\varepsilon)$, $\omega\sim\mathcal{N}(0,2TI_d)$, on off-sphere data ($\|x\|\in[0.3,1.2]$, $d=8$, $N=400$) the estimator is unbiased (mean of $200$ estimates at $D=50$ is within $0.023$ relative Frobenius error of the exact Gram) and converges at the Monte-Carlo rate (relative error $0.71\to0.078$ as $D$ runs $10\to1000$, fitted log-log slope $-0.48$). Only the modulation feature and the mixing law changed from the $\yat$-kernel; the recipe did not.

\paragraph{A kernel-grammar instance on real data (Theorem~\ref{thm:class_bernstein}).} The compositional-search product $\mathrm{LIN}^2\times\mathrm{RQ}$ \citep{duvenaud2013structure} is exercised end to end on California housing ($N=20{,}640$, $d=8$; standardized, $99$th-percentile norm clip so the ball is off-sphere, targets standardized). With $k(x,w)=(x^\top w+b)^2(1+\|x-w\|^2/\varepsilon')^{-\alpha}$, $b=1$, $\alpha=2$, $\varepsilon'=2\alpha\sigma^2$ set to the median squared distance ($0.225$), the class estimator (modulation exact, $T\sim\Gamma(\alpha,\varepsilon')$, trig RFF) gives: \emph{(a)} Gram fidelity at $N=2000$ following the Monte-Carlo rate (relative Frobenius error $0.31\to0.15\to0.089\to0.041$ over $D=50/200/800/3200$, fitted slope $-0.47$; $3$ seeds); \emph{(b)} downstream KRR at $N=3000$ matching the exact composite kernel already at $D=200$ (test RMSE $0.540\pm.007$ vs.\ exact $0.543$, $\lambda=10^{-2}$); \emph{(c)} full-scale primal ridge on all $N=19{,}640$ training rows with explicit features ($D=100$, $d_b=45$, $M=4500$) fitting in $1.2$\,s at test RMSE $0.480$, \emph{below} the exact-kernel reference because the reference is Gram-bound to a $3000$-point subsample while the primal uses all the data; that gap is the scaling argument in one number. Script \texttt{grammar\_kernel}.

\paragraph{Sensitivity to $b$ and $\varepsilon$.} We sweep the bias $b\in\{0,0.1,1,10\}$ and the radial scale $\varepsilon\in\{0.25,1,4\}\times$(median squared distance) and report the relative Frobenius Gram error of the flat estimator ($D=200$, $N=300$, $d=16$, $5$ seeds) for exact, normalized, and sketched modulation (the last is RAY) (Table~\ref{tab:sensitivity}). The exact and normalized estimators stay in a tight $0.06$--$0.25$ band across two orders of magnitude in $b$ and a $16\times$ range in $\varepsilon$: the method is insensitive to either, and $\varepsilon=$ median squared distance is a sound default. RAY adds the expected $O(m^{-1/2})$ sketch error ($m=128$), largest at small $b$ and small $\varepsilon$ where the quadratic term dominates the modulation.

\begin{table}[h]
\centering
\caption{Relative Frobenius Gram error vs.\ the bias $b$ and radial scale $\varepsilon$ (as a multiple of the median squared distance), flat estimator at $D=200$ ($N=300$, $d=16$, mean over $5$ seeds; per-cell standard deviation $\le0.023$ for the exact and normalized estimators, $\le0.046$ for RAY (sketch)). The exact and normalized estimators are stable across the grid; RAY adds an $O(m^{-1/2})$ sketch error, largest where the quadratic modulation dominates (small $b$, small $\varepsilon$).}
\label{tab:sensitivity}
\begin{tabular}{cccccccccc}
\toprule
& \multicolumn{3}{c}{$\varepsilon=0.25\times$} & \multicolumn{3}{c}{$\varepsilon=1\times$} & \multicolumn{3}{c}{$\varepsilon=4\times$} \\
\cmidrule(lr){2-4}\cmidrule(lr){5-7}\cmidrule(lr){8-10}
$b$ & exact & norm & RAY & exact & norm & RAY & exact & norm & RAY \\
\midrule
$0$    & $0.095$ & $0.114$ & $0.256$ & $0.081$ & $0.083$ & $0.384$ & $0.063$ & $0.062$ & $0.547$ \\
$0.1$  & $0.092$ & $0.107$ & $0.231$ & $0.077$ & $0.073$ & $0.337$ & $0.058$ & $0.057$ & $0.481$ \\
$1$    & $0.190$ & $0.165$ & $0.201$ & $0.102$ & $0.086$ & $0.123$ & $0.065$ & $0.061$ & $0.106$ \\
$10$   & $0.251$ & $0.235$ & $0.251$ & $0.111$ & $0.107$ & $0.111$ & $0.069$ & $0.067$ & $0.069$ \\
\bottomrule
\end{tabular}
\end{table}

\subsection{Kernel-value variance and variance reduction}
\label{sec:exp_var}

The estimator's variance falls as $O(1/D)$, and quasi-Monte Carlo over the scale lowers it by a constant factor. We estimate $k_{\yat,b}(x,w)$ for a fixed $d=1$ pair over $1000$ repetitions ($x=0.5$, $w=1$, $b=\varepsilon=1$, $D'=50$; the scalar exact polynomial factor isolates the radial Gaussian approximation), comparing the basic, QMC, orthogonal, and combined estimators in Table~\ref{tab:variance_reduction}.

\begin{table}[h]
\centering
\caption{Empirical variance of the biased-kernel estimator ($\times 10^{-3}$, $b=1$, $\varepsilon=1$, $1000$ repetitions).}
\label{tab:variance_reduction}
\begin{tabular}{lccc}
\toprule
Method & $D=10$ & $D=50$ & $D=100$ \\
\midrule
Basic RAY        & $17.54$ & $3.92$ & $2.00$ \\
QMC RAY          & $\mathbf{7.32}$ & $\mathbf{1.23}$ & $\mathbf{0.62}$ \\
Orthogonal RAY   & $19.69$ & $4.28$ & $2.21$ \\
QMC + orthogonal & $8.31$ & $1.40$ & $0.69$ \\
\bottomrule
\end{tabular}
\end{table}

Basic RAY halves its variance with each doubling of $D$ (a $4.5\times$ drop from $D{=}10$ to $50$), and QMC sampling of the radial scale cuts it a further $2.4$--$3.2\times$ at every $D$. Orthogonalizing the inner frequencies does nothing here (in one dimension there is no direction to decorrelate) but reduces the inner variance by $24$--$28\%$ once $d>1$ (Appendix~\ref{app:further}) and composes with QMC. The Theorem~\ref{thm:variance} bound holds but is loose, its worst-case Cauchy--Schwarz making the empirical constants far smaller (at $D{=}100$, bound $0.16$ vs.\ variance $0.002$).

One caveat on QMC: it buys a constant, not a faster rate. Fitting log-log RMSE against $D\in[10,2000]$ gives a Monte-Carlo slope of $-0.502$ and a QMC slope of $-0.514$, the same $O(1/\sqrt D)$ rate, with QMC a constant $\sim\!1.7\times$ lower. The inner Gaussian RFF noise, which is not quasi-randomized, sets the convergence, so quasi-randomizing only the one-dimensional scale cannot change it; the $O((\log D)^s/D)$ rate of Section~\ref{sec:variance_reduction} is the outer integral's alone.

\subsection{Outer/inner allocation: flat sampling is optimal}
\label{sec:exp_budget}

Flat sampling (one frequency per scale) is the best use of a fixed budget, confirming Proposition~\ref{prop:budget}. The estimator splits its randomness between $D$ outer scales and $D'$ inner frequencies at identical cost $O(N^2 DD')$; holding the budget $B=DD'=1000$ fixed and varying the split ($d=10$, $N=400$, sphere, $\varepsilon=b=1$, $4$ seeds) gives Table~\ref{tab:budget}.

\begin{table}[h]
\centering
\caption{Relative Frobenius error at a fixed budget $B=DD'=1000$ as the inner count $D'$ varies (so $D=1000/D'$). Smaller $D'$ is better; $D'=1$ (flat sampling) wins.}
\label{tab:budget}
\begin{tabular}{lccccc}
\toprule
$D'$ (with $D=1000/D'$) & $1$ & $2$ & $5$ & $10$ & $50$ \\
\midrule
rel.\ Frobenius error & $\mathbf{0.059}$ & $0.063$ & $0.066$ & $0.089$ & $0.175$ \\
\bottomrule
\end{tabular}
\end{table}

Error rises monotonically with $D'$: $D'=1$ is optimal here, $3\times$ better than $D'=50$. This is Proposition~\ref{prop:budget} in numbers: with $DD'$ fixed the inner term $V_{\mathrm{in}}/(DD')$ is constant and the outer term $V_{\mathrm{out}}/D$ falls as $1/D$ when $V_{\mathrm{out}}>0$, so the largest $D$ wins (if $V_{\mathrm{out}}=0$, all allocations tie). The two-level scheme is thus an artifact away from that degenerate case: one should draw $D$ independent $(t_j,\omega_j)$ pairs, the flat estimator~\eqref{eq:flat} that Section~\ref{sec:step-rff} identifies with IMQ random features. (Accordingly, the main Gram, dimension, and ridge-regression experiments all use the recommended flat $D'=1$; only the variance-reduction probe of Appendix~\ref{sec:exp_var}, which isolates the radial Gaussian factor, fixes $D'=50$.)

\subsection{Scalability: clearing the \texorpdfstring{$O(N^2)$}{O(N\^{}2)} wall}
\label{sec:exp_scaling}

Random $\yat$-features clear the $O(N^2)$ bottleneck the construction was built to clear. We time fitting ridge regression on random targets (cost, not accuracy) three ways (exact kernel ridge (form $K$, solve the $N\times N$ system); RAY primal ridge (features $Z\in\mathbb{R}^{N\times M}$ with $M=D\,d_b'$, the symmetric polynomial $d_b'=d(d{+}1)/2+d+1$ and flat $D'=1$, solve the $M\times M$ system); and Nystr\"om ($m$ landmarks)) reporting wall-clock and the dominant array's memory ($d=8$, $D=m=64$, $M=2880$; Figure~\ref{fig:scaling}).

\begin{figure}[h]
\centering
\includegraphics[width=\textwidth]{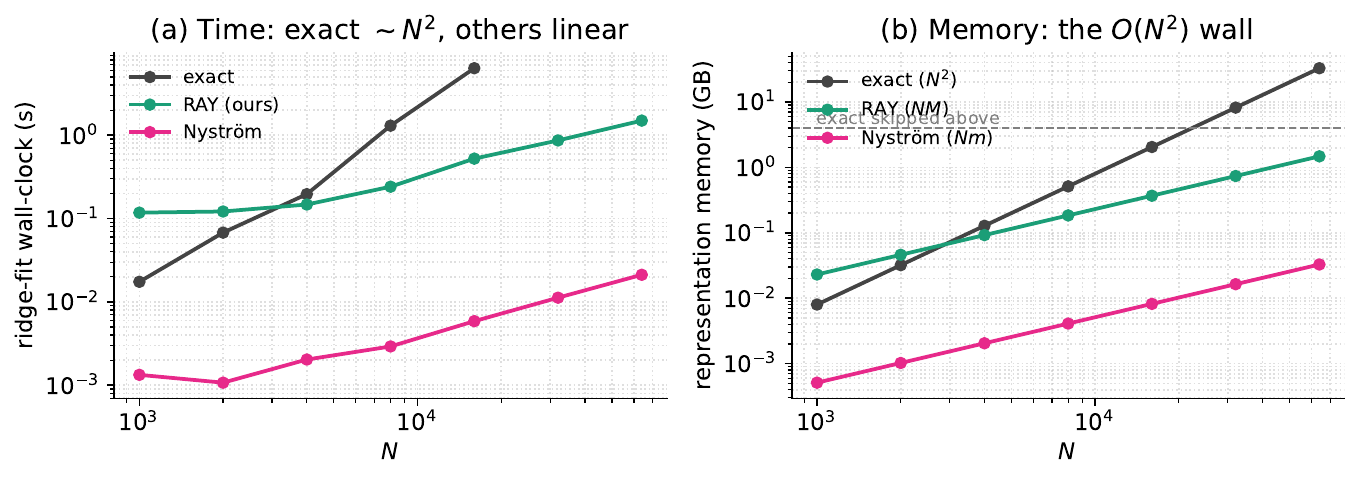}
\caption{Cost of fitting ridge regression vs.\ $N$ ($d=8$, $D=m=64$, log-log). \textbf{(a)} Wall-clock: exact ridge steepens at the predicted $\sim N^2$ rate (fitted exponent $2.1$) and is run only while feasible; RAY and Nystr\"om grow gently. \textbf{(b)} Representation memory: the exact $N\times N$ Gram reaches $33$\,GB by $N=64{,}000$ (above the dashed cap, where it no longer fits), while RAY ($NM$) and Nystr\"om ($Nm$) stay linear in $N$.}
\label{fig:scaling}
\end{figure}

Exact ridge scales as the predicted $O(N^2)$ (fitted time exponent $2.1$) and its Gram matrix reaches $33$\,GB at $N=64{,}000$, where it no longer fits in memory; RAY and Nystr\"om, whose representations are linear in $N$ ($NM$ and $Nm$), run there in $1.5$ and $0.02$ seconds. The honest caveat is the \emph{constant}, not the scaling: RAY's representation $M=D\,d_b'$ carries the degree-2 polynomial dimension, so it is heavier than Nystr\"om's $m$ ($2880$ vs $64$ here); the symmetric reduction (used) and sketching shrink $d_b'$, but a downstream method built on RAY should keep $D$ and the polynomial dimension modest.

\paragraph{Large-scale primal training on HIGGS (full sweep).} The streaming HIGGS demonstration of Section~\ref{sec:exp_higgs} compares, at matched representation dimension $M$, Gaussian RFF ($D{=}M$), exact modulation ($D{=}\lfloor M/d_b\rfloor$), and RAY ($m{=}128$, $D{=}\lfloor M/(m{+}d{+}1)\rfloor$); the off-sphere preprocessing standardizes then rescales by the $99.9$th norm percentile (varying norms in the bounded ball), $\varepsilon$ is the median squared distance, $b=1$, and the polynomial floor is $d_b=435$. Two facts come through (Table~\ref{tab:higgs}). First, the estimator trains at $11$M scale as an ordinary streaming primal model: peak memory is flat at $8.5$\,GB, identical to plain random features, because the Gram is never formed. Second, at matched representation budget compression matches or beats exact modulation everywhere and wins decisively at the top budget ($0.751{\scriptstyle\pm.002}$ vs.\ $0.740{\scriptstyle\pm.003}$ at $M{=}8192$): the exact polynomial starves the radial factor to $D{=}1$ at $M{=}512$, where its AUC swings $0.659$--$0.724$ across seeds ($\pm0.027$), an instability that is itself the argument for compression, while the sketch buys $3$--$52\times$ more draws and stable estimates ($\pm0.001$--$.009$). Gaussian RFF leads on AUC, which is expected, HIGGS is a smooth detector-feature classification with no strong alignment$\times$proximity coupling for the $\yat$-modulation to exploit. The claim is not ``RAY wins HIGGS'' but that compressed Bernstein--Schur features are a practical, memory-flat, million-scale estimator, and that compression, not exactness, is the right setting once $d_b$ is large.

\begin{table}[h]
\centering
\caption{Streaming primal training on HIGGS ($N_{\text{train}}=10.5$M, $N_{\text{test}}=500$k, $d=28$, $d_b=435$) at matched representation dimension $M$. Test AUC ($\uparrow$), mean $\pm$ std over $3$ feature seeds; $D$ = radial draws afforded at that budget; best feature method per $M$ in bold. Peak memory is flat at $8.5$\,GB across all methods (no Gram, no full feature matrix); one sweep runs in $\sim5$ minutes on one M5~Pro GPU. Linear baseline AUC $0.682$. The exact-modulation column at $M{=}512$ is a \emph{single} radial draw, whose $\pm0.027$ seed spread is itself the instability that compression removes.}
\label{tab:higgs}
\begin{tabular}{lcccccc}
\toprule
& \multicolumn{2}{c}{Gaussian RFF} & \multicolumn{2}{c}{exact modulation} & \multicolumn{2}{c}{RAY (sketch)} \\
\cmidrule(lr){2-3}\cmidrule(lr){4-5}\cmidrule(lr){6-7}
$M$ & $D$ & AUC & $D$ & AUC & $D$ & AUC \\
\midrule
$512$  & $512$  & $\mathbf{0.721}{\scriptstyle\pm.002}$ & $1$  & $0.693{\scriptstyle\pm.027}$ & $3$  & $0.697{\scriptstyle\pm.009}$ \\
$1024$ & $1024$ & $\mathbf{0.738}{\scriptstyle\pm.001}$ & $2$  & $0.714{\scriptstyle\pm.012}$ & $6$  & $0.718{\scriptstyle\pm.001}$ \\
$2048$ & $2048$ & $\mathbf{0.749}{\scriptstyle\pm.001}$ & $4$  & $0.730{\scriptstyle\pm.006}$ & $13$ & $0.731{\scriptstyle\pm.002}$ \\
$4096$ & $4096$ & $\mathbf{0.755}{\scriptstyle\pm.001}$ & $9$  & $0.737{\scriptstyle\pm.004}$ & $26$ & $0.742{\scriptstyle\pm.001}$ \\
$8192$ & $8192$ & $\mathbf{0.759}{\scriptstyle\pm.000}$ & $18$ & $0.740{\scriptstyle\pm.003}$ & $52$ & $0.751{\scriptstyle\pm.002}$ \\
\bottomrule
\end{tabular}
\end{table}

\noindent \emph{Protocol.} Streaming Adam (learning rate $0.02$, batch size $8192$, two passes over the $10.5$M training rows), single precision (fp32), three feature seeds (mean $\pm$ std in the table); features are rebuilt per mini-batch on the GPU and discarded. Reported wall-clock includes per-batch feature construction but excludes the one-time gzip load of the data; the $8.6$\,GB peak (measured by \texttt{getrusage}) is the resident process maximum, the same for all methods since none forms a Gram. The whole $\{512,\dots,8192\}\times\{$linear, Gaussian, exact modulation, RAY$\}$ sweep takes $\sim\!260$\,s on one Apple~M5~Pro.

\paragraph{Relative-spectral KRR stability for the sketched estimator.} The relative-spectral guarantee (Theorem~\ref{thm:krr_spectral}) also governs the deployed sketched RAY (Table~\ref{tab:krr_sketched}). The sketch term keeps $\rho$ above the exact-modulation value, so more regularization or budget is needed for the bound to activate, but once it does the coefficient error obeys it.

\begin{table}[h]
\centering
\caption{Relative-spectral KRR stability for the deployed (sketched) RAY (digits subset, $N{=}600$). $\rho=\|A^{-1/2}(K_{D,m}-K)A^{-1/2}\|_{\mathrm{op}}$ with $A=K+\lambda I$; the deterministic bound $\|\tilde\alpha-\hat\alpha\|_A\le\frac{\rho}{1-\rho}\|\hat\alpha\|_A$ activates once $\rho<1$. $\rho$ falls with the radial draws $D$, the sketch size $m$, and the ridge $\lambda$; the measured coefficient error stays within the bound throughout.}
\label{tab:krr_sketched}
\begin{tabular}{lccc}
\toprule
sweep & $\rho$ & rel.\ coef.\ err & bound $\rho/(1{-}\rho)$ \\
\midrule
\multicolumn{4}{l}{$\lambda$ sweep ($D{=}512,m{=}256$)} \\
\midrule
$\lambda{=}0.3$ & $0.757$ & $0.255$ & $3.12$ \\
$\lambda{=}1.0$ & $0.517$ & $0.192$ & $1.07$ \\
$\lambda{=}3.0$ & $0.330$ & $0.138$ & $0.49$ \\
$\lambda{=}10$  & $0.177$ & $0.088$ & $0.22$ \\
\midrule
\multicolumn{4}{l}{$D$ sweep ($m{=}256,\lambda{=}3$)} \\
\midrule
$D{=}64$   & $0.990$ & $0.259$ & $103$  \\
$D{=}256$  & $0.375$ & $0.158$ & $0.60$ \\
$D{=}1024$ & $0.292$ & $0.124$ & $0.41$ \\
\midrule
\multicolumn{4}{l}{$m$ sweep ($D{=}512,\lambda{=}3$)} \\
\midrule
$m{=}64$  & $0.821$ & $0.219$ & $4.58$ \\
$m{=}128$ & $0.518$ & $0.173$ & $1.07$ \\
$m{=}512$ & $0.248$ & $0.108$ & $0.33$ \\
\bottomrule
\end{tabular}
\end{table}

\paragraph{Bounded-input preprocessing.} The unbounded numerator of $k_{\yat,b}$ makes a norm bound mandatory (Limitation~(iii), Table~\ref{tab:prep}). Raw standardization (max norm $9.3$) inflates RMSE to $0.37$ (exact) / $0.96$ (RAY); any bounded scheme recovers $\le0.05$, confirming Limitation~(iii).

\begin{table}[h]
\centering
\caption{Bounded-input preprocessing (off-sphere coupled target, $d{=}16$, KRR test RMSE, mean over $3$ seeds). The unbounded numerator destabilizes both the exact kernel and RAY on raw data; any norm bound restores accuracy; clipping and max-norm scaling are best.}
\label{tab:prep}
\begin{tabular}{lccc}
\toprule
scheme & max $\|x\|$ & exact $\yat$ (RMSE) & RAY (RMSE) \\
\midrule
raw (standardize only) & $9.25$ & $0.373$ & $0.959$ \\
max-norm scaling       & $1.00$ & $0.029$ & $0.033$ \\
percentile clip (99)   & $1.00$ & $\mathbf{0.027}$ & $0.041$ \\
sphere normalization   & $1.00$ & $0.035$ & $0.047$ \\
normalized kernel      & $1.00$ & $0.041$ & -- \\
\bottomrule
\end{tabular}
\end{table}

\paragraph{Sphere-normalized matched-cost checks.} On the unit sphere $k_{\yat,b}$ coincides with a dot-product kernel, where direct zonal/dot-product routes are natural baselines; the sphere-normalized matched-dimension and fair-cost results below are therefore sanity checks, with the off-sphere versions (Tables~\ref{tab:dm}, \ref{tab:offsphere_faircost}) the primary evidence. The ordering is the same: at matched dimension RAY's TensorSketch recovers most of the efficiency the exact feature loses to its $O(d^2)$ size (Table~\ref{tab:ts}), and at matched representation the adaptive Nystr\"om variants are most accurate while RAY is the data-independent streaming option (Table~\ref{tab:faircost}).

\begin{table}[h]
\centering
\caption{Matched-dimension KRR on sphere-normalized \texttt{digits} ($d=64$, $b=1$, accuracy, mean $\pm$ std over $3$ splits). At a fixed explicit feature dimension $M$, exact modulation affords only $M/d_b$ radial draws; RAY ($m=128$) compresses the polynomial factor and affords many more, approaching the optimal rank-$M$ ceiling.}
\label{tab:ts}
\begin{tabular}{lccc}
\toprule
$M$ & exact modulation & RAY ($m{=}128$) & optimal rank-$M$ \\
\midrule
$2145$ & $0.928{\scriptstyle\pm.040}$ & $\mathbf{0.977}{\scriptstyle\pm.006}$ & $0.986{\scriptstyle\pm.007}$ \\
$4290$ & $0.924{\scriptstyle\pm.054}$ & $0.977{\scriptstyle\pm.006}$ & $0.986{\scriptstyle\pm.007}$ \\
$8580$ & $0.973{\scriptstyle\pm.004}$ & $0.979{\scriptstyle\pm.007}$ & $0.986{\scriptstyle\pm.007}$ \\
\bottomrule
\end{tabular}
\end{table}

\begin{table}[h]
\centering
\caption{Fair-cost comparison on sphere-normalized \texttt{digits} ($d=64$, $b=1$, accuracy mean $\pm$ std over $3$ splits) at matched representation dimension. Memory is $N\times\text{dim}$ floats; build is feature/landmark construction wall-clock.}
\label{tab:faircost}
\begin{tabular}{lcccc}
\toprule
Method & dim & accuracy & memory (MB) & build (s) \\
\midrule
exact modulation & $2145$ & $0.928{\scriptstyle\pm.040}$ & $20.6$ & $0.020$ \\
RAY ($m{=}128$) & $2123$ & $0.977{\scriptstyle\pm.006}$ & $20.3$ & $\mathbf{0.009}$ \\
k-means Nystr\"om & $1198$ & $\mathbf{0.986}{\scriptstyle\pm.007}$ & $\mathbf{11.5}$ & $1.18$ \\
rls-leverage Nystr\"om & $1198$ & $\mathbf{0.986}{\scriptstyle\pm.004}$ & $\mathbf{11.5}$ & $0.434$ \\
\midrule
Gaussian RFF (different kernel) & $2145$ & $0.984{\scriptstyle\pm.004}$ & $20.6$ & $0.024$ \\
\bottomrule
\end{tabular}
\end{table}

\section{Experimental Details}
\label{app:exp_details}

This appendix collects the protocol shared across experiments and the per-table configuration, so that every number in the paper can be reproduced. The accompanying code, figures, and a project page are available online.\footnote{\url{https://www.tahabouhsine.com/ray}}

\paragraph{Shared protocol.} Unless a table states otherwise, the radial scale $\varepsilon$ is set to the median squared pairwise distance of the inputs (computed on a subsample), the bias is $b=1$, and the deployed RAY map is the lower-variance quadratic-only TensorSketch (sketch size $m$, $D$ radial draws, flat $D'=1$): the degree-2 term is sketched to dimension $m$ and the linear/constant terms are kept exact, so the feature dimension is $D(m{+}d{+}1)$. Kernel ridge regression is solved exactly (in the primal (features) for RAY and RFF, and in the dual ($N\times N$) for the exact $\yat$-kernel and Nystr\"om) with the ridge $\lambda$ listed per table; ``exact modulation'' is Definition~\ref{def:ryf} (the $m\to\infty$ limit). Off-sphere data is a bounded ball with norms drawn over a stated interval (e.g.\ $\|x\|\in[0.3,1.5]$), the regime where $k_{\yat,b}$ is genuinely non-dot-product; sphere-normalized runs (Appendix tables) fix $R=1$ and are sanity checks only. Reported error bars are mean$\pm$std over the seed count in each caption.

\paragraph{Hardware and precision.} All Gram-error, KRR, operator-norm, and feature-build timing experiments run on CPU in double precision (fp64) and use no GPU. The sole GPU experiment is the million-scale HIGGS streaming run (Table~\ref{tab:higgs}), which runs in single precision (fp32) on one Apple~M5~Pro GPU; its full protocol (streaming Adam, batch $8192$, two passes over the training rows, peak resident memory) is stated inline with the table. Timing tables therefore compare like for like within a row; absolute wall-clock is hardware-specific and reported only to show scaling, not peak throughput.

\begin{table}[h]
\centering
\caption{Per-table configuration. ``Data'' gives the dataset and input dimension $d$; ``train/test'' the KRR split sizes (``Gram'' or ``op-norm'' marks experiments that approximate a matrix rather than fit a predictor, with no split; ``build'' marks feature-construction timing only). Grids list the swept hyperparameters; $\varepsilon$ is the median squared distance throughout. Each table's own caption restates its seed count and error bars.}
\label{tab:exp_details}
\small
\resizebox{\textwidth}{!}{%
\begin{tabular}{lllcl}
\toprule
Table / Fig.\ & Data ($d$) & train/test & seeds & Swept grid \\
\midrule
Fig.~\ref{fig:offsphere}, Tab.~\ref{tab:offsphere} & off-sphere ball ($d{\in}\{2,8,16,32\}$) & Gram, $N{=}1000$ & 5 & $D{\in}\{10,50,100,500,1000\}$, Nystr\"om $m{=}100$ \\
Tab.~\ref{tab:offsphere_faircost} & off-sphere ball ($d{=}64$) & $1500/1000$ & 3 & $M{=}2145$, $m_{\mathrm{TS}}{=}128$, $\lambda{=}10^{-2}$ \\
Tab.~\ref{tab:necessity} & off-sphere ($d{=}16$) & $1333/667$ & 5 & $D{=}4000$, $\lambda{=}10^{-2}$ \\
Tab.~\ref{tab:tuned} & coupled ($d{=}16$), digits ($d{=}64$) & $800/400$ (val $300$) & 3 & $b{\in}\{0,.5,1,2\}$, $\varepsilon_{\mathrm{mult}}{\in}\{.25,1,4\}$, $\lambda{\in}\{10^{-3},10^{-2},10^{-1}\}$ \\
Tab.~\ref{tab:coupled_matched} & off-sphere ($d{=}32$) & $1300/600$ & 3 & $M{=}4096$, $m{=}128$, $\lambda{=}10^{-2}$ \\
Tab.~\ref{tab:dm} & off-sphere ($d{\in}\{16,64,256\}$) & $400/200$ & 3 & $m{\in}\{16,\dots,512\}$, $M{\in}\{4096,8192,16384\}$ \\
Tab.~\ref{tab:runtime_d} & synthetic ($d{\in}\{8,\dots,1024\}$) & build, $N{=}1000$ & 1 & $D{=}8$, $m{=}128$ \\
Tab.~\ref{tab:krr_sketched} & digits ($d{=}64$) & op-norm, $N{=}600$ & 3 & $\lambda{\in}\{.1,.3,1,3,10\}$, $D{\in}\{64,256,512,1024\}$, $m{\in}\{64,128,256,512\}$ \\
Tab.~\ref{tab:prep} & off-sphere ($d{=}16$) & $800/400$ & 3 & 5 norm schemes; $D{=}24,m{=}128$, $\lambda{=}10^{-2}$ \\
Tab.~\ref{tab:higgs} & HIGGS ($d{=}28$) & $10.5\mathrm{M}/500\mathrm{k}$ & 3 & $M{\in}\{512,\dots,8192\}$, $m{=}128$ (MLX, fp32) \\
Tab.~\ref{tab:ts}, Tab.~\ref{tab:faircost} & sphere digits ($d{=}64$) & 3 splits & 3 & draws${\in}\{1,2,4\}$, sketch${\in}\{128,256\}$, $\lambda{=}10^{-2}$ \\
Fig.~\ref{fig:ts_opnorm} & synthetic ($d{=}16$) & op-norm, $N{=}300$ & 5 & $D{\in}\{10,30,100,300,1000\}$, $m{\in}\{64,128,256,512\}$ \\
Thm.~\ref{thm:krr_whitened}, Thm.~\ref{thm:class_bernstein} & synthetic ($d{=}20$) & whitened op-norm, $N{=}300$ & 6 & $\lambda{\in}\{10,1,.1,.01\}$, $D{\in}\{25,\dots,800\}$ \\
Fig.~\ref{fig:attention} & random Q/K/V ($d{=}32$) & op-fidelity, $N{\le}131072$ & 3 & $M$ up to $12416$, $m{=}64$ \\
Thm.~\ref{thm:krr_leverage} & synthetic ($d{=}20$) & whitened op-norm, $N{=}300$ & 4 & $\lambda{\in}\{10,1,.1,.01\}$, $D{\in}\{25,\dots,3200\}$, pool $5{\times}10^4$ \\
Prop.~\ref{prop:pos_dichotomy} & off-sphere ($d{=}16$) & Gram + pointwise, $N{=}400$ & 10 & $\varepsilon_{\mathrm{mult}}{\in}\{.25,1,4\}$, $D{\in}\{128,512,2048\}$, reps $\le10^6$ \\
Rmk.~\ref{rmk:complex} & off-sphere ($d{=}16$) & op-norm, $N{=}300$ & 10 & $m{\in}\{64,128,256,512\}$, real vs.\ complex signs \\
Thm.~\ref{thm:class_bernstein} (grammar) & california ($d{=}8$) & $3000/1000$ + full primal & 3 & $D{\in}\{50,\dots,4000\}$, $\alpha{=}2$, $\lambda{=}10^{-2}$ \\
Tab.~\ref{tab:qm9} & QM9 CM eig ($d{=}29$) / full ($d{=}435$) & $6000/1500/2000$ + full primal & 3 & $\varepsilon_{\mathrm{mult}}{\in}\{.25,1,4\}$, $b{\in}\{0,.25,1,4\}$, $\lambda_{\mathrm{rel}}{\in}\{10^{-9},\dots,10^{-3}\}$ \\
\bottomrule
\end{tabular}}
\end{table}

\end{document}